\theoremstyle{plain}
\newtheorem{theorem}{Theorem}[section]
\newtheorem{proposition}[theorem]{Proposition}
\newtheorem{corollary}[theorem]{Corollary}
\theoremstyle{definition}
\theoremstyle{remark}
\newcommand{\red}[1]{{\leavevmode\color{black}#1}}
\icmltitlerunning{Weakly-Supervised Residual Evidential Learning for Multi-Instance Uncertainty Estimation}
\begin{document}

\twocolumn[
\icmltitle{Weakly-Supervised Residual Evidential Learning for \\ Multi-Instance Uncertainty Estimation}



\icmlsetsymbol{equal}{*}

\begin{icmlauthorlist}
\icmlauthor{Pei Liu}{uestc}
\icmlauthor{Luping Ji}{uestc}
\end{icmlauthorlist}

\icmlaffiliation{uestc}{School of Computer Science and Engineering, University of Electronic Science and Technology of China, Chengdu, China}

\icmlcorrespondingauthor{Luping Ji}{jiluping@uestc.edu.cn}

\icmlkeywords{Multiple Instance Learning, MIL, Uncertainty Estimation, Evidential Deep Learning, Weakly-Supervised Learning}

\vskip 0.3in
]



\printAffiliationsAndNotice{}  

\begin{abstract}
Uncertainty estimation (UE), as an effective means to quantify predictive uncertainty, is crucial for safe and reliable decision-making, especially in high-risk scenarios.
Existing UE schemes usually assume that there are completely-labeled samples to support fully-supervised learning.
In practice, however, many UE tasks often have no sufficiently-labeled data to use, such as the Multiple Instance Learning (MIL) with only weak instance annotations.
To bridge this gap, this paper, for the first time, addresses the weakly-supervised issue of \emph{Multi-Instance UE} (MIUE) and proposes a new baseline scheme, \emph{Multi-Instance Residual Evidential Learning} (MIREL). 
Particularly, at the fine-grained instance UE with only weak supervision, we derive a multi-instance residual operator through the Fundamental Theorem of Symmetric Functions. On this operator derivation, we further propose MIREL to jointly model the high-order predictive distribution at bag and instance levels for MIUE. 
Extensive experiments empirically demonstrate that our MIREL not only could often make existing MIL networks perform better in MIUE, but also could surpass representative UE methods by large margins, especially in instance-level UE tasks. Our source code is available at \href{https://github.com/liupei101/MIREL}{https://github.com/liupei101/MIREL}.
\end{abstract}

\section{Introduction}
\label{intro}
Deep learning models have shown impressive capability and become ubiquitous in the last decade. 
However, they often tend to produce overconfident predictions, even for shifted or unseen samples \cite{Nguyen7298640,kendall2017uncertainties}. Such behaviour may lead to disastrous consequences in safety-critical scenarios, \textit{e.g.}, autonomous driving and medical diagnosis \cite{franchi2022muad,linmans2023predictive}, calling into question their real-world usability.  
Thus, it is particularly important to provide an accurate confidence level for the prediction of neural networks (NNs) through uncertainty estimation (UE) methods. 

To accomplish accurate UE, epistemic (model) uncertainty is considered generally \cite{pmlr-v162-postels22a,mukhoti2023deep}. From a principled Bayesian perspective \cite{neal2012bayesian}, it is characterized by the distribution over model parameters given training data $\mathcal{D}$, \textit{i.e.}, $p(\bm{\omega}|\mathcal{D})$. Due to its involvement, a model parameter $\bm{\omega}$ would be low in likelihood when it is incompatible with $\mathcal{D}$; as a result, less confidence would be yielded for the new samples shifted in distribution, thus alleviating overconfidence in prediction. 
However, most current practices often assume that there are \textit{completely-labeled} samples in $\mathcal{D}$ with which NNs can be trained to support $p(\bm{\omega}|\mathcal{D})$ \cite{mena2021survey}. 

In fact, there are many practical tasks involving \textit{weakly-annotated} data, in which no complete label can be directly utilized for training. 
While such tasks remain under-explored in UE, a fundamental machine learning problem we consider under this class is multiple instance learning (MIL) \cite{dietterich1997solving}. As a typical task of weakly-supervised learning, it is prominent in many labeling-intensive applications, \textit{e.g.}, histopathology diagnosis \cite{ilse2020deep,liu2024advmil}, \red{video anomaly detection \cite{sultani2018real,zhong2019graph} and} video analysis \cite{babenko2010robust,Rizve_2023_CVPR}, etc. \red{
Specifically, in histopathology diagnosis an image usually contains gigapixels, so it is often divided into thousands of small patches for MIL, where multiple patches (instances) are observed but only a general statement of their labels is given. In this case, a diagnosis model has to learn from weakly-annotated patches to make patch-level predictions. UE is highly anticipated to provide accurate uncertainty measures for these weakly-supervised predictions to make final diagnostic decisions safer and more reliable.}

Generally, a sample given in MIL is described as a bag $X$ and its label is known, $Y\in \{0,1\}$.
In particular, $X$ is composed of multiple instances, \textit{i.e.}, $X = \{\mathbf{x}_1, \dots , \mathbf{x}_{K}\}$, but instance labels $\{y_1, \dots , y_{K}\}$ are \textit{unknown} and instead connect with bag label via a classical MIL assumption, $Y=\mathop{\max}_k \{y_k\}$ and $k\in[1,K]$. Under these special settings, MIL is usually interested in two tasks, bag-level and instance-level prediction \cite{kandemir2015computer}. This means that, a MIL model needs to \textbf{i}) learn $p(\bm{\omega}|\mathcal{D})$ from the multi-instance bags with \textit{variable} sizes and meanwhile \textbf{ii}) jointly estimate a new \textit{weakly-supervised posterior} $p(\bm{\theta_{\mathrm{w}}}|\mathcal{D})$ from weakly-annotated instances. 
Therefore, in such MIL tasks, the common way of capturing epistemic uncertainty for UE seems less practical.
This motivates us to focus on the problem of Multi-Instance UE (termed \textbf{MIUE}) and study a baseline approach for it. 

\textbf{Bag-level}~The Fundamental Theorem of Symmetric Functions \cite{zaheer2017deep} provides a general strategy to score a bag of instances. It deals with size-varied bags by a permutation-invariant MIL pooling operator \cite{ilse2018attention}. Accordingly, $p(\bm{\omega}|\mathcal{D})$ could be estimated from fully-labeled bags using common UE techniques to capture bag-level epistemic uncertainty, like that in fully-supervised learning \cite{mena2021survey}.

\textbf{Instance-level}~Without complete instance labels, modeling predictive uncertainty at instance level would not be as straightforward as that at bag level. Nonetheless, attention-based MIL \cite{ilse2018attention,li2021dual} still could generate instance predictions with their instance scoring proxy---\textit{attention branch}. Following this approach, various strategies \cite{shi2020loss,qu2022bi,yufei2022bayes} are proposed to make instance prediction more accurate. 
We argue that such attention-dependent means may not be suitable for learning $p(\bm{\theta_{\mathrm{w}}}|\mathcal{D})$ jointly with $p(\bm{\omega}|\mathcal{D})$.
Because i) the parameter $\bm{\theta_{\mathrm{w}}}$ given by attention branch for instances is completely contained within the parameter $\bm{\omega}$ for bags, and notably ii) the attention scores given by that proxy are produced essentially for learning better bag representations, leaving a substantial gap to ideal instance predictions.

In this paper, with the Fundamental Theorem of Symmetric Functions, we demonstrate that the gap to ideal instance prediction can be narrowed by turning to exploit a good bag-level decision space. 
With this basic finding, we propose a new MIL scheme for MIUE. 
Concretely, (1) we devise a new instance estimator to jointly learn $p(\bm{\theta_{\mathrm{w}}}|\mathcal{D})$ by deriving a \textit{multi-instance residual operator}. This operator makes instance prediction separated from bag decision. 
(2) Further, we model high-order probability distribution at bag and instance levels to fulfill MIUE, by parameterizing two Dirichlet distributions with the evidences provided by general bag estimator and our residual instance estimator. 
(3) To optimize $\bm{\theta_{\mathrm{w}}}$ without complete instance labels, we propose a weakly-supervised evidence learning strategy and prove that it provides a tighter upper bound for ideal instance loss function than common strategies under given conditions. 

The main contributions of this paper are summarized as follows: (1) A new problem of uncertainty estimation, termed MIUE (\emph{Multi-Instance UE}), is introduced in this paper. To our knowledge, we are the first to study it in MIL. (2) With the Fundamental Theorem of Symmetric Functions, this paper demonstrates that a good estimator for instances can be directly deduced from a good bag-level decision space, no longer relying on the scoring proxy from attention-based MIL. (3) This paper further derives a residual estimator specially for instances, and proposes a new scheme, \emph{Multi-Instance Residual Evidential Learning} (MIREL), for MIUE. This scheme can jointly quantify the predictive uncertainty at bag and instance levels in MIL. 

\section{Preliminary}

\subsection{Multiple Instance Learning (MIL)}
\label{sec-mil}

\textbf{Definition}~Here we give the formal conventions and notations in MIL, following \citeauthor{ilse2018attention}~\citeyearpar{ilse2018attention}. A given sample (bag) is denoted as $X = \{\mathbf{x}_1, \dots , \mathbf{x}_{K}\}$, where $\mathbf{x}_1, \dots , \mathbf{x}_{K}$ are usually treated to be \textit{i.i.d.} 
Its label, $Y\in \{0, 1\}$, is accessible for training; its instance-level labels, $\{y_1, \dots , y_{K}\}$, are \textit{unknown} and $y_k\in \{0, 1\}$ for $k\in[1,K]$. A classical MIL assumption states that, a bag is positive ($Y=1$) \textit{iff} it has at least one positive instance; otherwise, it is negative ($Y=0$). Namely, there is $Y=\mathop{\max}_k \{y_k\}$. 

\textbf{Learning paradigm}~To learn from size-varied bags, a common practice is to leverage a \textit{permutation-invariant} pooling operator. It can be expressed by the Fundamental Theorem of Symmetric Functions as follows:

\begin{theorem}[\cite{zaheer2017deep,ilse2018attention}]
\label{thm:symfunc}
A scoring function for a set of instances $X=\{\mathbf{x}_1, \dots , \mathbf{x}_{K}\}$, $S(X)\in \mathbb{R}$, is a symmetric function (permutation-invariant to the elements in $X$), if and only if it can be written as 
\begin{equation}
S(X) = g\big(\sum_{k=1}^{K} f(\mathbf{x}_{k})\big),
\label{eq1}
\end{equation}
where $f$ and $g$ are suitable transformations.
\end{theorem}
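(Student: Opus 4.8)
The plan is to establish the two implications separately. The "if" direction is immediate: if $S(X) = g\bigl(\sum_{k=1}^{K} f(\mathbf{x}_k)\bigr)$, then because finite summation is commutative, reordering $\mathbf{x}_1,\dots,\mathbf{x}_K$ leaves $\sum_k f(\mathbf{x}_k)$ unchanged, hence leaves $S(X)$ unchanged, so $S$ is symmetric. All the content lies in the converse, where the strategy is constructive: exhibit an $f$ whose summation $z := \sum_{k=1}^{K} f(\mathbf{x}_k)$ \emph{uniquely encodes} the multiset $\{\mathbf{x}_1,\dots,\mathbf{x}_K\}$, and then define $g$ on the range of this sum by "decode the multiset, then apply $S$", extending $g$ arbitrarily outside that range. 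Since $S$ is assumed symmetric, $S(X)$ depends only on the multiset, so this $g$ is well defined and reproduces $S$; composing gives the claimed form.

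It remains to build such an injective sum-encoding. First I would reduce to a scalar instance space: pick any injection $\iota$ from the instance domain into $\mathbb{R}$ (available whenever the domain has cardinality at most that of $\mathbb{R}$, which covers the cases of interest, e.g.\ a countable instance set or $\mathbb{R}^d$), and work with $\iota(\mathbf{x}_k)$ in place of $\mathbf{x}_k$. For a fixed bag size $K$, take $f(x) = \bigl(\iota(x),\,\iota(x)^2,\,\dots,\,\iota(x)^K\bigr)\in\mathbb{R}^K$; then $z = \sum_k f(\mathbf{x}_k) = (p_1,\dots,p_K)$ collects the first $K$ power sums of the scalars $\iota(\mathbf{x}_1),\dots,\iota(\mathbf{x}_K)$. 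By Newton's identities these power sums determine the elementary symmetric polynomials $e_1,\dots,e_K$, hence the coefficients of the monic polynomial $\prod_k\bigl(t-\iota(\mathbf{x}_k)\bigr)$, hence (by unique factorization into linear factors) its multiset of roots $\{\iota(\mathbf{x}_k)\}$, and finally, since $\iota$ is injective, the original multiset $\{\mathbf{x}_k\}$. To accommodate \emph{variable} bag sizes up to a bound $M$, which is the setting MIL actually needs, I would simply prepend a constant coordinate, $f(x) = \bigl(1,\,\iota(x),\,\dots,\,\iota(x)^{M}\bigr)$, so the first component of $z$ reveals $K$ and the remaining $M$ components still expose $p_1,\dots,p_K$; the same decoding then goes through.

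The main obstacle, and the place where the statement has to be read carefully, is what regularity is being claimed for $f$ and $g$. The construction above produces only \emph{some} functions: the injection $\iota$ and the decoding map $g$ are highly irregular, and $g$ is defined arbitrarily off the range of the sum. This is exactly the "suitable transformations" existence statement the theorem asserts, and all that the downstream use in this paper requires. If instead one wanted $f,g$ continuous, or a latent dimension independent of $M$, the argument breaks and extra hypotheses are needed; I would note that this is outside the present scope. A secondary point to handle cleanly is the fully general domain: if the instance space is too large to inject into $\mathbb{R}$, one replaces the power-sum encoding by a direct sum of indicator-like atoms into a sufficiently large real vector (or function) space, but since every MIL instantiation admits the scalar reduction above, I would present that as the main line and mention the generalization only in passing.
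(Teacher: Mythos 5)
The paper does not actually prove this statement: Theorem~\ref{thm:symfunc} is imported verbatim from \cite{zaheer2017deep} and \cite{ilse2018attention} and used as a black box, so there is no in-paper proof to compare against. Judged on its own terms, your argument is correct and is essentially the standard Deep Sets proof from the cited source: the ``if'' direction is the commutativity of finite sums, and the ``only if'' direction is the construction of a sum-injective encoder via the moment map $x \mapsto (\iota(x), \iota(x)^2, \dots)$, with Newton's identities converting power sums to elementary symmetric polynomials and unique factorization recovering the multiset of roots, after which $g$ is defined as ``decode, then apply $S$'' (well defined precisely because $S$ is symmetric). Your handling of variable bag sizes by prepending a constant coordinate to expose $K$ is a clean way to cover the MIL setting, and you correctly identify the one place where care is genuinely required: the result as stated is an existence claim for arbitrary, possibly wildly discontinuous $f$ and $g$, and it is known (e.g.\ from the later literature on limitations of sum-decompositions) that demanding continuity or a latent dimension independent of the maximal bag size requires additional hypotheses. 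Since the paper only ever uses the theorem to justify the structural form $S = g(\sum_k f(\mathbf{x}_k))$ of MIL pooling architectures, your scope restriction is exactly right, and I see no gap in the argument at the level of generality the statement claims.
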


This theorem holds as before or under weak conditions, when the form of instance pooling in Eq.(\ref{eq1}), $\sum_{k} f(\mathbf{x}_{k})$, is replaced by others, such as i) $\mathrm{mean}$, ii) $\mathrm{max}$ \cite{qi2017pointnet}, and iii) attention-based MIL pooling. 

\textbf{Attention-based MIL pooling}~The most representative one is ABMIL \cite{ilse2018attention}. It first proposes to leverage dynamic instance weights for pooling, written as $\sum_{k=1}^{K} a_{k}f(\mathbf{x}_k)$, where $a_k$ is called attention score and 
\begin{equation}
a_k = \mathop{\mathrm{softmax}}\big(t(\mathbf{h}_k)\big) = \frac{\mathop{\exp}\big(t(\mathbf{h}_k)\big)}{\sum_{\tau=1}^{K}\mathop{\exp}\big(t(\mathbf{h}_\tau)\big)}.
\label{eq2}
\end{equation}
$\mathbf{h}_k$ stands for the instance embedding given by $\mathbf{h}_k=f(\mathbf{x}_k)$ and $t(\cdot)$ is a transformation parameterized by NNs. 

Owing to attention mechanism, most attention-based MIL networks can provide a \textit{proxy} (\textit{i.e.}, attention score $a_k$) to estimate instance labels, as highlighted in \citeauthor{ilse2018attention}~\citeyearpar{ilse2018attention} and \citeauthor{li2021dual}~\citeyearpar{li2021dual}. This proxy is frequently used and improved afterwards \cite{qu2022bi,yufei2022bayes}, often taken as a reliable estimator for instances. 

\subsection{Evidential Deep Learning (EDL)}

As one of general UE methods, recently-proposed EDL \cite{sensoy2018evidential} models predictive uncertainty using the Dempster–Shafer Theory of Evidence (DST) \cite{dempster1968generalization}. It formalizes the belief assignment in DST with Subjective Logic (SL) \cite{josang2016subjective}. 

\textbf{Belief assignment}~Considering $C$ ($C\geq 2$) mutually exclusive singletons (\textit{e.g.}, class labels), SL assigns a belief mass $b_i$ to the $i$-th singleton for $i\in[1, C]$ and defines an overall uncertain mass $u$. Let $b_i=\frac{e_i}{\sum_{i=1}^{C} (e_i + 1)}\geq 0$ and $ u=\frac{C}{\sum_{i=1}^{C} (e_i + 1)}\geq 0$,
where $e_i$ is the evidence of the $i$-th singleton. There is $u + \sum_{i=1}^{C}b_i = 1$, \textit{i.e.}, a weaker belief over singletons indicates a higher overall uncertainty. 

\textbf{Posterior Dirichlet  distribution}~SL further formalizes the belief assignment stated above as a Dirichlet distribution, offering the potential approach to modeling predictive probability. Specifically, let $Dir(\bm{p}|\bm{\alpha})$ denote a Dirichlet distribution, where $\bm{p}\in \mathcal{S}^{C-1}$ (a probability simplex with $C-1$ dimensions), $\bm{\alpha}=[\alpha_1,\cdots,\alpha_C]$, and $\alpha_i\geq 0$. 
By definition, there are $Dir(\bm{p}|\bm{\alpha})=\frac{1}{B(\bm{\alpha})}\prod_{i=1}^{C}p_i^{\alpha_i-1}$ and $B(\bm{\alpha})=\frac{\prod_{i=1}^{C}\Gamma(\alpha_i)}{\Gamma(\alpha_0)}$, where $B(\bm{\alpha})$ is a multinomial Beta function, $\Gamma(\cdot)$ is a \textit{gamma} function, and $\alpha_0=\sum_{i=1}^C \alpha_i$ often called the precision or Dirichlet strength. 
Therefore, predictive probability can be expressed by a posterior Dirichlet distribution $Dir(\bm{p}|\bm{\alpha})$, by deriving evidences $\bm{e}$ for $C$ singletons and then adopting $\bm{\alpha} = \bm{e} + 1$ to parameterize $Dir(\bm{p}|\bm{\alpha})$, where $\bm{e}=[e_1,\cdots,e_C]$.

\textbf{Evidential learning}~For any sample $X$, EDL employs a NN-based transformation $\Phi$ to derive evidences, \textit{i.e.}, $\bm{e}=\Phi(X)$, forming a belief assignment over $C$ classes. 
This assignment is then formalized as $Dir(\bm{p}|\bm{\alpha})$ with $\bm{\alpha}=\Phi(X)+1$, to obtain the prediction of $X$.  
Compared with the standard neural classifiers that predict a Categorical distribution over classes, EDL predicts \textit{a distribution over Categorical distribution}, the conjugate prior of Categorical distribution, thus modeling second-order probability distribution for UE \cite{josang2016subjective,sensoy2018evidential}. 

By using NNs to parameterize $Dir(\bm{p}|\bm{\alpha})$, EDL provides a simple yet efficient \textit{deterministic} method for UE. It can distinguish different uncertainties originated from data, model, or distribution \cite{ulmer2023prior}. Moreover, \citeauthor{pmlr-v202-deng23b} \citeyearpar{pmlr-v202-deng23b} show that EDL can be cast as learning PAC-Bayesian generalization bounds. 
These favorable advantages motivate us to study a baseline approach for MIUE through EDL. 

\section{Problem Formulation: Multi-Instance Uncertainty Estimation (MIUE)}

First of all, we formalize MIUE from the perspective of Bayesian  \cite{neal2012bayesian}, as it offers a principled way to study predictive uncertainty and is widely adopted in UE. 

\textbf{Bag-level}~Given a \textit{bag dataset} $\mathcal{D}=\{(X_j, Y_j)\}_{j=1}^{N}$ and the parameter $\bm{\omega}$ of any MIL model, the prediction of a bag $X^{*}$ can be written as a posterior distribution:
\begin{equation}
P(Y^{*}|X^{*},\mathcal{D})=\int P(Y^{*}|X^{*},\bm{\omega}) p(\bm{\omega}|\mathcal{D}) d \bm{\omega}.
\label{eq5}
\end{equation}
This predictive form reflects that the uncertainty in bag prediction results from data (aleatoric) and model (epistemic) uncertainty \cite{der2009aleatory,kendall2017uncertainties}, captured by $P(Y^{*}|X^{*},\bm{\omega})$ and $p(\bm{\omega}|\mathcal{D})$, respectively. $\bm{\omega}$ parameterizes the mapping from $X$ to $Y$. 

\textbf{Instance-level}~For any $(X_j, Y_j)\in \mathcal{D}$, $X_j=\{\mathbf{x}_{jk}\}_{k=1}^{K_j}$,  where $K_j$ is the instance number of $X_j$. Following the assumption of MIL, there is $Y_j=\mathop{\max}\{y_{jk}\}_{k=1}^{K_j}$ and $y_{jk}$ is \textit{unknown}. 
The prediction of an instance $\mathbf{x}^{*}$ is expressed as  
\begin{equation}
P(y^{*}|\mathbf{x}^{*},\mathcal{D})=\int P(y^{*}|\mathbf{x}^{*},\bm{\theta_{\mathrm{w}}}) p(\bm{\theta_{\mathrm{w}}}|\mathcal{D}) d \bm{\theta_{\mathrm{w}}},
\label{eq6}
\end{equation}
where $\bm{\theta_{\mathrm{w}}}$ is the parameter estimated from \textit{weakly-annotated} instances to represent the mapping from $\mathbf{x}$ to $y$. $\bm{\theta_{\mathrm{w}}}$ and  $\bm{\omega}$ could share some parameters in their joint learning on $\mathcal{D}$. 

\textbf{Uncertainty measures}~To quantify the uncertainty in prediction, various measures could be adopted \cite{malinin2018predictive}. \textit{Max probability} and the \textit{entropy} of expected predictive distribution are the most frequently used two for measuring total uncertainty. Moreover, \textit{expected entropy} and \textit{mutual information} (MI) are often adopted as the measures to capture data uncertainty and model uncertainty, respectively. More details are elaborated in Appendix \ref{apx-unc-measure}. 

Traditional non-Bayesian MIL networks usually treat $\bm{\omega}$ and $\bm{\theta_{\mathrm{w}}}$ deterministically, ignoring the model uncertainty in bag and instance predictions. This could lead to failures on out-of-distribution samples \cite{blanchard2011generalizing}. By contrast, Bayesian frameworks account for both data and model uncertainties, allowing us to quantify multi-instance uncertainty more accurately and holistically. 

\section{Method}

\subsection{Bag-level Predictive Uncertainty}

Bag-level predictive uncertainty can be quantified via Eq.(\ref{eq5}). However, integrating over a high-dimensional space of $\bm{\omega}$ is often \textit{intractable}. To tackle this, we propose to model bag-level predictive probability with a posterior Dirichlet distribution, inspired by EDL. 

\textbf{Evidential learning}~Concretely, for a bag $X=\{\mathbf{x}_k\}_{k=1}^{K}$, we express its predictive probability as follows:
\begin{equation}
\begin{aligned}
p(\bm{\mu}|X) = &\ Dir(\bm{\mu}|\bm{\alpha}_{\text{bag}}),\\ 
\bm{\alpha}_{\text{bag}} = \bm{e}_{\text{bag}} + 1 = &\ \mathcal{A}(S(X)) + 1\\
= &\ \mathcal{A}\Big( g_{\phi}\big(\sum_{k} f_{\psi}(\mathbf{x}_k)\big)\Big) + 1,
\end{aligned}
\label{eq7}
\end{equation}
where $\bm{\alpha}_{\text{bag}}$ is the concentration parameter of bag-level Dirichlet distribution, $\bm{e}_{\text{bag}}$ is the bag evidence collected from $X$, $\mathcal{A}(\cdot)$ is an activation function for non-negative evidence outputs, and the transformations $f$ and $g$ are parameterized by the NNs with parameters $\psi$ and $\phi$, respectively. 
To optimize parameters, we adopt a Fisher Information-based objective function \cite{pmlr-v202-deng23b}, proven to be well-suited for EDL. It is denoted by $\mathcal{L}_{\mathcal{I}\text{-EDL}}$ as follows:
\begin{equation}
\mathop{\min}_{\psi,\phi} \mathbb{E}_{(X,Y)\sim \mathcal{P}} \mathbb{E}_{\bm{\mu}\sim Dir(\bm{\alpha}_{\text{bag}})}\big[-\mathop{\log}p(Y|\bm{\mu},\bm{\alpha}_{\text{bag}},\sigma^2)\big],
\label{eq-loss-bag}
\end{equation}
where $p(Y|\bm{\mu},\bm{\alpha}_{\text{bag}},\sigma^2)$ is assumed to be a multivariate Gaussian distribution $\mathcal{N}\big(Y|\bm{\mu},\sigma^2\mathcal{I}(\bm{\alpha}_{\text{bag}})^{-1}\big)$ and $\mathcal{I}(\bm{\alpha}_{\text{bag}})$ is the Fisher Information Matrix of $Dir(\bm{\alpha}_{\text{bag}})$. We give its details in Appendix \ref{apx-loss-func} for completeness.

\textbf{Justification}~\textbf{(1) $p(\bm{\mu}|X)$}: the predictive form of bag $X$ given in Eq.(\ref{eq7}), \textit{i.e.}, a posterior Dirichlet distribution rather than a conventional Categorical one, can be derived from Eq.(\ref{eq5}) approximately \cite{malinin2018predictive}, \textit{i.e.},
\begin{equation}
\begin{aligned}
\int P(Y|X,\bm{\omega}) p(\bm{\omega}|\mathcal{D}) d \bm{\omega} & = \int P(Y|\bm{\mu}) p(\bm{\mu}|X, \mathcal{D}) d \bm{\mu}\\
& \approx \int P(Y|\bm{\mu}) p(\bm{\mu}|X; \hat{\bm{\omega}}) d \bm{\mu},
\end{aligned}
\label{eq8}
\end{equation}
with a point estimation $\hat{\bm{\omega}}$ satisfying $p(\bm{\omega}|\mathcal{D})=\delta(\bm{\omega}-\hat{\bm{\omega}})$ where $\delta(\cdot)$ is a Dirac delta function. Eq.(\ref{eq8}) makes the full posterior in Eq.(\ref{eq5}) tractable by introducing a new posterior Dirichlet distribution, thus enabling us to quantify the uncertainty in $p(\bm{\mu}|X)$ deterministically. More details are provided in Appendix \ref{apx-der-bayes}.
\textbf{(2) $\bm{\alpha}_{\text{bag}}$}: its evidential learning formulation in Eq.(\ref{eq7}) still satisfies the condition of Theorem \ref{thm:symfunc}. This thereby provides broad MIL networks with an alternative means to model bag-level uncertainty. 

\subsection{Rethinking Instance-level Estimator}
\label{sec-ins-est}
It is often of interest to ask MIL models to jointly estimate instance labels. For this purpose, herein, we first consider the estimator for instances before modeling instance-level predictive uncertainty.
Instead of scoring instances with attention mechanism as written in Eq.(\ref{eq2}), we rethink and derive a new estimator for instance prediction through the Fundamental Theorem of Symmetric Functions. 

Given the unified form of permutation-invariant bag classifiers (Theorem \ref{thm:symfunc}), \textit{i.e.}, $S(X) = g\big(\sum_{k} f(\mathbf{x}_k)\big)$, it could be found that a single aggregated embedding (by pooling over instance embeddings) is transformed into a bag score by $g$. Obviously, it is also possible to transform a single instance embedding for scoring, thereby obtaining a \textit{feasible} estimator for instances, as summarized below: 

\begin{corollary}
\label{corol:ins}
Given a scoring function for a set of instances $X=\{\mathbf{x}_1, \dots , \mathbf{x}_{K}\}$, written as $S(X) = g\big(\sum_{k} f(\mathbf{x}_k)\big)\in \mathbb{R}$ where $k\in[1,K]$, a scoring function for any single instance can be written as
\begin{equation}
T=g\circ f,
\label{eq9}
\end{equation}
and $T(\mathbf{x})\in \mathbb{R}$ for an instance $\mathbf{x}$.
\end{corollary}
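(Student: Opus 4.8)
The plan is to read the decomposition in Eq.~\eqref{eq1} in its degenerate case $K=1$. A single instance $\mathbf{x}$ can be identified with the singleton set $X=\{\mathbf{x}\}$, and since a one-element set admits only the identity ordering, \emph{every} scoring function on singletons is trivially permutation-invariant. Hence Theorem~\ref{thm:symfunc} applies to the restriction of $S$ to singleton bags, and it is realized by the very same transformations $f$ and $g$ that realize the bag scorer $S(X)=g\big(\sum_{k=1}^{K}f(\mathbf{x}_k)\big)$.

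The one real step is then to evaluate that decomposition at $K=1$: the inner pooling $\sum_{k=1}^{K}f(\mathbf{x}_k)$ collapses to the single term $f(\mathbf{x})$, so $S(\{\mathbf{x}\})=g\big(f(\mathbf{x})\big)=(g\circ f)(\mathbf{x})$, and setting $T=g\circ f$ gives the claimed form. A quick type check confirms $T(\mathbf{x})\in\mathbb{R}$: $f$ sends an instance into the aggregated-embedding space, and $g$ is by definition the map from that space into $\mathbb{R}$ (indeed the argument it receives for a size-one bag is exactly $f(\mathbf{x})$), so the composition is real-valued. It is worth noting that this derivation is insensitive to the particular permutation-invariant pooling used: $\mathrm{mean}$, $\mathrm{max}$, and attention-based pooling all reduce to the identity on a singleton (for attention, $a_1=\mathop{\mathrm{softmax}}(t(\mathbf{h}_1))=1$), so in each case the per-instance scorer is again $g\circ f$.

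I do not anticipate a genuine obstacle here, since the statement is a direct specialization of Theorem~\ref{thm:symfunc}; the substance is conceptual rather than technical. The point being recorded is that in $S(X)=g\big(\sum_k f(\mathbf{x}_k)\big)$ the roles factor cleanly---$f$ embeds instances and $g$ scores points of the embedding space---so $g$ may legitimately be fed any individual embedding $f(\mathbf{x}_k)$ rather than only the pooled sum, yielding a well-defined real-valued instance scorer. The sharper question of whether $T=g\circ f$ is a \emph{good} instance estimator, as opposed to merely a feasible one, is not part of this statement and is taken up in the remainder of the section.
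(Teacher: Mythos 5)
Your proposal is correct and matches the paper's own argument in substance: both reduce to the observation that $f(\mathbf{x})$ lies in the domain of $g$ (the paper phrases this as the pooled sum and a single embedding both living in $\mathbb{R}^M$, you phrase it as the $K=1$ specialization where the pooled sum collapses to $f(\mathbf{x})$), so $T=g\circ f$ is a well-defined real-valued scorer. Your singleton-bag reading, including the check that mean, max, and attention pooling all act as the identity on a one-element bag, is essentially the "virtual bag" device the paper itself deploys in the proof of Proposition~\ref{prop:ins}, so nothing is missing.
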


Its proof is shown in Appendix \ref{apx-der-ins}. This corollary allows us to make instance prediction by \textit{skipping} permutation-invariant pooling. Nonetheless, it is still insufficient to conclude that $T$ is also a good instance estimator, apart from a feasible one. We thereby give Proposition \ref{prop:ins}. 

\begin{proposition}
\label{prop:ins}
Let $S(\cdot)$ be a classifier for a bag of instances $X=\{\mathbf{x}_k\}_{k=1}^K$ and satisfy $S(X) = g\big(\sum_{k} f(\mathbf{x}_k)\big)$. For any bag $X$ and its label $Y\in\{0,1\}$, further assume $S$ can predict bags precisely: $S(X)=Y$. Then, there exists an estimator with $T=g\circ f$ for any single instance $\mathbf{x}$, such that $T(\mathbf{x})=y$, where $y\in\{0,1\}$ is the label of $\mathbf{x}$.
\end{proposition}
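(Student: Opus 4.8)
The plan is to exhibit the estimator $T = g \circ f$ explicitly and verify that, under the stated hypotheses, it recovers instance labels exactly. The starting observation is that the classical MIL assumption $Y = \max_k \{y_k\}$ together with the precise-bag assumption $S(X) = Y$ strongly constrains the values that $g\big(\sum_k f(\mathbf{x}_k)\big)$ can take: evaluating on well-chosen bags should force $g \circ f$ to take value $0$ on negative instances and $1$ on positive ones.

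\textbf{Step 1: Fix the convention for single instances.} First I would note that a single instance $\mathbf{x}$ can be regarded as a singleton bag $X = \{\mathbf{x}\}$ with $K=1$. For such a bag, Eq.(\ref{eq1}) gives $S(\{\mathbf{x}\}) = g\big(f(\mathbf{x})\big) = (g\circ f)(\mathbf{x}) = T(\mathbf{x})$, so the candidate instance estimator is nothing but the bag classifier restricted to singleton bags. This already connects $T$ to $S$ without any new machinery, and it is exactly the construction delivered by Corollary \ref{corol:ins}.

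\textbf{Step 2: Evaluate on pure bags to pin down $T$.} Next I would use the precise-bag hypothesis on two families of bags. For a bag consisting entirely of negative instances, the MIL assumption gives $Y=0$, hence $S(X)=0$; taking the singleton case $X=\{\mathbf{x}\}$ with $y=0$ yields $T(\mathbf{x}) = S(\{\mathbf{x}\}) = 0$. Symmetrically, a singleton bag $\{\mathbf{x}\}$ with a positive instance has $Y = \max\{y\} = y = 1$, so $S(\{\mathbf{x}\}) = 1$ and thus $T(\mathbf{x}) = 1$. Combining the two cases, $T(\mathbf{x}) = y$ for every instance $\mathbf{x}$ with label $y \in \{0,1\}$, which is the claim. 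The argument is essentially a consistency check: the hypotheses are strong enough that the singleton-bag evaluations alone do the work.

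\textbf{Main obstacle.} The delicate point is not the algebra but the logical status of the hypotheses. The assumption ``$S(X)=Y$ for \emph{every} bag $X$'' is an idealization that, when combined with the MIL max-assumption, is what makes the proof go through — so the real work is making precise that this idealization is the right one and that the singleton bag $\{\mathbf{x}\}$ is a legitimate instance of it. I would also take care to state that the conclusion is an existence claim (``there exists an estimator $T = g\circ f$''): the point is that the function $g\circ f$ built from the \emph{same} $f,g$ that define a perfect bag classifier is automatically a perfect instance classifier, so no separate instance-level supervision or attention proxy is needed. A remark contrasting this with the attention-branch approach of Eq.(\ref{eq2}), where the instance scoring proxy is tied to bag-representation learning rather than to the bag decision function, would round out the discussion.
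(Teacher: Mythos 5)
Your proof is correct and takes essentially the same route as the paper's: the paper forms a virtual bag of $n$ duplicates of $\mathbf{x}$ and verifies case-by-case that mean, max, and attention pooling all reduce the pooled embedding to $f(\mathbf{x})$, whereas you take the singleton bag $\{\mathbf{x}\}$, which is the $n=1$ instance of the same construction and has the mild advantage of handling the literal sum-pooling form $g\big(\sum_{k} f(\mathbf{x}_k)\big)$ directly. In both arguments the MIL max-assumption gives the virtual bag the label $y$ and the perfect-bag-classifier hypothesis then forces $T(\mathbf{x})=y$.
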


This proposition implies that a perfect $T$ can be directly deduced from a perfect $S$. Refer to Appendix \ref{apx-der-ins} for its proof.
Furthermore, by relaxing the ideal assumption, \textit{i.e.}, a perfect $S$, it suggests that a good instance-level estimator is likely to be obtained from a good bag classifier. 
Intuitively, if $S$ is good at distinguishing between positive and negative bags, it would recognize an instance $\mathbf{x}$ correctly as long as one \textit{duplicates} $\mathbf{x}$ to form a new bag for prediction, because this bag is expected to be classified correctly by $S$ and it exactly has the \textit{same} label with $\mathbf{x}$. 
Extensive experiments (Section \ref{sec-exp}) empirically demonstrate this finding. 

Compared with the commonly-used attention score $a_k$, our new instance-level estimator $T(\mathbf{x})$ exhibits the following merits. (1) It could be obtained from not merely attention-based but general MIL approaches, as stated in Corollary \ref{corol:ins}. (2) It is no longer a scoring proxy like $a_k$ but an estimator with a classification decision space same to that of $S(X)$, probably closer to an ideal instance estimator than $a_k$. 

\subsection{Weakly-supervised Instance-level Predictive Uncertainty}
\label{ins-edl}

Given $T(\mathbf{x})=g_{\phi}(f_{\psi}(\mathbf{x}))$, $T$ could be adopted as the estimator to infer instance probabilities, with the same $f_{\psi}$ and $g_{\phi}$ as $S$. However, it could be not desirable for UE in practice, because the NN-based $g_{\phi}$ is actually trained with those \textit{pooled} instance embeddings (\textit{i.e.}, $\sum_{k} f_{\psi}(\mathbf{x}_k)$) instead of raw ones (\textit{i.e.}, $f_{\psi}(\mathbf{x}_k)$), thus inclining to estimate high uncertainties for \textit{unseen} data. To this end, we propose to improve $T$ by learning instance-specific residuals, thereby obtaining a new estimator specially for instance-level UE. Moreover, we propose a weakly-supervised evidence learning objective to optimize our residual instance estimator. 

\textbf{Residual evidential modeling}~For any single instance $\mathbf{x}$, we assume there is a \textit{residual estimation} for the given $T$, denoted as $\epsilon= T^{\ast}(\mathbf{x})-T(\mathbf{x})$ where $T^{\ast}(\mathbf{x})$ is a target estimation. We model this instance-specific residual $\epsilon$ by leveraging a new NN-parameterized transformation $r_{\pi}(\cdot)$ whose input is $\mathbf{h}=f_{\psi}(\mathbf{x})$. It is encouraged to compensate for the initial biased $T(\mathbf{x})$, so as to approach $T^{\ast}(\mathbf{x})$.
Let $R(\cdot)$ denote our new residual instance estimator. Further, we introduce a Dirichlet distribution into $R$ for instance-level UE. 
Therefore, the predictive distribution of instance $\mathbf{x}$ can be summarized and written as follows: 
\begin{equation}
\begin{aligned}
p(\bm{\nu}|\mathbf{x}) = &\ Dir(\bm{\nu}|\bm{\alpha}_{\text{ins}}),\\ 
\bm{\alpha}_{\text{ins}} = \bm{e}_{\text{ins}} + 1 = &\ \mathcal{A}\big(R(\mathbf{x})\big) + 1,\\
R(\mathbf{x})= T(\mathbf{x})+r_{\pi}(\mathbf{h})=&\ g_{\phi}(f_{\psi}(\mathbf{x})) + r_{\pi}(f_{\psi}(\mathbf{x})),
\end{aligned}
\label{eq10}
\end{equation}
where $\bm{\alpha}_{\text{ins}}$ is instance-level concentration parameter and  $\bm{e}_{\text{ins}}$ indicates the evidence derived from $\mathbf{x}$. 

\textbf{Optimization strategy}~Given any bag sample $(X, Y)$ where $X=\{\mathbf{x}_k\}_{k=1}^{K}$, our optimization strategy for $R(\mathbf{x})$ is as follows. (1) $Y=0$: we have $y_k=0\ \forall k\in[1,K]$,
so the objective function given by Eq.(\ref{eq-loss-bag}) can be utilized similarly. (2) $Y=1$: since only $\mathop{\max}_k \{y_k\}=1$ is given, we propose a weakly-supervised evidential learning strategy. Concretely, we simply multiply $\bm{e}_{k}$ (the evidence of $\mathbf{x}_k$) by different weights to mimic the selection of positive instances, and then aggregate them into a single one that can be optimized by Eq.(\ref{eq-loss-bag}). Those weights are the expected instance probabilities given by $T$, likely to indicate positive instances as stated in Section \ref{sec-ins-est}. Therefore, our objective function for $R(\mathbf{x})$ can be summarized and given as follows:
\begin{equation}
\begin{aligned}
& \mathcal{L}_{\text{MIREL}}=\mathop{\min}_{\psi,\pi} \mathbb{E}_{(X,Y)\sim \mathcal{P}} \big[Y\mathcal{L}_{\text{ins}}^{+} + (1-Y)\mathcal{L}_{\text{ins}}^{-}\big], \\
& \mathcal{L}_{\text{ins}}^{-} = \mathbb{E}_{\bm{\nu}\sim Dir(\bm{\alpha}_{k})}  \big[-\mathop{\log}p(y=0|\bm{\nu},\bm{\alpha}_{k},\sigma^2)\big], \\
& \mathcal{L}_{\text{ins}}^{+} =  \mathbb{E}_{\bm{\nu}\sim Dir(\widetilde{\bm{\alpha}})} \big[-\mathop{\log}p(Y=1|\bm{\nu},\widetilde{\bm{\alpha}},\sigma^2)\big],
\end{aligned}
\label{eq-loss-ins}
\end{equation}
where $\bm{\alpha}_{k}$ is the $\bm{\alpha}_{\text{ins}}$ for $\mathbf{x}_k$, $\widetilde{\bm{\alpha}}=\sum_{k}\bm{e}_{k} \bar{w}_{k} + 1$, $\bar{w}_{k}$ is a normalized $w_{k}$, $w_{k} = \mathbb{E}_{\bm{\nu}\sim Dir\big(\bm{\alpha}_{k}^{(T)}\big)} p(y=1|\bm{\nu})$, and $\bm{\alpha}_{k}^{(T)}=\mathcal{A}(T(\mathbf{x}_k))+1$. Note that only $\psi$ and $\pi$ are optimized in Eq.(\ref{eq-loss-ins}), without the $\phi$ for bags, as we aim at encouraging $r_{\pi}\circ f_{\psi}$ to learn residuals for instances. 

\textbf{Justification}~\textbf{(1) $R(\mathbf{x})$}: this new residual instance estimator has independent parameters $\pi$ with $S(X)$, as written in Eq.(\ref{eq10}). This enables $R$ to separately learn instance-specific evidences and enhance instance-level UE. 
\textbf{(2) $\mathcal{L}_{\text{ins}}^{+}$}:
we analyze its upper bounds, given in Proposition \ref{prop:insloss}.
This proposition suggests that $\mathcal{L}_{\text{ins}}^{+}$ may provide a more suitable $\bm{\hat{\theta}_{\mathrm{w}}}$ than common objectives such that there is $p(\bm{\theta_{\mathrm{w}}}|\mathcal{D})\approx \delta(\bm{\theta_{\mathrm{w}}}-\bm{\hat{\theta}_{\mathrm{w}}})$ for accurate UE. Refer to Appendix \ref{apx-der-insloss} for proofs and further explanations. 
Therefore, similar to that done in Eq.(\ref{eq8}), the intractable posterior in Eq.(\ref{eq6}) can be approximated by $Dir(\bm{\nu}|\bm{\alpha}_{\text{ins}};\bm{\hat{\theta}_{\mathrm{w}}})$. As a result, we can obtain a closed-form analytical solution (Appendix \ref{apx-um-dir}) for instance-level uncertainty quantification. 

\begin{proposition}
\label{prop:insloss}
Let $\mathcal{L}(\bm{\alpha},y)$ be a loss function \textit{w.r.t} $\bm{\alpha}$ and $y$. For any positive bag $X=\{\mathbf{x}_1,\cdots,\mathbf{x}_K\}$, assume $\bar{w}_k\geq 0\ \forall k\in[1,K]$, $\sum_{k}\bar{w}_k=1$, and $\widetilde{\bm{\alpha}}=\sum_{k} \bar{w}_{k} \bm{\alpha}_{k}$. 
$
\mathcal{L}_{\text{ins}}^{+}=\mathcal{L}(\widetilde{\bm{\alpha}},1)\leq \sum_{k} \bar{w}_k\mathcal{L}(\bm{\alpha}_k,1)\leq \sum_{k} \frac{1}{K}\mathcal{L}(\bm{\alpha}_k,1)
$ 
holds in instance evidential learning, when $\mathcal{L}$ is a convex function \textit{w.r.t} $\bm{\alpha}$ and  there is $\bar{w}_1\geq \bar{w}_2\geq \cdots \geq \bar{w}_K$ for $\mathcal{L}(\bm{\alpha}_1,1)\leq \mathcal{L}(\bm{\alpha}_2,1) \leq \cdots \leq \mathcal{L}(\bm{\alpha}_K,1)$.  
\end{proposition}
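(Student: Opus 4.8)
The stated chain is two independent one-line estimates glued together: the left ``$\le$'' is Jensen's inequality for a convex loss, and the right ``$\le$'' is Chebyshev's sum inequality for two oppositely ordered sequences. Before applying them I would reconcile the two descriptions of $\widetilde{\bm\alpha}$: since $\bm\alpha_k=\bm e_k+1$ and $\sum_k\bar w_k=1$, one has $\sum_k\bar w_k\bm\alpha_k=\sum_k\bar w_k\bm e_k+\sum_k\bar w_k=\sum_k\bar w_k\bm e_k+1$, which is exactly the $\widetilde{\bm\alpha}$ appearing in Eq.(\ref{eq-loss-ins}); hence throughout I may write $\widetilde{\bm\alpha}=\sum_k\bar w_k\bm\alpha_k$ and identify $\mathcal{L}_{\text{ins}}^{+}$ with $\mathcal{L}(\widetilde{\bm\alpha},1)$, where $\mathcal{L}(\bm\alpha,y):=\mathbb{E}_{\bm\nu\sim Dir(\bm\alpha)}[-\log p(y\mid\bm\nu,\bm\alpha,\sigma^2)]$.

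\textbf{Step 1 (middle inequality).} Fix $y=1$ and use that $\bm\alpha\mapsto\mathcal{L}(\bm\alpha,1)$ is convex by hypothesis. Since $\bar w_k\ge 0$ and $\sum_k\bar w_k=1$, the point $\widetilde{\bm\alpha}=\sum_k\bar w_k\bm\alpha_k$ is a convex combination of $\bm\alpha_1,\dots,\bm\alpha_K$, so Jensen's inequality gives
\[
\mathcal{L}_{\text{ins}}^{+}=\mathcal{L}\!\Big(\sum_k\bar w_k\bm\alpha_k,\,1\Big)\ \le\ \sum_k\bar w_k\,\mathcal{L}(\bm\alpha_k,1).
\]

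\textbf{Step 2 (right inequality).} Relabel the instances, without loss of generality, so that $\mathcal{L}(\bm\alpha_1,1)\le\cdots\le\mathcal{L}(\bm\alpha_K,1)$; by the assumption of the proposition the weights are then ordered in the opposite way, $\bar w_1\ge\cdots\ge\bar w_K$. Writing $\ell_k:=\mathcal{L}(\bm\alpha_k,1)$, each term $(\bar w_i-\bar w_j)(\ell_i-\ell_j)$ is $\le 0$ because its two factors have opposite signs, whence $\sum_{i,j}(\bar w_i-\bar w_j)(\ell_i-\ell_j)\le 0$. Expanding this double sum and using $\sum_k\bar w_k=1$,
\[
0\ \ge\ 2K\sum_k\bar w_k\ell_k-2\Big(\sum_i\bar w_i\Big)\Big(\sum_j\ell_j\Big)\ =\ 2K\sum_k\bar w_k\ell_k-2\sum_j\ell_j,
\]
which rearranges to $\sum_k\bar w_k\,\mathcal{L}(\bm\alpha_k,1)\le\frac1K\sum_k\mathcal{L}(\bm\alpha_k,1)$. (Equivalently, this is Chebyshev's sum inequality applied to $(\bar w_k)$ and $(\ell_k)$, which are oppositely ordered.) Composing Steps 1 and 2 proves the proposition.

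\textbf{Where the work is.} The proof itself carries no real obstacle: it is the juxtaposition of Jensen and Chebyshev, with the only bookkeeping being the reconciliation of the two forms of $\widetilde{\bm\alpha}$ and the ``without loss of generality'' relabeling in Step 2. The substantive matters — which I would place in the discussion accompanying the proof in Appendix \ref{apx-der-insloss} rather than in the proof proper — are to argue that the two hypotheses genuinely hold in the regime of interest: (i) that the Fisher-information objective $\mathcal{L}_{\mathcal{I}\text{-EDL}}$ of Eq.(\ref{eq-loss-bag}) is convex in $\bm\alpha$ (at least locally, on the set of concentration parameters actually visited during training), so the convexity hypothesis is not vacuous; and (ii) that the anti-monotone pairing of large $\bar w_k$ with small $\mathcal{L}(\bm\alpha_k,1)$ is the natural one here. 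For (ii) the intuition is that $w_k=\mathbb{E}_{\bm\nu\sim Dir(\bm\alpha_k^{(T)})}p(y=1\mid\bm\nu)$ is large exactly for the instances that $T$ scores as positive, and — since $R$ shares $g_\phi\circ f_\psi$ with $T$ — those are the instances whose $\bm\alpha_k$ is pulled toward the $y=1$ vertex and hence incur the smallest $\mathcal{L}(\bm\alpha_k,1)$; turning this heuristic into a rigorous statement, i.e.\ linking the ordering of $(w_k)$ to that of $(\ell_k)$ through the shared transformation, is the only genuinely delicate point.
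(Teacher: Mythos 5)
Your proof is correct. Step 1 is exactly the paper's argument: Jensen's inequality applied to the convex map $\bm\alpha\mapsto\mathcal{L}(\bm\alpha,1)$ at the convex combination $\widetilde{\bm\alpha}=\sum_k\bar w_k\bm\alpha_k$ (this is the paper's Proposition \ref{apx-prop-cmpl2}). Step 2, however, takes a genuinely different route from the paper's Proposition \ref{apx-prop-cmpl1}. The paper first establishes the partial-sum domination $\sum_{k=1}^{n}b_k\geq n/K$ for every $n$ by contradiction, and then chains a sequence of telescoping substitutions $\Delta_n\,\mathcal{L}(\bm\alpha_n,1)\leq\Delta_n\,\mathcal{L}(\bm\alpha_{n+1},1)$ to reach $\mathcal{L}_2\leq\mathcal{L}_1$ --- essentially a majorization argument. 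You instead invoke Chebyshev's sum inequality directly: the double sum $\sum_{i,j}(\bar w_i-\bar w_j)(\ell_i-\ell_j)\leq 0$ for oppositely ordered sequences, expanded and simplified using $\sum_k\bar w_k=1$, yields $\sum_k\bar w_k\ell_k\leq\frac1K\sum_k\ell_k$ in three lines. Your expansion is correct ($\sum_{i,j}\bar w_i\ell_i=K\sum_i\bar w_i\ell_i$ and the cross terms factor), so the argument is complete; it is shorter and more standard than the paper's, at the cost of obscuring the intermediate fact $\Delta_n\geq 0$ that the paper's route makes explicit. Your preliminary reconciliation of the two forms of $\widetilde{\bm\alpha}$ (evidence-plus-one versus convex combination of the $\bm\alpha_k$) is a point the paper passes over silently, and your closing caveats --- that the Fisher-information loss is not strictly convex in $\bm\alpha$ and that the anti-monotone pairing of $\bar w_k$ with $\mathcal{L}(\bm\alpha_k,1)$ is only heuristically justified via the shared transformation $g_\phi\circ f_\psi$ --- match the paper's own ``additional explanation'' remarks, which likewise treat these as assumptions validated empirically rather than proved.
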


\red{
\textbf{Complete objective function}~To jointly train instance-level $R(\mathbf{x})$ and bag-level $S(X)$ for MIUE, the complete objective function we adopt is summarized as follows:
\begin{equation}
\mathcal{L}=\mathcal{L}_{\mathcal{I}\text{-EDL}} + \mathcal{L}_{\text{MIREL}} + \mathcal{L}_{\text{RED}},
\label{eq-complete-loss}
\end{equation}
where $\mathcal{L}_{\text{RED}}$ is a RED loss \cite{pandey2023learn} serving as a regularization term in EDL. As $\mathcal{L}_{\text{RED}}$ shows to be effective in avoiding zero-evidence regions to improve EDL, we exploit this loss to regularize both the evidence output of $R(\mathbf{x})$ and $S(X)$. For an evidence output $\bm{\alpha}$, $\mathcal{L_\text{RED}}=-\frac{C}{\alpha_0}\mathrm{log}(\alpha_{\text{gt}}-1)$, where $\alpha_{\text{gt}}$ is the predicted evidence for ground truth class. Refer to Appendix \ref{apx-mnist-abl-study} and \ref{apx-cifar-abl-study} for the ablation studies on Eq.(\ref{eq-complete-loss}). Note that in joint training, the parameters $\psi$ and $\phi$ are optimized by $\mathcal{L}_{\mathcal{I}\text{-EDL}}$ in bag-level evidential learning. Meanwhile, the parameters $\psi$ and $\pi$ are optimized by $\mathcal{L_\text{MIREL}}$ in instance-level residual evidential learning. Thus, $\psi$ is jointly optimized to improve both instance-level and bag-level UE performance.
}

\section{Related Work}

\textbf{Uncertainty estimation (UE)} methods could be grouped into two categories. (1) Bayesian NNs (\textbf{BNNs}) generally adopt stochastic NN weights to model predictive uncertainty \cite{neal2012bayesian}. Many techniques \cite{gal2016dropout,lakshminarayanan2017simple,loquercio2020general} are proposed to approximate the intractable posterior of BNNs. Nonetheless, they usually require sampling and are demanding computationally. (2) Deterministic uncertainty methods (\textbf{BUMs}) emerge as a promising means to mitigate this using deterministic NN weights. They can accomplish UE with a single forward pass mainly by regularizing hidden feature spaces \cite{alemi2018uncertainty,charpentier2020posterior} or employing distance-aware output layers \cite{liu2020simple,van2020uncertainty,mukhoti2023deep}. BUM is rapidly developing and encompasses many interesting works beyond those mentioned here. Readers could refer to \citeauthor{pmlr-v162-postels22a} \citeyearpar{pmlr-v162-postels22a} and \citeauthor{mukhoti2023deep} \citeyearpar{mukhoti2023deep}. 

\textbf{Dirichlet-based uncertainty (DBU)} methods \cite{ulmer2023prior} belong to another type of BUMs. They predict the parameters of Dirichlet distribution, allowing us to distinguish different uncertainty sources and write common uncertainty measures with closed-form solutions. As one of them, EDL \cite{sensoy2018evidential} have received great attention due to its simplicity and impressive performance. It has inspired many real-world applications \cite{bao2022opental,chen2022dual,chen2023cascade,zhao2023open}. In addition, EDL has been further improved recently in learning strategies \cite{pmlr-v202-deng23b,pandey2023learn} and concepts \cite{fan2023flexible}, showing better results in UE. All of these motivate us to study a baseline approach for MIUE through EDL. 

\textbf{MIL with instance-level estimator}~predicts not only bag results but also instance responses. It contains two main classes. (1) Instance-level methods \cite{liu2012key}. For model interpretability, they directly predict instance scores and then aggregate these scores into a bag-level result \cite{ilse2020deep}. (2) Embedding-level methods with explicit instance scoring branches. They work on instance embeddings rather than scores, and aim to enhance the accuracy of both instance and bag prediction using two separate branches \cite{chikontwe2020multiple,qu2022bi} or two branches decoupled via attention \cite{ilse2018attention,shi2020loss,li2021dual,yufei2022bayes}. Most of them are specially designed for pathology applications. 
\red{In addition, some other methods concentrate on alleviating the negative effect of noisy instances on MIL to improve the accuracy, mainly by maximizing the gap between two representative instances from a pair of negative and positive bags \cite{tian2021weakly,sapkota2022bayesian}. These methods are often seen in video anomaly detection tasks.} This paper roughly follows the second class; yet we do not focus on classification accuracy but UE performance, and devise a new residual instance estimator independent of attention-based MIL. 

\begin{table*}[htbp]
\caption{Main results on \textbf{MNIST-bags}. \textbf{OOD-F} and \textbf{OOD-K} mean that FMNIST and KMNIST are used for generating OOD bags, respectively. The results colored in gray are from our derived instance estimator $T(\mathbf{x})$. \textbf{$\overline{\textit{UE}}$} is the average metrics on three UE tasks.
}
\label{table-mnist}
\vskip 0.1in
\begin{center}
\begin{scriptsize}
\tabcolsep=0.178cm
\begin{tabular}{l|ccccc|ccccc}
\toprule
\multirow{2}{*}{\textbf{Method}} & \multicolumn{5}{c|}{\underline{\textbf{\ \ Bag-level\ \ }}}            & \multicolumn{5}{c}{\underline{\textbf{\ \ Instance-level\ \ }}}       \\
 & \textbf{Acc.} & \textbf{Conf.} & \textbf{OOD-F} & \textbf{OOD-K} & \textbf{$\overline{\textit{UE}}$} & \textbf{Acc.} & \textbf{Conf.} & \textbf{OOD-F} & \textbf{OOD-K} & \textbf{$\overline{\textit{UE}}$} \\
\midrule
\multicolumn{11}{l}{- \textit{Combined with deep MIL networks}} \\ \midrule
Mean & 93.38{\tiny\ $\pm$ 0.90} & \textbf{87.02}{\tiny\ $\pm$ 1.04} & \textbf{77.57}{\tiny\ $\pm$ 2.46} & 54.66{\tiny\ $\pm$ 2.62} & 73.08 & \textcolor[RGB]{128,128,128}{86.52{\tiny\ $\pm$ 0.97}} & \textcolor[RGB]{128,128,128}{66.49{\tiny\ $\pm$ 1.37}} & \textcolor[RGB]{128,128,128}{\textbf{79.36}{\tiny\ $\pm$ 1.95}} & \textcolor[RGB]{128,128,128}{\textbf{57.43}{\tiny\ $\pm$ 1.50}} & 67.76     \\
\red{Mean + }MIREL & 93.50{\tiny\ $\pm$ 0.53} & 87.01{\tiny\ $\pm$ 1.04} & 75.26{\tiny\ $\pm$ 1.52} & \textbf{57.69}{\tiny\ $\pm$ 6.28} & \textbf{73.32} & 92.45{\tiny\ $\pm$ 1.22} & \textbf{91.49}{\tiny\ $\pm$ 1.76} & 69.98{\tiny\ $\pm$ 4.41} & 56.70{\tiny\ $\pm$ 4.97} & \textbf{72.72}    \\ \midrule
Max & 94.56{\tiny\ $\pm$ 0.46} & 87.82{\tiny\ $\pm$ 1.49} & 75.23{\tiny\ $\pm$ 1.32} & 62.44{\tiny\ $\pm$ 3.00} & 75.17 & \textcolor[RGB]{128,128,128}{92.53{\tiny\ $\pm$ 0.54}} & \textcolor[RGB]{128,128,128}{81.86{\tiny\ $\pm$ 1.54}} & \textcolor[RGB]{128,128,128}{76.97{\tiny\ $\pm$ 1.71}} & \textcolor[RGB]{128,128,128}{\textbf{62.53}{\tiny\ $\pm$ 1.61}} & 73.79      \\
\red{Max + }MIREL & 95.96{\tiny\ $\pm$ 0.29} & \textbf{87.85}{\tiny\ $\pm$ 2.23} & \textbf{84.17}{\tiny\ $\pm$ 3.32} & \textbf{66.75}{\tiny\ $\pm$ 5.70} & \textbf{79.59} & 96.82{\tiny\ $\pm$ 0.27} & \textbf{84.22}{\tiny\ $\pm$ 0.43} & \textbf{80.81}{\tiny\ $\pm$ 4.88} & 61.15{\tiny\ $\pm$ 3.28} & \textbf{75.40}   \\ \midrule
DSMIL & 96.22{\tiny\ $\pm$ 0.17} & \textbf{87.56}{\tiny\ $\pm$ 0.95} & 71.13{\tiny\ $\pm$ 5.20} & 60.71{\tiny\ $\pm$ 7.91} & 73.13 &  70.16{\tiny\ $\pm$ 3.56} & 64.64{\tiny\ $\pm$ 0.49} & 59.75{\tiny\ $\pm$ 2.35} & 57.50{\tiny\ $\pm$ 2.55} & 60.63     \\
\red{DSMIL + }MIREL  & 96.50{\tiny\ $\pm$ 0.37} & 87.26{\tiny\ $\pm$ 2.66} & \textbf{87.27}{\tiny\ $\pm$ 4.27} & \textbf{62.03}{\tiny\ $\pm$ 7.78} & \textbf{78.85} & 97.19{\tiny\ $\pm$ 0.29} & \textbf{73.79}{\tiny\ $\pm$ 15.68} & \textbf{73.29}{\tiny\ $\pm$ 10.85} & \textbf{57.58}{\tiny\ $\pm$ 3.44}  & \textbf{68.22} \\ \midrule
ABMIL & 95.74{\tiny\ $\pm$ 0.38} & \textbf{86.91}{\tiny\ $\pm$ 0.98} & 82.93{\tiny\ $\pm$ 4.81} & 74.37{\tiny\ $\pm$ 4.84} & 81.41 &  75.03{\tiny\ $\pm$ 0.28} & 61.28{\tiny\ $\pm$ 0.86} & 63.68{\tiny\ $\pm$ 1.00} & 52.63{\tiny\ $\pm$ 1.07} & 59.20       \\
\red{ABMIL + }MIREL  & 96.48{\tiny\ $\pm$ 0.22} & 86.63{\tiny\ $\pm$ 1.32} & \textbf{92.84}{\tiny\ $\pm$ 0.60} & \textbf{79.95}{\tiny\ $\pm$ 4.12} & \textbf{86.47} & 87.71{\tiny\ $\pm$ 0.67} & \textbf{90.73}{\tiny\ $\pm$ 1.31} & \textbf{78.13}{\tiny\ $\pm$ 2.19} & \textbf{67.02}{\tiny\ $\pm$ 1.94} & \textbf{78.63}   \\  \midrule 
\multicolumn{11}{l}{- \textit{Compared with related UE methods \red{using ABMIL as the base MIL network}}} \\ \midrule
Deep Ensemble & 96.06{\tiny\ $\pm$ 0.35} & 87.36{\tiny\ $\pm$ 0.59} & 80.07{\tiny\ $\pm$ 2.57} & 74.33{\tiny\ $\pm$ 3.97} & 80.59 & 75.56{\tiny\ $\pm$ 0.32} & 71.89{\tiny\ $\pm$ 0.91} & 70.48{\tiny\ $\pm$ 0.53} & 55.22{\tiny\ $\pm$ 1.16} & 65.87  \\
MC Dropout  & 96.28{\tiny\ $\pm$ 0.41} & \textbf{88.46}{\tiny\ $\pm$ 1.82} & 89.57{\tiny\ $\pm$ 3.84} & 78.24{\tiny\ $\pm$ 4.89} & 85.42 &  75.61{\tiny\ $\pm$ 0.66} & 68.40{\tiny\ $\pm$ 1.54} & 68.34{\tiny\ $\pm$ 1.06} & 58.61{\tiny\ $\pm$ 1.38} & 65.12  \\
$\mathcal{I}$-EDL & 96.08{\tiny\ $\pm$ 0.20} & 86.78{\tiny\ $\pm$ 0.87} & 85.51{\tiny\ $\pm$ 7.56} & 73.15{\tiny\ $\pm$ 3.87} & 81.82 &  75.45{\tiny\ $\pm$ 0.13} & 60.72{\tiny\ $\pm$ 1.46} & 63.91{\tiny\ $\pm$ 1.31} & 54.14{\tiny\ $\pm$ 2.19} & 59.59 \\
Bayes-MIL & 96.44{\tiny\ $\pm$ 0.33} & 85.63{\tiny\ $\pm$ 1.53} & 81.02{\tiny\ $\pm$ 11.71} & 57.04{\tiny\ $\pm$ 12.61} & 74.57 & 91.64{\tiny\ $\pm$ 1.25} & 82.24{\tiny\ $\pm$ 1.85} & 60.77{\tiny\ $\pm$ 6.59} & 42.06{\tiny\ $\pm$ 2.84} & 61.69   \\ 
\textbf{MIREL}  & 96.48{\tiny\ $\pm$ 0.22} & 86.63{\tiny\ $\pm$ 1.32} & \textbf{92.84}{\tiny\ $\pm$ 0.60} & \textbf{79.95}{\tiny\ $\pm$ 4.12} & \textbf{86.47} & 87.71{\tiny\ $\pm$ 0.67} & \textbf{90.73}{\tiny\ $\pm$ 1.31} & \textbf{78.13}{\tiny\ $\pm$ 2.19} & \textbf{67.02}{\tiny\ $\pm$ 1.94} & \textbf{78.63}  \\  
\bottomrule
\end{tabular}
\end{scriptsize}
\end{center}
\vskip -0.1in
\end{table*}

\textbf{Uncertainty estimation for MIL} is formally studied far less than that for standard single instance learning, largely due to the weak-supervision nature of MIL. A recent orthogonal work is based on BNNs \cite{schmidt2023probabilistic,yufei2022bayes}. It converts ABMIL networks into BNNs and model the uncertainty of attention scores by variational approximation. 
However, it focuses on improving classification performance, not specially for UE. Moreover, it also requires multiple forward passes like typical BNNs. By contrast, we propose a DBU method that quantifies the uncertainty in MIL with a single forward pass. In particular, our study focuses on UE and designs extensive experiments to assess the UE capability of MIL models. 

\section{Experiments}
\label{sec-exp}

\subsection{Experimental Setup}
\label{sec-exp-setup}

\textbf{Datasets}~(1) Two bag datasets
are \textbf{MNIST-bags} \cite{lecun1998mnist} and \textbf{CIFAR10-bags} \cite{krizhevsky2009learning}, following \citeauthor{ilse2018attention}~\citeyearpar{ilse2018attention} to generate bags for MIL. A bag is positive if it contains at least one instance with the class of interest. The class of interest is `9' for MNIST and `truck' for CIFAR10. FMNIST-bags \cite{xiao2017fashion}, KMNIST-bags \cite{clanuwat2018deep}, SVHN-bags \cite{yuval2011reading}, and Texture-bags \cite{cimpoi14describing} are taken as OOD (out-of-distribution) datasets. (2) One pathology dataset is \textbf{CAMELYON16} \cite{bejnordi2017diagnostic} for breast cancer metastasis detection. We synthesize its three distribution-shifted versions \cite{tellez2019quantifying} for detection and take TCGA-PRAD \cite{kandoth2013mutational} as OOD dataset. More details are provided in Appendix \ref{apx-gen-bag} and \ref{apx-dataset}. 

\textbf{Baselines}~(1) \textbf{Classical deep MIL networks}: Mean, Max, ABMIL \cite{ilse2018attention}, and DSMIL \cite{li2021dual}. They cover three popular MIL pooling operators, used to verify whether our MIREL could improve their UE performances. (2) \textbf{Related UE methods}. General ones, Deep Ensemble \cite{lakshminarayanan2017simple}, MC Dropout \cite{gal2016dropout}, and $\mathcal{I}$-EDL \cite{pmlr-v202-deng23b} are adopted. ABMIL is employed as the base network for them and our MIREL. Moreover, a BNN-based MIL method, Bayes-MIL \cite{yufei2022bayes}, is also compared. Refer to Appendix \ref{apx-net-imp-training} for implementation and training details. 

\textbf{Evaluation}~Both bag-level and instance-level uncertainty are quantified for evaluation. We mainly use two typical UE tasks, confidence evaluation (\textbf{Conf.}) and \textbf{OOD detection}, \textit{i.e.}, we assess whether a model could show more confidence (or less uncertainty) for those correctly-classified samples (\textit{vs.} misclassified ones) and those ID (in-distribution) samples (\textit{vs.} OOD ones). AUROC is their evaluation metrics. We calculate max probability as confidence measure by default. Following \citeauthor{pmlr-v202-deng23b} \citeyearpar{pmlr-v202-deng23b}, for EDL models, we adopt Max.$\alpha$ ($\mathop{\max}_{c}\alpha_c$) and $\alpha_0$ ($\sum_{c}\alpha_c$) as confidence measures in Conf. and OOD detection, respectively. Classification accuracy (\textbf{Acc.}) is listed \textit{only for reference}. Each model is run with 5 seeds, and we report the mean and standard deviation of evaluation metrics. 

\subsection{MNIST-bags}

\textbf{Main results}~are shown in Table \ref{table-mnist}. (1) \textbf{Comparing the deep MIL networks with and without our MIREL}, we have three main observations. 
(i) In instance-level Conf., our MIREL always helps existing MIL networks to perform better, with an average improvement of 16.49\%. 
(ii) In bag-level OOD detection, the MIL networks with our MIREL exceed their counterparts by 7.03\% on average, in 7 out of 8 comparisons. At instance level, they present an average improvement of 9.26\%, in 5 out of 8 comparisons. 
(iii) Overall, our MIREL could often enhance the performance of deep MIL networks by large margins in terms of UE, especially for Max, ABMIL, and DSMIL. 
(2) \textbf{Compared with related UE methods}, our MIREL always shows better UE performances except in bag-level Conf.; particularly, at instance level, our MIREL leads runner-up by 7.62\% $\sim$ 8.41\%. These comparative results suggest that our MIREL is an effective and preferable approach for MIUE. 

\begin{figure}[htbp]
\vskip 0.1in
\begin{center}
\centerline{\includegraphics[width=\columnwidth]{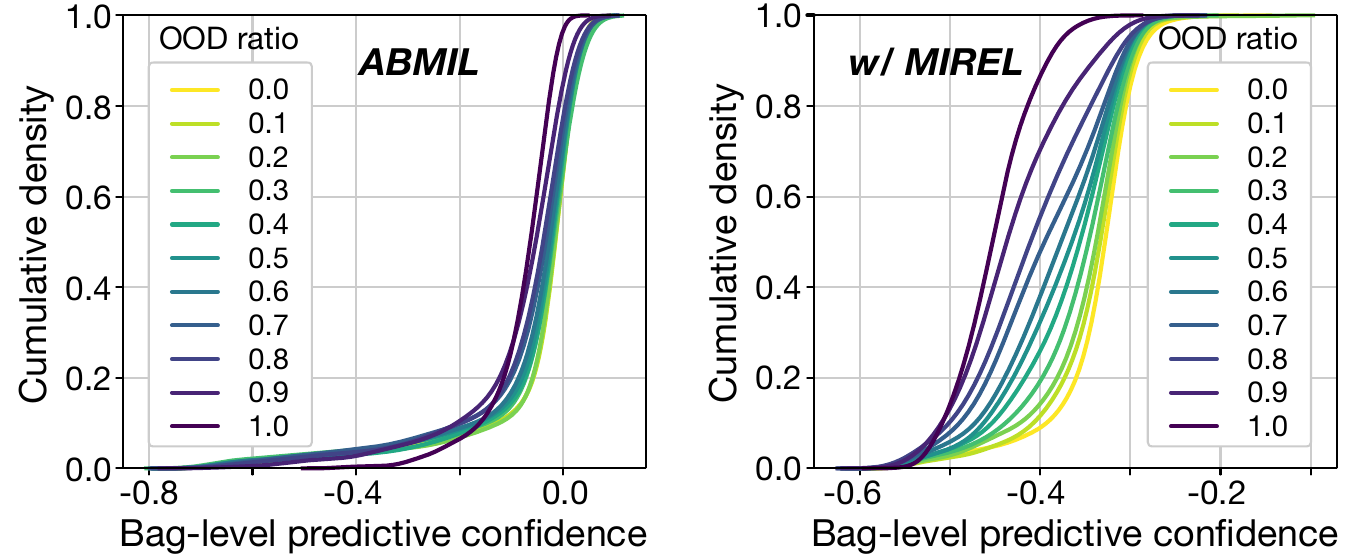}}
\caption{Distribution of bag-level predictive confidence (negative expected entropy). Different ratios of OOD (FMNIST) instances are set to assess the UE capability of MIL models.}
\label{mnist-abmil-bag-unc}
\end{center}
\vskip -0.1in
\end{figure}

\textbf{Uncertainty analysis}~is carried out to further examine the UE capability of our method. (1) \textbf{Bag-level}. We use OOD instances to randomly replace the instances from ID test bags, according to a specific target ratio of OOD instances. 
From the results shown in Fig. \ref{mnist-abmil-bag-unc}, we find that it is hard for ABMIL to identify the bags with different degrees of anomalies; but interestingly, when combined with our MIREL, ABMIL shows plausible reflections on those various abnormal bags. 
(2) \textbf{Instance-level}. We show results in Fig. \ref{mnist-abmil-ins-unc}. (i) From the result at top row, we observe that the ABMIL with our MIREL tends to predict clearly-higher uncertainty for numbers `4' and `7' while ABMIL does so only for `7'. The former result is in line with experiences since both `4' and `7' can be easily mistaken with `9' in hand-written numbers \cite{ilse2018attention}. 
(ii) From the result at bottom row, we see that the ABMIL w/ MIREL is more likely to predict lower confidences for OOD instances than ABMIL.
These results further confirm that our MIREL could assist classical MIL networks to capture uncertainty. More uncertainty analysis on KMNIST, $\alpha_0$, and DSMIL are given in Appendix \ref{apx-mnist-unc-analysis}.

\begin{figure}[htbp]
\vskip 0.1in
\begin{center}
\centerline{\includegraphics[width=\columnwidth]{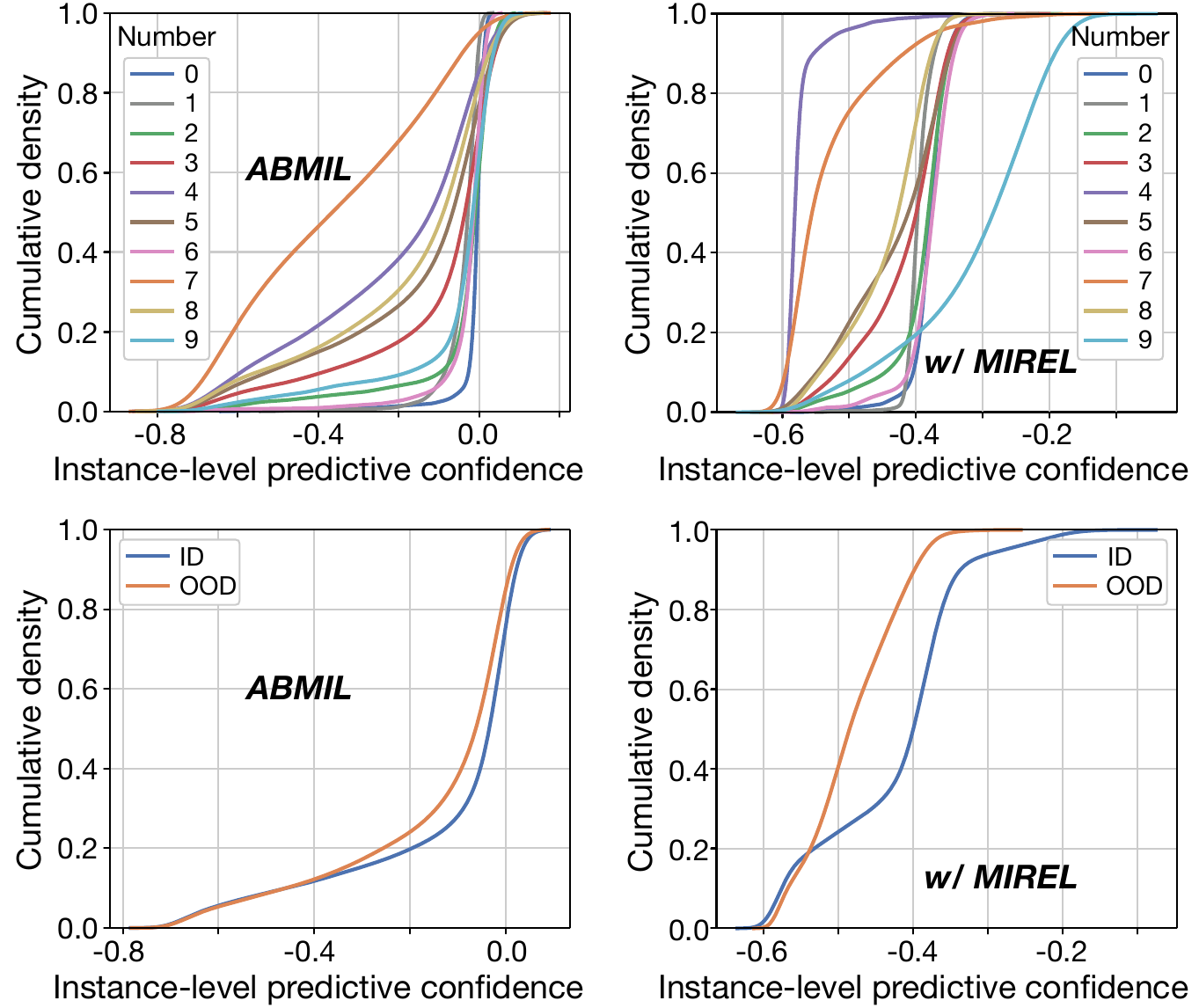}}
\caption{Distribution of instance-level predictive confidence (negative expected entropy). Top row is the result of MNIST instances, where `9' is the number of interest (positive instance). Bottom row is the result of ID (MNIST) and OOD (FMNIST) instances.}
\label{mnist-abmil-ins-unc}
\end{center}
\vskip -0.1in
\end{figure}

\begin{table*}[ht]
\caption{Ablation study on the ABMIL with our MIREL using \textbf{MNIST-bags}.}
\label{table-mnist-abl}
\vskip 0.1in
\begin{center}
\begin{scriptsize}
\tabcolsep=0.162cm
\begin{tabular}{ccc|ccccc|ccccc}
\toprule
\multicolumn{2}{c}{\underline{\textbf{\ \ Loss\ \ }}}   & \multirow{2}{*}{\textbf{Ins.}} & \multicolumn{5}{c|}{\underline{\textbf{\ \ Bag-level\ \ }}}            & \multicolumn{5}{c}{\underline{\textbf{\ \ Instance-level\ \ }}}       \\
 $\mathcal{L}_{\mathcal{I}\text{-EDL}}$ & $\mathcal{L}_{\text{MIREL}}$ & & \textbf{Acc.} & \textbf{Conf.} & \textbf{OOD-F} & \textbf{OOD-K} & \textbf{$\overline{\textit{UE}}$} & \textbf{Acc.} & \textbf{Conf.} & \textbf{OOD-F} & \textbf{OOD-K} & \textbf{$\overline{\textit{UE}}$} \\
\midrule
 & & $a_k$  & 95.74{\tiny\ $\pm$ 0.38} & \textbf{86.91}{\tiny\ $\pm$ 0.98} & 82.93{\tiny\ $\pm$ 4.81} & 74.37{\tiny\ $\pm$ 4.84} & 81.41 & 75.03{\tiny\ $\pm$ 0.28} & 61.28{\tiny\ $\pm$ 0.86} & 63.68{\tiny\ $\pm$ 1.00} & 52.63{\tiny\ $\pm$ 1.07} & 59.20  \\
 &  & $T$ & 95.74{\tiny\ $\pm$ 0.38} & \textbf{86.91}{\tiny\ $\pm$ 0.98} & 82.93{\tiny\ $\pm$ 4.81} & 74.37{\tiny\ $\pm$ 4.84} & 81.41 & 87.03{\tiny\ $\pm$ 1.42} & 84.30{\tiny\ $\pm$ 3.45} & 64.64{\tiny\ $\pm$ 4.92} & 52.67{\tiny\ $\pm$ 4.23} & 67.20  \\
 \checkmark &  & $a_k$ & 96.08{\tiny\ $\pm$ 0.20} & 86.78{\tiny\ $\pm$ 0.87} & 85.51{\tiny\ $\pm$ 7.56} & 73.15{\tiny\ $\pm$ 3.87} & 81.82 & 75.45{\tiny\ $\pm$ 0.13} & 60.72{\tiny\ $\pm$ 1.46} & 63.91{\tiny\ $\pm$ 1.31} & 54.14{\tiny\ $\pm$ 2.19} & 59.59 \\
 \checkmark & & $T$  & 96.08{\tiny\ $\pm$ 0.20} & 86.78{\tiny\ $\pm$ 0.87} & 85.51{\tiny\ $\pm$ 7.56} & 73.15{\tiny\ $\pm$ 3.87} & 81.82 & 85.19{\tiny\ $\pm$ 0.64} & 87.67{\tiny\ $\pm$ 1.11} & 73.52{\tiny\ $\pm$ 5.66} & 56.63{\tiny\ $\pm$ 1.66} & 72.60      \\
 \checkmark & \checkmark & $R$ & 96.48{\tiny\ $\pm$ 0.22} & 86.63{\tiny\ $\pm$ 1.32} & \textbf{92.84}{\tiny\ $\pm$ 0.60} & \textbf{79.95}{\tiny\ $\pm$ 4.12} & \textbf{86.47} & 87.71{\tiny\ $\pm$ 0.67} & \textbf{90.73}{\tiny\ $\pm$ 1.31} & \textbf{78.13}{\tiny\ $\pm$ 2.19} & \textbf{67.02}{\tiny\ $\pm$ 1.94} & \textbf{78.63} \\  
\bottomrule
\end{tabular}
\end{scriptsize}
\end{center}
\vskip -0.1in
\end{table*}

\begin{table*}[ht]
\caption{Comparison with related UE methods on \textbf{CAMELYON16}. \textbf{OOD-PRAD} means that TCGA-PRAD is taken as OOD data. The baseline of this experiment is \red{vanilla ABMIL without any additional UE techniques}. \textbf{$\overline{\textit{UE}}$} is the average metrics on two UE tasks. 
}
\label{table-c16}
\vskip 0.1in
\begin{center}
\begin{scriptsize}
\begin{tabular}{l|cccc|cccc}
\toprule
\multirow{2}{*}{\textbf{Method}} & \multicolumn{4}{c|}{\underline{\textbf{\ \ Bag-level\ \ }}}            & \multicolumn{4}{c}{\underline{\textbf{\ \ Instance-level\ \ }}}       \\
 & \textbf{Acc.} & \textbf{Conf.} & \textbf{OOD-PRAD} & \textbf{$\overline{\textit{UE}}$} & \textbf{Acc.} & \textbf{Conf.} & \textbf{OOD-PRAD} & \textbf{$\overline{\textit{UE}}$} \\
\midrule
Baseline & \textcolor[RGB]{128,128,128}{86.77{\tiny\ $\pm$ 0.77}} & \textcolor[RGB]{128,128,128}{\textbf{72.05}{\tiny\ $\pm$ 1.72}} & \textcolor[RGB]{128,128,128}{41.90{\tiny\ $\pm$ 2.91}} & \textcolor[RGB]{128,128,128}{56.98} & \textcolor[RGB]{128,128,128}{96.07{\tiny\ $\pm$ 0.01}} & \textcolor[RGB]{128,128,128}{49.87{\tiny\ $\pm$ 3.28}} & \textcolor[RGB]{128,128,128}{31.34{\tiny\ $\pm$ 0.85}} & \textcolor[RGB]{128,128,128}{40.60}   \\ \midrule
Deep Ensemble & 86.93{\tiny\ $\pm$ 0.63} & 70.44{\tiny\ $\pm$ 1.79} & 39.62{\tiny\ $\pm$ 3.32} & 55.03 & 96.08{\tiny\ $\pm$ 0.02} & 49.62{\tiny\ $\pm$ 2.53} & 28.16{\tiny\ $\pm$ 1.07} & 38.89  \\
MC Dropout  & 87.09{\tiny\ $\pm$ 1.37} & 67.50{\tiny\ $\pm$ 5.92} & 45.66{\tiny\ $\pm$ 5.48} & 56.58 & 96.05{\tiny\ $\pm$ 0.00} & 56.35{\tiny\ $\pm$ 2.20} & 33.93{\tiny\ $\pm$ 2.05} & 45.14  \\
$\mathcal{I}$-EDL & 87.72{\tiny\ $\pm$ 0.63} & 57.48{\tiny\ $\pm$ 8.07} & 72.43{\tiny\ $\pm$ 11.98} & 64.95 & 96.05{\tiny\ $\pm$ 0.01} & 45.41{\tiny\ $\pm$ 5.33} & 32.06{\tiny\ $\pm$ 1.49} & 38.74   \\  
Bayes-MIL & 86.61{\tiny\ $\pm$ 1.11} & 66.91{\tiny\ $\pm$ 6.73} & 48.77{\tiny\ $\pm$ 7.97} & 57.84 & 97.27{\tiny\ $\pm$ 0.28} & \textbf{82.12}{\tiny\ $\pm$ 9.12} & 54.53{\tiny\ $\pm$ 21.34}  & 68.33  \\ 
\textbf{MIREL}  & 87.09{\tiny\ $\pm$ 1.07} & 61.62{\tiny\ $\pm$ 6.85} & \textbf{82.51}{\tiny\ $\pm$ 8.34} & \textbf{72.06} & 97.79{\tiny\ $\pm$ 0.71} & 77.85{\tiny\ $\pm$ 5.69} & \textbf{67.85}{\tiny\ $\pm$ 5.71} & \textbf{72.85}   \\
\bottomrule
\end{tabular}
\end{scriptsize}
\end{center}
\vskip -0.1in
\end{table*}

\textbf{Ablation study}~is conducted on our MIREL to verify the effectiveness of its components. Different loss functions and instance estimators are adopted. More details are shown in Appendix \ref{apx-net-imp-training}.
From the results shown in Table \ref{table-mnist-abl}, there are three main findings. 
(1) \textbf{For our derived instance estimator $T$}, it leads attention-based scoring proxy ($a_k$) at instance level by large margins (8\% and 13.01\%) in terms of overall UE performance. 
(2) \textbf{For the adopted $\mathcal{L}_{\mathcal{I}\text{-EDL}}$}, it could help to enhance instance-level UE performances in the presence of $T$; no obvious effect is observed in other cases. 
(3) \textbf{For our new residual estimator $R$} trained with $\mathcal{L}_{\text{MIREL}}$, it boosts not only instance-level but also bag-level UE performances. Particularly, its performance improvements in OOD detection range from 4.61\% to 10.39\%. These findings could demonstrate the effectiveness of those components proposed in our MIREL. \textbf{More studies}, including i) the optimization strategies for $R(\mathbf{x})$ and ii) adopting $T(\mathbf{x})$ for related UE methods, are presented in Appendix \ref{apx-mnist-abl-study}. 

\textbf{More experiments}~\textbf{(1)} The results on \textbf{CIFAR10-bags}, including main results (\ref{apx-cifar-main}), uncertainty analysis (\ref{apx-cifar-unc-analysis}), and ablation studies (\ref{apx-cifar-abl-study}), are given in Appendix \ref{apx-cifar-res}. \textbf{(2) A synthetic MIUE experiment} is presented in Appendix \ref{apx-syn-exp-vis}, in order to intuitively understand the behavior of our weakly-supervised residual instance estimator $R(\mathbf{x})$. 

\begin{figure}[htbp]
\vskip 0.1in
\begin{center}
\centerline{\includegraphics[width=\columnwidth]{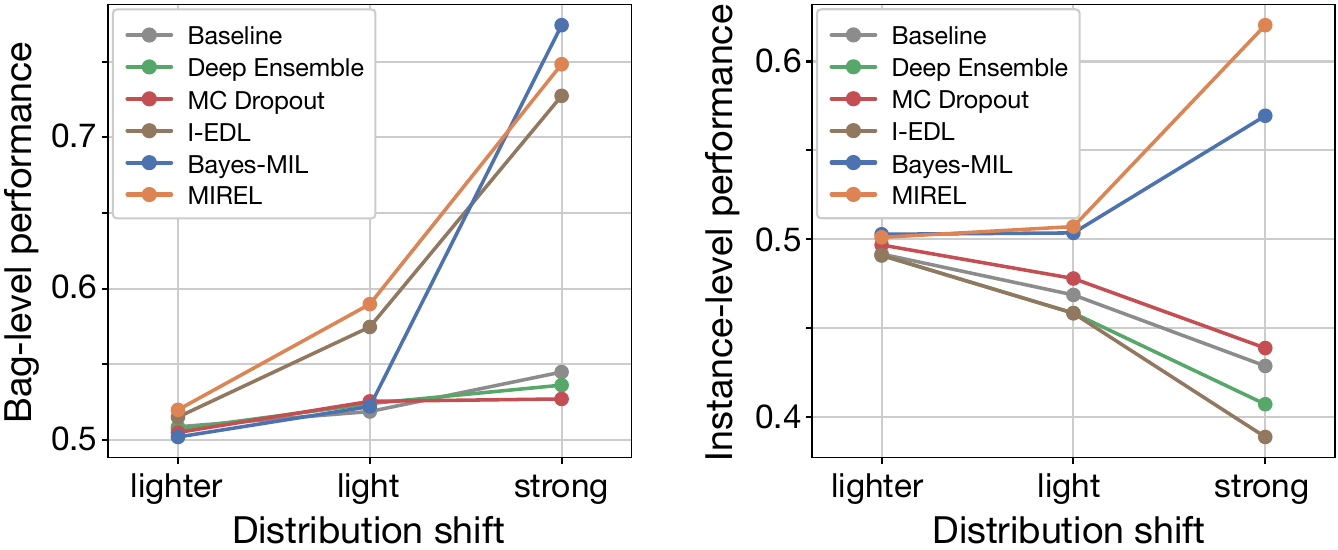}}
\caption{Performance of compared UE methods in distribution shift detection on \textbf{CAMELYON16}. The test samples of CAMELYON16 are shifted with different degrees of image noises for detection. Refer to Appendix \ref{apx-wsi-result-ds} for complete numerical results.}
\label{c16-dist-shift}
\end{center}
\vskip -0.1in
\end{figure}

\subsection{Histopathology Dataset}
\label{sec-c16-result}

\textbf{Main results} are exhibited in Table \ref{table-c16}. We mainly compare our method with related UE methods in this experiment.
There are four main observations from these results. 
(1) Our MIREL obtains the best overall UE performance at both bag and instance levels, surpassing the second best method by 7.11\% and 4.52\%, respectively. 
(2) Our MIREL improves the overall performance of ABMIL (baseline) in UE by considerable margins, 15.08\% and 32.35\% at bag and instance level, respectively. 
(3) Especially, our method shows impressive results in OOD detection, with a margin of more than 10\% over others at both instance and bag level. 
\red{(4) The UE methods not specially proposed for MIL often obtain the AUROC even less than 0.5 in OOD detection. It is because, the pathology images from TCGA-PRAD are near-OOD data and near-OOD detection is usually more challenging than far-OOD detection for these UE methods.}
These four observations suggest that our MIREL has the potential to be applied in real-world applications. 

\textbf{Distribution shift (DS) detection}~is a more challenging task than OOD detection, so it is further adopted to test our method. We generate three distribution-shifted test sets by adding routine noises to original CAMELYON16 test images, called \textit{lighter}, \textit{light}, and \textit{strong} according to noise strengths. Refer to Appendix \ref{apx-dataset} for experimental details.
Detection result is shown in Fig. \ref{c16-dist-shift}.
(1) \textbf{Bag-level}. Lighter DS is hard to detect for all the adopted UE methods (only about 0.52). Our MIREL can detect both light and strong DS with the best or the second best performance, while the competitive Bayes-MIL fails to detect light DS with an AUROC less than 0.53. 
(2) \textbf{instance-level}. All the compared methods \textit{not specially} for MIL, consistently show meaningless results (AUROC $<$ 0.5). By contrast, our MIREL detects strong DS with an AUROC of 0.62, better than the Bayes-MIL with an AUROC of 0.57. This experiment could further verify the superiority of our MIREL in MIUE. 

\section{\red{Limitation and }Future Work}

\red{Although our baseline approach MIREL shows promising results in MIUE, there are still some limitations in our experiments. 
First, since multi-instance bag is usually expensive in computation, our MIL datasets are limited in scale. Larger datasets (\textit{e.g.}, with more than 10,000 bags) would be better for more comprehensive validation. In addition, we take AUROC as the main metric to quantitatively evaluate the performance of UE methods. Additional calibration metrics, \textit{e.g.}, Expected Calibration Error (ECE) or Brier Score, would help to evaluate more holistically.}

In the future, there are some directions worth further research. 
(1) The optimization strategy for the weakly-supervised posterior $p(\bm{\theta_{\mathrm{w}}}|\mathcal{D})$. It could be further improved to provide a tighter upper bound for more accurate UE at instance level. (2) Seeking other efficient UE methods, such as distance-aware UE and feature space regularization, as stated in \citeauthor{pmlr-v162-postels22a}~\citeyearpar{pmlr-v162-postels22a}. (3) More general settings beyond binary MIL, \textit{e.g.}, multi-label MIL, as they cover more practical applications~\cite{zhou2012multi}.

\section{Conclusion}
This paper addresses a new MIUE problem and presents a baseline scheme, \textit{Multi-Instance Residual Evidential Learning}. 
In this scheme, we propose to model bag-level predictive uncertainty using a Dirichlet-based posterior distribution parameterized by general MIL networks. 
In particular, at weakly-supervised instance level, we derive a new residual estimator through the Fundamental Theorem of Symmetric Functions for instance-level UE. Moreover, without complete instance labels, we propose a weakly-supervised evidential optimization strategy for that residual estimator. 
Different UE tasks and extensive experiments demonstrate that our MIREL could often outperform other related UE methods. In addition, it can be applied to existing MIL networks, effectively assisting them in improving MIUE performances, especially at instance level. 
Since MIL has close connections with many real-world and safe-critical applications, it is of great importance and highly anticipated to enhance the reliability of MIL systems through MIUE. 
Our work could inspire more research on investigating MIUE, paving the way to explore uncertainty estimation in more weakly-supervised settings. 

\section*{Acknowledgements}

\red{This work was supported in part by the Fundamental Research Funds for the Central Universities under Grant ZYGX2022YGRH015 and in part by the National Natural Science Foundation of China (NSFC) under Grant 61972072. 
The authors would like to thank Jianghua Tian and Mao Dai for their encouragement and helpful feedback, and anonymous reviewers for their constructive comments to improve this work.}

\section*{Impact Statements}

This paper presents work whose goal is to advance the field of Machine Learning. There are many potential societal consequences of our work, none which we feel must be specifically highlighted here.


\bibliography{main}
\bibliographystyle{icml2024}

\newpage
\appendix
\onecolumn

\section{Derivation and Proof}

In this section, \ref{apx-der-bayes} gives the derivation of Eq.(\ref{eq8}) to clarify the connection of our bag-level predictive uncertainty modeling with Bayesian frameworks. \ref{apx-der-ins} presents the proof details of Corollary \ref{corol:ins} and Proposition \ref{prop:ins} (in Section \ref{sec-ins-est}). Lastly, in \ref{apx-der-insloss}, we justify our weakly-supervised optimization strategy (in Section \ref{ins-edl}) that aims to provide a more suitable $\bm{\hat{\theta}_{\mathrm{w}}}$. 

\subsection{Derivation of Eq.(\ref{eq8})}
\label{apx-der-bayes}
From the perspective of Bayesian \cite{neal2012bayesian}, for a new bag input $X^{*}$, its predictive distribution can be written by Eq.(\ref{eq5}): 
$$
P(Y^{*}|X^{*},\mathcal{D})=\int P(Y^{*}|X^{*},\bm{\omega}) p(\bm{\omega}|\mathcal{D}) d \bm{\omega}.
$$
By introducing a new distribution $p(\bm{\mu}|X^{*},\bm{\omega})$, we can re-write the equation above as follows:
\begin{equation}
\begin{aligned}
\int P(Y^{*}|X^{*},\bm{\omega}) p(\bm{\omega}|\mathcal{D}) d \bm{\omega} & =\int \int P(Y^{*}|\bm{\mu}) p(\bm{\mu}|X^{*},\bm{\omega}) p(\bm{\omega}|\mathcal{D}) d \bm{\mu} d \bm{\omega}\\
& = \int P(Y^{*}|\bm{\mu})\Bigl[\int p(\bm{\mu}|X^{*},\bm{\omega}) p(\bm{\omega}|\mathcal{D}) d \bm{\omega}\Bigr] d \bm{\mu}\\
& = \int P(Y^{*}|\bm{\mu}) p(\bm{\mu}|X^{*},\mathcal{D}) d \bm{\mu}\\
\end{aligned}
\end{equation}
As a result, $P(Y^{*}|\bm{\mu})$ can be taken as the new model. $p(\bm{\mu}|X^{*},\mathcal{D})$ is the distribution over model parameters conditioned the input bag $X$ and the given bag dataset $\mathcal{D}$. It is also referred to as the estimate of \textit{distributional uncertainty} given model uncertainty. 
However, the marginalization of the equation above is intractable. To tackle this, a point estimate of model parameters $\hat{\bm{\omega}}$ is often assumed to satisfy $p(\bm{\omega}|\mathcal{D})= \delta(\bm{\omega}-\hat{\bm{\omega}})$ \cite{malinin2018predictive}. Hence, 
$$
\int P(Y^{*}|X^{*},\bm{\omega}) p(\bm{\omega}|\mathcal{D}) d \bm{\omega} = \int P(Y^{*}|\bm{\mu}) p(\bm{\mu}|X^{*},\mathcal{D}) d \bm{\mu} \approx \int P(Y^{*}|\bm{\mu}) p(\bm{\mu}|X^{*}, \hat{\bm{\omega}}) d \bm{\mu}.
$$
$p(\bm{\mu}|X^{*}, \hat{\bm{\omega}})$ is exactly the posterior Dirichlet distribution given in our bag-level predictive uncertainty modeling. Accordingly, quantifying its uncertainty becomes tractable, as there is a closed-form analytical solution for those commonly-used uncertainty measures (refer to Appendix \ref{apx-unc-measure} for details). 

\subsection{Proof of Corollary \ref{corol:ins} and Proposition \ref{prop:ins}}
\label{apx-der-ins}

\textit{\textbf{Corollary} \ref{corol:ins}} (to Theorem \ref{thm:symfunc}). Given a scoring function for a set of instances $X=\{\mathbf{x}_1, \dots , \mathbf{x}_{K}\}$, written as $S(X) = g\big(\sum_{k} f(\mathbf{x}_k)\big)\in \mathbb{R}$ where $k\in[1,K]$, a scoring function for any single instance can be written as
$$
T=g\circ f,
$$
and $T(\mathbf{x})\in \mathbb{R}$ for an instance $\mathbf{x}$.
\begin{proof}
Without loss of generality, we assume $f(\mathbf{x})\in \mathbb{R}^M$. Given that the input of $g(\cdot)$ is $\sum_{k} f(\mathbf{x}_k)$ in $S(X)$, the domain of $g(\cdot)$ is in $\mathbb{R}^M$, since $\sum_{k} f(\mathbf{x}_k)\in \mathbb{R}^M.$
Therefore, $g(\cdot)$ can take $f(\mathbf{x})$ as input. Namely, there is a feasible function $T(\mathbf{x})=g(f(\mathbf{x}))\in \mathbb{R}$. Further, $S(X)$ is stated as a bag scoring function in Theorem \ref{thm:symfunc}. Hence, the $T(\mathbf{x})$, which has the same decision function $g(\cdot)$ as $S(X)$, can also be cast a scoring function specially for instances. 
\end{proof}

\textit{\textbf{Proposition} \ref{prop:ins}}. Let $S(\cdot)$ be a classifier for a bag of instances $X=\{\mathbf{x}_k\}_{k=1}^K$ and satisfy $S(X) = g\big(\sum_{k} f(\mathbf{x}_k)\big)$. For any bag $X$ and its label $Y\in\{0,1\}$, further assume $S$ can predict bags precisely: $S(X)=Y$. Then, there exists an estimator with $T=g\circ f$ for any single instance $\mathbf{x}$, such that $T(\mathbf{x})=y$, where $y\in\{0,1\}$ is the label of $\mathbf{x}$.
\begin{proof}
According to Corollary \ref{corol:ins}, given $S(X) = g\big(\sum_{k} f(\mathbf{x}_k)\big)$, there is an instance-level estimator $T$ that can be written as $T(\mathbf{x})=g(f(\mathbf{x}))$. 
With the existence of $T$, we need to prove $T(\mathbf{x})=y$, for any single instance $\mathbf{x}$ and its label $y$. 

Recall that, Theorem \ref{thm:symfunc}, which gives the bag scoring function $S(X)= g\big(\sum_{k} f(\mathbf{x}_k)\big)$, holds as before or under weak conditions, when the form of instance pooling, $\sum_{k} f(\mathbf{x}_{k})$, is replaced by others, such as i) $\mathrm{mean}$, ii) $\mathrm{max}$ \cite{qi2017pointnet}, and iii) attention-based MIL pooling \cite{ilse2018attention}. Next, we finish the proof with the basics of MIL. 

First of all, for any instance $\mathbf{x}$, we assume that there is a virtual bag as follows: $$X_{\text{vir}}=\{\underbrace{\mathbf{x},\cdots,\mathbf{x}}_{n}\}.$$ Based on the classical MIL assumption, \textit{i.e.}, $Y=\mathop{\max}_k \{y_k\}$, we have $$Y_{\text{vir}}=\mathop{\max}_{\mathbf{x}\in X_{\text{vir}}}\{y\}=y,$$
where $Y_{\text{vir}}$ is the ground-truth of $X_{\text{vir}}$. Then, we discuss three cases for the form of instance pooling in MIL:

\textbf{Case (1)}: $\mathrm{mean}$,
$$
S(X_{\text{vir}}) = g\Big(\frac{1}{n}\sum_{\mathbf{x}\in X_{\text{vir}}} f(\mathbf{x})\Big) = g(f(\mathbf{x})).
$$

\textbf{Case (2)}: $\mathrm{max}$ \cite{qi2017pointnet},
$$
S(X_{\text{vir}}) = g\Big(\mathop{\max}_{\mathbf{x}\in X_{\text{vir}}} \{f(\mathbf{x})\}\Big) = g(f(\mathbf{x})).
$$

\textbf{Case (3)}: attention \cite{ilse2018attention}. Combining with Eq.(\ref{eq2}), there is 
$$
S(X_{\text{vir}}) = g\Big(\sum_{\mathbf{x}\in X_{\text{vir}}} \frac{\mathop{\exp}\big(t(f(\mathbf{x}))\big)}{\sum_{\tau=1}^{n}\mathop{\exp}\big(t(f(\mathbf{x}))\big)} f(\mathbf{x})\Big) = g\Big(\sum_{\mathbf{x}\in X_{\text{vir}}} \frac{1}{n} f(\mathbf{x})\Big) =  g(f(\mathbf{x})).
$$

Since there is $S(X)=Y$, we have
$$\hat{Y}_{\text{vir}}=S(X_{\text{vir}})=Y_{\text{vir}},$$
where $\hat{Y}_{\text{vir}}$ is the prediction of $X_{\text{vir}}$. 
Eventually, 
\begin{equation}
\left.
\begin{aligned}
Y_{\text{vir}}=\mathop{\max}_{\mathbf{x}\in X_{\text{vir}}}\{y\} =  y\\
S(X_{\text{vir}}) =  g(f(\mathbf{x}))\\ 
\hat{Y}_{\text{vir}}=S(X_{\text{vir}})=Y_{\text{vir}}
\end{aligned}
\right\}\ \ \  \Longrightarrow\ \ \ \hat{y}=T(\mathbf{x}) =  g(f(\mathbf{x}))=y
\end{equation}

\end{proof}

\subsection{Justification for the Objective Function $\mathcal{L}_{\text{MIREL}}$ Given in Eq.(\ref{eq-loss-ins})}
\label{apx-der-insloss}

This section presents the details of our justification for $\mathcal{L}_{\text{MIREL}}$, including the proof of Proposition \ref{prop:insloss}.

Considering any bag $X\in \mathcal{D}$ and its label $Y\in\{0,1\}$, there are $X=\{\mathbf{x}_{1},\cdots, \mathbf{x}_{K}\}$ and $Y=\mathop{\max}\{y_{1},\cdots,y_{K}\}$ in MIL. To train an unbiased instance estimator $R(\mathbf{x})$ given instance labels, an ideal loss function could be written as follows:
\begin{equation}
\mathop{\min} \mathcal{L}_{R}=\mathop{\min} \sum\limits_{k=1}^{K}\frac{1}{K}\mathcal{L}(\bm{\alpha}_k,y_k),
\label{apx-eq-ideal}
\end{equation}
where $\mathcal{L}$ is a loss function derived from MLE for evidence learning, and $\bm{\alpha}_k$ is the concentration parameter of predictive Dirichlet distribution for the $k$-th instance. Note that here $\mathcal{L}$ is a loss function for $\bm{\alpha}_k$,
\textit{i.e.}, the prediction of the $k$-th instance, rather than a raw instance input.

When $Y=0$, we have $y_{k}=0\ \forall k=[1,K]$. In this case, $\mathcal{L}_{R}$ can be directly used for optimization given instance labels. When $Y=1$, we only know $\mathop{\max}_k\{y_k\}=1$, without complete instance labels. In this case, there are three alternative strategies for the training of $R(\mathbf{x})$.

\textbf{Strategy (1)}: naive instance label assignment. Assuming $y_{k}=1\ \forall k=[1,K]$, an alternative objective function is
$$\mathop{\min} \mathcal{L}_{1}=\mathop{\min} \sum\limits_{k=1}^{K}\frac{1}{K}\mathcal{L}(\bm{\alpha}_k,1).$$

\textbf{Strategy (2)}: naive instance label assignment and weighted loss. It is a weighted variant of strategy (1), written as follows:
$$\mathop{\min} \mathcal{L}_{2}=\mathop{\min} \sum\limits_{k=1}^{K}\frac{w_k}{\sum_{\tau=1}^{K} w_{\tau}}\ \mathcal{L}(\bm{\alpha}_k,1),$$
where $w_{k}$ is the probability of the $k$-th instance being positive. Here, $w_{k}$ could be estimated by $T(\mathbf{x}_k)=g(f(\mathbf{x}_k))$.

\textbf{Strategy (3)}: weighted instance evidence. $w_{k}$ is used to aggregate instance evidences in order to learn from key positive instances. This is the strategy we adopt, as written in Eq.(\ref{eq-loss-ins}). We simplify its objective function and re-write it by
$$\mathop{\min} \mathcal{L}_{\text{ins}}^{+}=\mathop{\min} \mathcal{L}\Big(\sum\limits_{k=1}^{K}\frac{w_k}{\sum_{\tau=1}^{K} w_{\tau}}\bm{\alpha}_k,1\Big).$$
Next, we prove that strategy (3) can provide a tighter upper bound for ideal Eq.(\ref{apx-eq-ideal}) than the other two under given conditions. 
Before that, we first give Proposition \ref{apx-prop-cmpl2} and Proposition \ref{apx-prop-cmpl1}, as well as their proof, as follows:

\begin{proposition}
\label{apx-prop-cmpl2}
$\mathcal{L}_{\text{ins}}^{+}\leq \mathcal{L}_2$ holds in instance evidential learning for a convex objective function $\mathcal{L}(\bm{\alpha},y=1)$.
\end{proposition}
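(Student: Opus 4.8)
The plan is to derive Proposition \ref{apx-prop-cmpl2} as an immediate application of Jensen's inequality. First I would record the structural fact that the coefficients $\bar{w}_k := w_k / \sum_{\tau=1}^{K} w_\tau$ appearing in both $\mathcal{L}_{\text{ins}}^{+}$ and $\mathcal{L}_2$ form a probability vector: each $w_k = \mathbb{E}_{\bm{\nu}\sim Dir(\bm{\alpha}_k^{(T)})}\, p(y=1|\bm{\nu})$ is the expectation of a probability, hence $w_k \in [0,1]$, and after normalization one has $\bar{w}_k \geq 0$ with $\sum_{k} \bar{w}_k = 1$. Consequently $\widetilde{\bm{\alpha}} = \sum_{k}\bar{w}_k \bm{\alpha}_k$ is a genuine convex combination of the instance-level concentration parameters $\bm{\alpha}_1,\dots,\bm{\alpha}_K$.

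Next I would check that the inequality is invoked inside the domain of $\mathcal{L}(\cdot,1)$: since each $\bm{\alpha}_k = \bm{e}_k + 1$ has every component at least $1$ and the set $\{\bm{\alpha} : \alpha_i \geq 1\ \forall i\}$ is convex, the combination $\widetilde{\bm{\alpha}}$ also has every component at least $1$, so $\mathcal{L}(\widetilde{\bm{\alpha}},1)$ is well defined and the convexity hypothesis applies on the convex hull of $\{\bm{\alpha}_1,\dots,\bm{\alpha}_K\}$. The core step is then Jensen's inequality for the convex map $\bm{\alpha}\mapsto\mathcal{L}(\bm{\alpha},1)$ against the weights $\bar{w}_k$:
\begin{equation*}
\mathcal{L}_{\text{ins}}^{+} = \mathcal{L}\Big(\sum_{k=1}^{K} \bar{w}_k\,\bm{\alpha}_k,\ 1\Big) \leq \sum_{k=1}^{K} \bar{w}_k\,\mathcal{L}(\bm{\alpha}_k,1) = \mathcal{L}_2 ,
\end{equation*}
which is exactly the claimed bound (equality holds, e.g., when the $\bm{\alpha}_k$ with $\bar{w}_k>0$ all coincide, or when $\mathcal{L}(\cdot,1)$ is affine on their convex hull).

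I do not anticipate a substantive obstacle here: convexity of $\mathcal{L}(\bm{\alpha},1)$ in $\bm{\alpha}$ is assumed in the hypothesis (and is established separately for the Fisher-Information-based EDL loss, see Appendix \ref{apx-loss-func}), so the argument reduces to the bookkeeping above — confirming that the $\bar{w}_k$ genuinely form a probability vector and that the admissible parameter domain is convex, so that Jensen's inequality is applied to a legitimate in-domain convex combination. The same reasoning delivers the first inequality of Proposition \ref{prop:insloss}; the second inequality there, comparing the $\bar{w}_k$-weighted average with the uniform one, instead needs the additional monotone-ordering hypothesis relating the ordering of the $\bar{w}_k$ to that of the $\mathcal{L}(\bm{\alpha}_k,1)$, which is not required for the present statement.
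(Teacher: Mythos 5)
Your proof is correct and follows essentially the same route as the paper: both observe that the $\bar{w}_k$ form a probability vector and apply Jensen's inequality to the convex map $\bm{\alpha}\mapsto\mathcal{L}(\bm{\alpha},1)$, and your extra check that the convex combination $\widetilde{\bm{\alpha}}$ stays in the admissible domain is harmless additional care. One small correction to a side remark: the paper does \emph{not} establish convexity of the Fisher-Information-based EDL loss --- its additional explanation explicitly notes that this loss contains non-convex terms and is not a strict convex function of $\bm{\alpha}$ --- but since convexity is a hypothesis of the proposition rather than a derived fact, this does not affect the validity of your argument.
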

\begin{proof}
By definition, $w_{k}\geq 0$. $\mathcal{L}(\bm{\alpha},y=1)$ is a given convex function \textit{w.r.t} $\bm{\alpha}$.
Therefore, by Jensen's inequality we have 
\begin{equation}
\mathcal{L}_{\text{ins}}^{+} = \mathcal{L}\Big(\sum\limits_{k=1}^{K}\frac{w_k}{\sum_{\tau=1}^{K} w_{\tau}}\bm{\alpha}_k,1\Big) \leq \sum\limits_{k=1}^{K}\frac{w_k}{\sum_{\tau=1}^{K} w_{\tau}}\mathcal{L}(\bm{\alpha}_k,1) = \mathcal{L}_2.
\label{apx-eq-cmp-l2}
\end{equation}
\end{proof}

\textbf{\ding{45}~Additional explanation:} The condition given in Proposition \ref{apx-prop-cmpl2}, a convex $\mathcal{L}$ \textit{w.r.t} model prediction, could be satisfied for common loss functions, \textit{e.g.}, the negative logarithm \textit{w.r.t} prediction, or the mean square error between prediction and target. 
Although the $\mathcal{L}(\bm{\alpha},y=1)$ used for our instance evidential learning (Appendix \ref{apx-loss-func}) contains non-convex terms and is not a strict convex function \textit{w.r.t} $\bm{\alpha}$, we still find that $\mathcal{L}_{\text{ins}}^{+}$ could be better than $\mathcal{L}_2$ in terms of overall UE performance, demonstrated by the ablation study on optimization strategy (refer to Appendix \ref{apx-mnist-abl-study} and Appendix \ref{apx-cifar-abl-study}).  
 
\begin{proposition}
\label{apx-prop-cmpl1}
$\mathcal{L}_{2}\leq \mathcal{L}_1$ holds in instance evidential learning when there is $w_1\geq w_2\geq \cdots \geq w_K$ for $\mathcal{L}(\bm{\alpha}_1,1)\leq \mathcal{L}(\bm{\alpha}_2,1) \leq \cdots \leq \mathcal{L}(\bm{\alpha}_K,1)$.
\end{proposition}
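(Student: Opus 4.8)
The plan is to recognize Proposition \ref{apx-prop-cmpl1} as a Chebyshev-type sum inequality for oppositely ordered sequences. Write $\ell_k := \mathcal{L}(\bm{\alpha}_k,1)$ and $\bar{w}_k := w_k/\sum_{\tau=1}^{K} w_\tau$, so that $\mathcal{L}_1 = \frac{1}{K}\sum_{k=1}^{K}\ell_k$ and $\mathcal{L}_2 = \sum_{k=1}^{K}\bar{w}_k\ell_k$ with $\bar{w}_k\geq 0$ and $\sum_{k}\bar{w}_k = 1$. Normalizing the $w_k$ by the positive constant $\sum_\tau w_\tau$ preserves their order, so the hypothesis becomes: $(\bar{w}_k)$ is non-increasing while $(\ell_k)$ is non-decreasing. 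The target $\mathcal{L}_2\leq\mathcal{L}_1$ is then exactly the statement that averaging $(\ell_k)$ against oppositely ordered weights does not exceed the uniform average.

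First I would reduce the claim to showing $\sum_{k}d_k\ell_k\leq 0$, where $d_k := \bar{w}_k - \tfrac{1}{K}$. Note $\sum_{k}d_k = 0$ and $(d_k)$ is non-increasing. I would then establish that the partial sums $D_m := \sum_{j=1}^{m}d_j$ satisfy $D_m\geq 0$ for all $m$, with $D_0 = D_K = 0$: since $(d_k)$ is non-increasing there is a threshold $p$ with $d_k\geq 0$ for $k\leq p$ and $d_k\leq 0$ for $k>p$; for $m\leq p$ all summands in $D_m$ are nonnegative, and for $m>p$ we have $D_m = -\sum_{j>m}d_j\geq 0$ because those terms are nonpositive. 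Applying Abel summation (summation by parts) and using $D_K = 0$ gives
\begin{equation}
\sum_{k=1}^{K}d_k\ell_k = \sum_{k=1}^{K-1}D_k(\ell_k-\ell_{k+1}).
\end{equation}
Each $D_k\geq 0$ and each $\ell_k-\ell_{k+1}\leq 0$ by the assumed monotonicity of $\mathcal{L}(\bm{\alpha}_\cdot,1)$, so every summand is nonpositive, hence $\sum_{k}d_k\ell_k\leq 0$, i.e.\ $\mathcal{L}_2\leq\mathcal{L}_1$. An equivalent route is to invoke Chebyshev's sum inequality directly for the oppositely sorted sequences $(\bar{w}_k)$ and $(\ell_k)$, obtaining $K\sum_k\bar{w}_k\ell_k\leq\big(\sum_k\bar{w}_k\big)\big(\sum_k\ell_k\big) = \sum_k\ell_k$, and dividing by $K$.

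The step requiring the most care is the sign argument for the partial sums $D_m$, since everything else is bookkeeping; I would write out the threshold-index argument above explicitly rather than hand-wave "concavity of partial sums." I would also add a one-line remark on edge cases: the normalization is well defined because evidence-derived weights are strictly positive, and ties in $(\bar{w}_k)$ or $(\ell_k)$ only strengthen the conclusion (possibly to equality), so the paired ordering in the hypothesis entails no loss of generality. Combining this proposition with Proposition \ref{apx-prop-cmpl2} then yields the full chain $\mathcal{L}_{\text{ins}}^{+} = \mathcal{L}_2 \wedge \mathcal{L}_2\leq\mathcal{L}_1$ claimed in Proposition \ref{prop:insloss}, after matching $\bar{w}_k$ here with the normalized weights there.
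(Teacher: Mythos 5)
Your proof is correct, and its skeleton coincides with the paper's: both arguments reduce to showing that the partial sums of the centered normalized weights are nonnegative (your $D_m=\sum_{j\le m}\bar{w}_j-\tfrac{m}{K}$ is exactly the paper's $\Delta_m$) and then pairing this with the monotonicity of $\ell_k=\mathcal{L}(\bm{\alpha}_k,1)$ via a telescoping sum. The genuine differences are in how the two sub-steps are executed. For the partial-sum lemma, the paper argues by contradiction (assuming $\Delta_n<0$ forces every tail weight below $\tfrac{1}{K}$ and hence the total below $1$), whereas you give a direct sign-change argument: since $(d_k)$ is non-increasing and sums to zero, there is a threshold index splitting nonnegative from nonpositive terms, from which $D_m\ge 0$ follows on both sides of the threshold. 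Your version is shorter and exposes the structural reason the lemma holds. For the final step, your Abel summation identity $\sum_k d_k\ell_k=\sum_{k=1}^{K-1}D_k(\ell_k-\ell_{k+1})$ is the paper's explicit iteration written in closed form; the paper instead builds the inequality one index at a time by repeatedly inserting $\Delta_n\ell_n\le\Delta_n\ell_{n+1}$. Finally, your observation that the whole proposition is an instance of Chebyshev's sum inequality for oppositely ordered sequences is a genuinely different (and essentially one-line) route the paper does not take; it buys brevity at the cost of invoking a named classical result rather than a self-contained derivation. Your remarks on well-definedness of the normalization and on ties are sensible and fill small gaps the paper leaves implicit.
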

\begin{proof}
Let $$b_k=\frac{w_k}{\sum_{\tau=1}^{K} w_{\tau}}\ \forall k=1,\cdots,K.$$ 
Hence $\sum_{k=1}^{K}b_k=1$. Given $w_k\geq 0$ and $w_1\geq w_2\geq \cdots \geq w_K$, there is $$b_1\geq b_2\geq \cdots \geq b_K.$$ 

First of all, we prove the following inequality through proof by contradiction: $$\Delta_n = \sum_{k=1}^n b_k - \frac{n}{K} \geq 0.$$
Concretely, we assume $\Delta_n < 0$, \textit{i.e.}, $\sum_{k=1}^n b_k < \frac{n}{K}$. Given $b_1\geq b_2 \geq \cdots\geq b_n \geq b_q\ \forall q=n+1,\cdots, K$, we have
$$
\sum_{k=1}^n b_k < \frac{n}{K}\Longrightarrow \sum_{k=1}^n b_q\leq\sum_{k=1}^n b_k < \frac{n}{K}\Longrightarrow n b_q < \frac{n}{K} \Longrightarrow b_q < \frac{1}{K}.
$$
Further, by adding all $b_q$ into $\sum_{k=1}^n b_k$ and using $b_q<\frac{1}{K}$, there is 
$$
\sum_{k=1}^n b_k + \sum_{q=n+1}^K b_q < \frac{n}{K} + \sum_{q=n+1}^K \frac{1}{K}\Longrightarrow \sum_{k=1}^n b_k + \sum_{q=n+1}^K b_q < \frac{n}{K} + \frac{K-n}{K}\Longrightarrow\sum_{k=1}^n b_k + \sum_{q=n+1}^K b_q < 1.
$$
This contradicts $\sum_{k=1}^{K}b_k=1$. Namely, $\Delta_n = \sum_{k=1}^n b_k - \frac{n}{K} \geq 0$ holds.

Then, given $\mathcal{L}(\bm{\alpha}_1,1)\leq \mathcal{L}(\bm{\alpha}_2,1)$ and $\Delta_1=b_1-\frac{1}{K}\geq 0$, there is 
$$
\Big(b_1-\frac{1}{K}\Big) \mathcal{L}(\bm{\alpha}_1,1)\leq \Big(b_1-\frac{1}{K}\Big) \mathcal{L}(\bm{\alpha}_2,1)\ \ \Longrightarrow\ \  b_1\mathcal{L}(\bm{\alpha}_1,1) + \Big(\frac{2}{K}-b_1\Big)\mathcal{L}(\bm{\alpha}_2,1) \leq \frac{1}{K}\mathcal{L}(\bm{\alpha}_1,1) + \frac{1}{K}\mathcal{L}(\bm{\alpha}_2,1).
$$
Further, by introducing $\Delta_2\mathcal{L}(\bm{\alpha}_2,1)\leq \Delta_2\mathcal{L}(\bm{\alpha}_3,1)$ ($\Delta_2=b_1 + b_2 -\frac{2}{K}\geq 0$) into the inequality above, we have 
$$
b_1\mathcal{L}(\bm{\alpha}_1,1) + b_2\mathcal{L}(\bm{\alpha}_2,1) + \Big(\frac{3}{K}-b_1-b_2\Big)\mathcal{L}(\bm{\alpha}_3,1) \leq \frac{1}{K}\mathcal{L}(\bm{\alpha}_1,1) + \frac{1}{K}\mathcal{L}(\bm{\alpha}_2,1) + \frac{1}{K}\mathcal{L}(\bm{\alpha}_3,1).
$$
By analogy, we can obtain 
$$
\mathcal{L}_2 = b_1\mathcal{L}(\bm{\alpha}_1,1) + b_2\mathcal{L}(\bm{\alpha}_1,1) + \cdots + b_K\mathcal{L}(\bm{\alpha}_K,1) \leq
\frac{1}{K}\mathcal{L}(\bm{\alpha}_1,1) + \frac{1}{K}\mathcal{L}(\bm{\alpha}_2,1) + \cdots + \frac{1}{K}\mathcal{L}(\bm{\alpha}_K,1) = \mathcal{L}_1
$$
\end{proof}

\textbf{\ding{45}~Additional explanation:} The condition given in Proposition \ref{apx-prop-cmpl1}, $w_1\geq w_2\geq \cdots \geq w_K$ for $\mathcal{L}(\bm{\alpha}_1,1)\leq \mathcal{L}(\bm{\alpha}_2,1) \leq \cdots \leq \mathcal{L}(\bm{\alpha}_K,1)$, states that there is a higher weight for the instance whose prediction is closer to the expected evidence derived from positive instances. Here, we assume that the instance-level estimator $T=g\circ f$ could predict a higher $w_k$ for positive instances and a lower one for negative instances, in order to satisfy that condition approximately. 

With Proposition \ref{apx-prop-cmpl2} and Proposition \ref{apx-prop-cmpl1}, we give the proof of Proposition \ref{prop:insloss} as follows: 

\textit{\textbf{Proposition} \ref{prop:insloss}}. Let $\mathcal{L}(\bm{\alpha},y)$ be a loss function \textit{w.r.t} $\bm{\alpha}$ and $y$. For any positive bag $X=\{\mathbf{x}_1,\cdots,\mathbf{x}_K\}$, assume $\bar{w}_k\geq 0\ \forall k\in[1,K]$, $\sum_{k}\bar{w}_k=1$, and $\widetilde{\bm{\alpha}}=\sum_{k} \bar{w}_{k} \bm{\alpha}_{k}$. 
$
\mathcal{L}_{\text{ins}}^{+}=\mathcal{L}(\widetilde{\bm{\alpha}},1)\leq \sum_{k} \bar{w}_k\mathcal{L}(\bm{\alpha}_k,1)\leq \sum_{k} \frac{1}{K}\mathcal{L}(\bm{\alpha}_k,1)
$ 
holds in instance evidential learning, when $\mathcal{L}$ is a convex function \textit{w.r.t} $\bm{\alpha}$ and  there is $\bar{w}_1\geq \bar{w}_2\geq \cdots \geq \bar{w}_K$ for $\mathcal{L}(\bm{\alpha}_1,1)\leq \mathcal{L}(\bm{\alpha}_2,1) \leq \cdots \leq \mathcal{L}(\bm{\alpha}_K,1)$. 
\begin{proof}
Since $\sum_{k} \bar{w}_k\mathcal{L}(\bm{\alpha}_k,1)=\mathcal{L}_2$ and $\mathcal{L}$ is a convex function \textit{w.r.t} $\bm{\alpha}$, we have $\mathcal{L}_{\text{ins}}^{+}\leq\mathcal{L}_2$ according to Proposition \ref{apx-prop-cmpl2}. Further, $\sum_{k} \frac{1}{K}\mathcal{L}(\bm{\alpha}_k,1)=\mathcal{L}_1$ and those given conditions exactly satisfy the condition of Proposition \ref{apx-prop-cmpl1}, so $\mathcal{L}_2\leq\mathcal{L}_1$. Hence, there is $\mathcal{L}_{\text{ins}}^{+}\leq\mathcal{L}_2\leq\mathcal{L}_1$. 
Namely, $
\mathcal{L}_{\text{ins}}^{+}=\mathcal{L}(\widetilde{\bm{\alpha}},1)\leq \sum_{k} \bar{w}_k\mathcal{L}(\bm{\alpha}_k,1)\leq \sum_{k} \frac{1}{K}\mathcal{L}(\bm{\alpha}_k,1)
$ holds.
\end{proof}

Proposition \ref{prop:insloss} ensure that our objective function $\mathcal{L}_{\text{ins}}^{+}$ can provide a tighter upper bound than $\mathcal{L}_{1}$ and $\mathcal{L}_{2}$ for the ideal objective function Eq.(\ref{apx-eq-ideal}) under given conditions. This implies that our optimization strategy (3) could yield a more suitable weakly-supervised posterior $\bm{\hat{\theta}_{\mathrm{w}}}$ in instance evidential learning, such that $p(\bm{\theta_{\mathrm{w}}}|\mathcal{D})\approx \delta(\bm{\theta_{\mathrm{w}}}-\bm{\hat{\theta}_{\mathrm{w}}})$. Accordingly, we could approximate the intractable posterior in Eq.(\ref{eq6}) with $p(\bm{\nu}|\mathbf{x}^{*}, \mathcal{D})\approx p(\bm{\nu}|\mathbf{x}^{*}, \bm{\hat{\theta}_{\mathrm{w}}})$ for a new instance input $\mathbf{x}^{*}$, as follows: 
$$
P(y^{*}|\mathbf{x}^{*},\mathcal{D}) = \int P(y^{*}|\mathbf{x}^{*},\bm{\theta_{\mathrm{w}}}) p(\bm{\theta_{\mathrm{w}}}|\mathcal{D}) d \bm{\theta_{\mathrm{w}}} = \int P(y^{*}|\bm{\nu}) p(\bm{\nu}|\mathbf{x}^{*},\mathcal{D}) d \bm{\nu} \approx \int P(y^{*}|\bm{\nu}) p(\bm{\nu}|\mathbf{x}^{*}, \bm{\hat{\theta}_{\mathrm{w}}}) d \bm{\nu},
$$
where $\bm{\nu}$ is instance-level predictive probability. The above equation with Dirichlet distribution has a closed-form solution for instance-level uncertainty quantification, as elaborated in Appendix \ref{apx-um-dir}. The ablation studies on instance loss functions (refer to Appendix \ref{apx-mnist-abl-study} and Appendix \ref{apx-cifar-abl-study}) empirically demonstrate Proposition \ref{prop:insloss}. 

\section{Uncertainty Measures}
\label{apx-unc-measure}
This section mainly shows common measures for uncertainty quantification. Two predictive distributions, general Categorical distribution and related Dirichlet distribution, are considered here. This section is adapted from \citeauthor{malinin2018predictive} \citeyearpar{malinin2018predictive}, in order to provide readers with additional reference. 

\subsection{Measures for Predictive Categorical Distribution}
\label{apx-um-cate}
Given a predictive Categorical distribution for input $X$, $P(Y|X)$, its \textit{total uncertainty} could be quantified through two common measures as follows:

\textbf{(1) Max probability}~It is the probability of the predicted category, as a measure of confidence in prediction. Intuitively, a larger max probability means that model is more confident of its prediction. Max probability can be written as follows:
$$
\mathop{\max}_{i} P(Y=i|X),
$$
where $i\in[1,C]$ and $C$ is the total number of categories. 

\textbf{(2) Entropy}~By definition, it is calculated by
$$
\mathcal{H}[P(Y|X)] = - \sum\limits_{i=1}^{C} \big[P(Y=i|X) \mathop{\ln}P(Y=i|X)\big]
$$
A flat predictive distribution would yield a maximum $\mathcal{H}[P(Y|X)]$, implying high predictive uncertainty. 

Moreover, from the perspective of Bayesian \cite{neal2012bayesian}, consider the posterior distribution of model parameter given dataset $\mathcal{D}$, \textit{i.e.}, $p(\bm{\omega}|\mathcal{D})$. There is another measure commonly used for quantifying the \textit{model uncertainty} in prediction:
\begin{equation}
\underbrace{I[Y,\bm{\omega}|X,\mathcal{D}]}_{\text{Model Uncertainty}} = \underbrace{\mathcal{H}\big[\mathbb{E}_{p(\bm{\omega}|\mathcal{D})}P(Y|X,\bm{\omega})\big]}_{\text{Total Uncertainty}} - \underbrace{\mathbb{E}_{p(\bm{\omega}|\mathcal{D})}\big[\mathcal{H}[P(Y|X,\bm{\omega})]\big]}_{\text{Expected Data Uncertainty}}.
\label{apx-eq-mi-model}
\end{equation}

It is referred to as \textbf{Mutual information} (MI). As shown in the equation above, MI can be cast as the difference of \textit{total uncertainty} and \textit{expected data uncertainty}. The former is captured by $\mathcal{H}\big[\mathbb{E}_{p(\bm{\omega}|\mathcal{D})}P(Y|X,\bm{\omega})\big]$, \textit{i.e.}, the entropy of expected predictive distribution. The latter is captured by $\mathbb{E}_{p(\bm{\omega}|\mathcal{D})}\big[\mathcal{H}[P(Y|X,\bm{\omega})]\big]$, \textit{i.e.}, the expected entropy of predictive distribution. For traditional non-Bayesian NN models, MI is zero because their parameter is usually a point estimation.

\subsection{Measures for Predictive Dirichlet Distribution}
\label{apx-um-dir}
Consider a prediction with Dirichlet distribution (known as the conjugate prior of Categorical distribution):
$$
Dir(\bm{p}|\bm{\alpha})=\frac{\Gamma(\alpha_0)}{\prod_{i=1}^{C}\Gamma(\alpha_i)}\prod_{i=1}^{C}p_i^{\alpha_i-1},
$$
where $\bm{p}\in \mathcal{S}^{C-1}$ (a probability simplex with $C-1$ dimensions), $\bm{\alpha}=[\alpha_1,\cdots,\alpha_C]$, $\alpha_i\geq 0\ \forall i\in[1,C]$, $\Gamma(\cdot)$ is a \textit{gamma} function, and $\alpha_0=\sum_{i=1}^C \alpha_i$ often called the precision or Dirichlet strength. Its expected probability is as follows:
$$
\mathbb{E}[\bm{p}] = \Big[\frac{\alpha_1}{\alpha_0}, \frac{\alpha_2}{\alpha_0}, \cdots, \frac{\alpha_C}{\alpha_0}\Big].
$$
Next, we give common uncertainty measures for $Dir(\bm{p}|\bm{\alpha})$. All of them have a closed-form analytical solution. 

\textbf{Max probability and Entropy}~By simply taking $\mathbb{E}[\bm{p}]$ as prediction, they can be written as 
$$
\mathop{\max} \Big\{\frac{\alpha_i}{\alpha_0}\Big\}_{i=1}^{C} \text{\ \   and\ \  } \mathcal{H}\big[\mathbb{E}_{\bm{p}\sim Dir(\bm{\alpha})}P(Y|\bm{p})\big] = - \sum\limits_{i=1}^{C} \frac{\alpha_i}{\alpha_0} \mathop{\ln}\frac{\alpha_i}{\alpha_0},
$$
respectively, to capture the \textit{total uncertainty} in prediction. 

\textbf{Expected Entropy}~Different from the calculation of predictive entropy above, expected entropy is expressed as 
$$
\mathbb{E}_{\bm{p}\sim Dir(\bm{\alpha})}\big[\mathcal{H}[P(Y|\bm{p})]\big] = \int_{\mathcal{S}^{C-1}} Dir(\bm{p}|\bm{\alpha}) \Big(-\sum_{i=1}^{C}p_i\mathop{\ln}p_i\Big)  d \bm{p} = -\sum_{i=1}^{C} \frac{\alpha_i}{\alpha_0} \big(\psi(\alpha_i + 1) - \psi(\alpha_0+1) \big),
$$
where $\psi(\cdot)$ is a \textit{digamma} function defined as $\psi(x)=\frac{d}{dx}\mathop{\log}\Gamma(x)$. As mentioned in Eq.(\ref{apx-eq-mi-model}), expected entropy is usually utilized to measure \textit{data uncertainty}. Intuitively, it could capture the `peak' probabilities in $\mathbb{E}[\bm{p}]$. 

\textbf{Mutual Information}~By definition, mutual information (MI) can be written as follows:
$$
\underbrace{I[Y,\bm{p}|X,\mathcal{D}]}_{\text{Distributional Uncertainty}} = \underbrace{\mathcal{H}\big[\mathbb{E}_{\bm{p}\sim Dir(\bm{p}|X,\mathcal{D})}P(Y|\bm{p})\big]}_{\text{Total Uncertainty}} - \underbrace{\mathbb{E}_{\bm{p}\sim Dir(\bm{p}|X,\mathcal{D})}\big[\mathcal{H}[P(Y|\bm{p})]\big]}_{\text{Expected Data Uncertainty}}.
$$
This equation calculates the MI between $Y$ and the categorical $\bm{p}$, rather than the $\bm{\omega}$ written in Eq.(\ref{apx-eq-mi-model}). Thereby, $I[Y,\bm{p}|X,\mathcal{D}]$ is generally used to measure \textit{distributional uncertainty}. Assuming there is a sufficient point estimate $\hat{\bm{\omega}}$ satisfying $Dir(\bm{p}|X, \mathcal{D})\approx Dir(\bm{p}|X, \hat{\bm{\omega}})=Dir(\bm{p}|\bm{\alpha})$ where $\bm{\alpha}$ is the prediction of $X$ given the model parameter $\hat{\bm{\omega}}$, MI \cite{malinin2018predictive} could be approximated and calculated as follows:
\begin{equation}
\begin{aligned}
\underbrace{I[Y,\bm{p}|X,\mathcal{D}]}_{\text{Distributional Uncertainty}} & \approx \underbrace{\mathcal{H}\big[\mathbb{E}_{\bm{p}\sim Dir(\bm{\alpha})}P(Y|\bm{p})\big]}_{\text{Total Uncertainty}} - \underbrace{\mathbb{E}_{\bm{p}\sim Dir(\bm{\alpha})}\big[\mathcal{H}[P(Y|\bm{p})]\big]}_{\text{Expected Data Uncertainty}} \\
& = - \sum\limits_{i=1}^{C} \frac{\alpha_i}{\alpha_0} \mathop{\ln}\frac{\alpha_i}{\alpha_0} - \Bigg( -\sum_{i=1}^{C} \frac{\alpha_i}{\alpha_0} \big(\psi(\alpha_i + 1) - \psi(\alpha_0+1) \big) \Bigg)\\
& = -\sum_{i=1}^{C} \frac{\alpha_i}{\alpha_0} \Big(\mathop{\ln}\frac{\alpha_i}{\alpha_0} - \psi(\alpha_i + 1) + \psi(\alpha_0+1) \Big).
\end{aligned}
\label{apx-eq-mi-dist}
\end{equation}

\textbf{More measures}~involving concentration parameters $\bm{\alpha}=[\alpha_1, \cdots, \alpha_C]$, \textit{e.g.}, $\mathop{\max}_i \{\alpha_i\}$, $\alpha_0=\sum_{i=1}^{C} \alpha_i$, and $\frac{C}{\sum_{i=1}^{C} \alpha_i}$, are often utilized in Dirichlet-based uncertainty (DBU) models, to capture model uncertainty, as $\bm{\alpha}$ shapes the distribution of predictive probabilities. For more discussions about this, readers could refer to \citeauthor{ulmer2023prior} \citeyearpar{ulmer2023prior}. 

\section{Objective Function for Evidential Deep Learning}
\label{apx-loss-func}
For completeness, here we provide the details of Fisher Information-based objective function \cite{pmlr-v202-deng23b}. This section is adapted from \citeauthor{pmlr-v202-deng23b} \citeyearpar{pmlr-v202-deng23b}. 

The Fisher Information-based objective function employed by us, $\mathcal{L}_{\mathcal{I}\text{-EDL}}$, can be written as follows:
\begin{equation}
\begin{aligned}
\mathop{\min}\ \ & \mathbb{E}_{(X,Y)\sim \mathcal{P}} \mathbb{E}_{\bm{\mu}\sim Dir(\bm{\alpha})}\big[-\mathop{\log}p(Y|\bm{\mu},\bm{\alpha},\sigma^2)\big].\\
s.t.\ \ & \bm{\alpha}=\mathcal{F}(X)+1\\
& p(Y|\bm{\mu},\bm{\alpha},\sigma^2)=\mathcal{N}\big(Y|\bm{\mu},\sigma^2\mathcal{I}(\bm{\alpha})^{-1}\big)\\
& \mathcal{I}(\bm{\alpha})=\mathbb{E}_{Dir(\bm{\mu}|\bm{\alpha})} \Big[ - \frac{\partial^2\mathop{\log} Dir(\bm{\mu}|\bm{\alpha})}{\partial \bm{\alpha}\bm{\alpha}^{T}} \Big]
\end{aligned}
\end{equation}
In this function, $\bm{\alpha}$ is the concentration parameter of Dirichlet distribution predicted from $X$ by $\mathcal{F}(\cdot)$. Moreover, $p(Y|\bm{\mu},\bm{\alpha},\sigma^2)$ is assumed to be a multivariate Gaussian distribution $\mathcal{N}\big(Y|\bm{\mu},\sigma^2\mathcal{I}(\bm{\alpha})^{-1}\big)$, and $\mathcal{I}(\bm{\alpha})$ is referred to as Fisher Information Matrix (\textbf{FIM}) for $Dir(\bm{\alpha})$. 

Intuitively, FIM is introduced into the variance of predictive distribution to obtain a non-isotropic Normal distribution for the label generation of a specific sample. As a result, there is an adaptive weight provided by FIM. This weight is assigned to each class in eventual loss function, to adjust the information of each class contained in the sample. This could avoid the potential over-penalty of some classes in the supervision based on one-hot labels. 

Given a dataset $\{(X_j,Y_j)\}_{j=1}^N$, the eventual form of $\mathcal{L}_{\mathcal{I}\text{-EDL}}$ is as follows:
$$
\mathop{\min} \frac{1}{N} \sum_{j=1}^{N} \big( \mathcal{L}_j^{\mathcal{I}\text{-MSE}} - \lambda_1 \mathcal{L}_j^{\mathcal{|I|}} + \lambda_2 \mathcal{L}_j^{\text{KL}}  \big),
$$
where 
$$
\mathcal{L}_{j}^{\mathcal{I}\text{-MSE}}=\sum_{i=1}^{C}\left((y_{ji}-\frac{\alpha_{ji}}{\alpha_{j0}})^2+\frac{\alpha_{ji}(\alpha_{j0}-\alpha_{ji})}{\alpha_{j0}^2(\alpha_{j0}+1)}\right)\psi^{(1)}(\alpha_{ji}),
$$
$$
\mathcal{L}_j^{|\mathcal{I}|}=\sum_{i=1}^C\log\psi^{(1)}(\alpha_{ji})+\log\left(1-\sum_{i=1}^C\frac{\psi^{(1)}(\alpha_{j0})}{\psi^{(1)}(\alpha_{ji})}\right),
$$
$$
\mathcal{L}_j^\mathrm{KL}=\log\Gamma\Big(\sum_{i=1}^C\hat{\alpha}_{ji}\Big)-\log\Gamma(C)-\sum_{i=1}^C\log\Gamma(\hat{\alpha}_{ji}) +\sum_{i=1}^C(\hat{\alpha}_{ji}-1)\left[\psi(\hat{\alpha}_{ji})-\psi\Big(\sum_{c=1}^C\hat{\alpha}_{jc}\Big)\right],
$$
and $\lambda_1\geq 0, \lambda_2\geq 0$. Moreover, $\hat{\alpha}_{ji}=\alpha_{ji}(1-Y_{ji})+Y_{ji}\ \forall j\in[1,N], i\in [1,C]$ . $\psi(\cdot)$ is a \textit{digamma} function defined as $\psi(x)=\frac{d}{dx}\mathop{\log}\Gamma(x)$,
$\psi^{(1)}(\cdot)$ is a \textit{trigamma} function with $\psi^{(1)}(x)=\frac{d}{dx}\psi(x)$, and $\Gamma(\cdot)$ stands for a \textit{gamma} function. 
Readers could refer to \citeauthor{pmlr-v202-deng23b} \citeyearpar{pmlr-v202-deng23b} for derivation details. 

For better understanding, we briefly explain these terms as follows:

\textbf{(1) For the first term} $\mathcal{L}_{j}^{\mathcal{I}\text{-MSE}}$, it introduces a new $\psi^{(1)}(\alpha_{ji})$ to the MSE (mean square error) loss $\mathcal{L}_{j}^{\text{MSE}}$ frequently-used in EDL \cite{sensoy2018evidential}. Concretely, $$\mathcal{L}_{j}^{\text{MSE}}=\sum_{i=1}^{C}\left((y_{ji}-\frac{\alpha_{ji}}{\alpha_{j0}})^2+\frac{\alpha_{ji}(\alpha_{j0}-\alpha_{ji})}{\alpha_{j0}^2(\alpha_{j0}+1)}\right).$$ It is derived from a simple MSE-based Bayes risk function:
$$\mathcal{L}_{\text{MSE}}=\int ||Y-\bm{\mu}||_{2}^{2} Dir(\bm{\mu}|\bm{\alpha})d \bm{\mu}.$$ $\psi^{(1)}(\alpha_{ji})$ is specially added into $\mathcal{L}_{j}^{\text{MSE}}$, in order to encourage the model to focus more on the class with low evidence. 

\textbf{(2) For the second term} $-\mathcal{L}_j^{|\mathcal{I}|}$, it is equal to $-\mathop{\log}|\mathcal{I}(\bm{\alpha}_j)|$, \textit{i.e.},
$$
-\mathcal{L}_j^{|\mathcal{I}|} = -\mathop{\log}|\mathcal{I}(\bm{\alpha}_j)|.
$$
It is taken to avoid the overconfidence caused by excessive evidence. 

\textbf{(3) For the final term} $\mathcal{L}_j^\mathrm{KL}$, its original form \cite{sensoy2018evidential} is as follows:
$$
\mathcal{L}_j^\mathrm{KL}=\mathrm{KL}(Dir(\bm{\mu}|\hat{\bm{\alpha}}_j)||Dir(\bm{\mu}|\bm{1})),
$$
where $\mathrm{KL}(\cdot)$ is a function measuring Kullback-Leibler (KL) divergence. Moreover,  $\hat{\bm{\alpha}}_j=\bm{\alpha}_j \odot (1-\bm{Y}_j) + \bm{Y}_j$ where $\bm{Y}_j$ stands for the one-hot label of $Y_j$. It indicates manually masking the predicted parameter corresponding to the ground-truth class. Therefore, $\mathcal{L}_j^\mathrm{KL}$ can be view as a loss term aiming to suppress the evidence of irrelevant classes. 

\section{Experimental Details}

This section provides the additional details of experimental setup (Section \ref{sec-exp-setup}), including bag generation (\ref{apx-gen-bag}), datasets (\ref{apx-dataset}), and implementation and network training (\ref{apx-net-imp-training}). Our source code has been submitted as Supplementary Material.

\subsection{Bag Generation}
\label{apx-gen-bag}

Following \citeauthor{ilse2018attention}~\citeyearpar{ilse2018attention}, we generate a bag dataset for MIL from a given single-instance dataset. \textbf{(1) Steps}: At first, we set \textbf{a class of interest} (as positive class) and this class is from the given dataset. Then, we randomly select a certain number of samples from the given dataset to form a multi-instance bag. This bag is positive if it contains at least one sample from the class of interest; otherwise, it is negative. Bag length follows a Normal distribution $\mathcal{N}(10, 2)$.
\textbf{(2) More settings}: positive and negative bags are generated sequentially in a loop to obtain a balanced bag dataset. The ratio of positive instances roughly follows an Uniform distribution $\mathrm{U}(0,1)$
for positive bags. No that, we ensure that all the instances of training bags are only sampled from the training set of the given dataset, and do so for validation and test bags.  

\subsection{Datasets}
\label{apx-dataset}

\textbf{MNIST-bags} \cite{lecun1998mnist}~Following the dataset setting in \citeauthor{ilse2018attention}~\citeyearpar{ilse2018attention}, there are 500, 100, and 1000 generated MNIST bags in training, validation, and test set, respectively. Each bag contains multiple MNIST images (each image with the size of $1\times 28 \times 28$). The number `9' is set as the class of interest, as it is easily mistaken with `7' and `4' in hand-written numbers. In OOD detection tasks, \textbf{FMNIST-bags} \cite{xiao2017fashion} and \textbf{KMNIST-bags} \cite{clanuwat2018deep} are taken as OOD MIL datasets. Both of two contain 1000 OOD bags. The length of these OOD bags also follows $\mathcal{N}(10, 2)$. We show bag examples in Fig. \ref{apx-ex-bag}.

\textbf{CIFAR10-bags} \cite{krizhevsky2009learning}~We generate 6000, 1000 and 2000 CIFAR bags for training, validation, and test, respectively. This dataset is more complex than MNIST-bags since its instances, CIFAR10 images, are more diverse than MNIST ones. A single instance (image) in each bag is with the size of $3\times 32 \times 32$. We randomly select \textbf{`truck'} as the class of interest. In OOD detection tasks, \textbf{SVHN-bags} \cite{yuval2011reading} and \textbf{Texture-bags} \cite{cimpoi14describing} are two OOD MIL datasets; and each contains 2,000 OOD bags. OOD instances are resized to $3\times 32 \times 32$, in order to keep the same size as CIFAR10 instances. Similarly, the length of OOD bags follows $\mathcal{N}(10, 2)$. Bag examples are exhibited in Fig. \ref{apx-ex-bag}.

\begin{figure*}[tp]
\vskip 0.1in
\begin{center}
\centerline{\includegraphics[width=0.7\columnwidth]{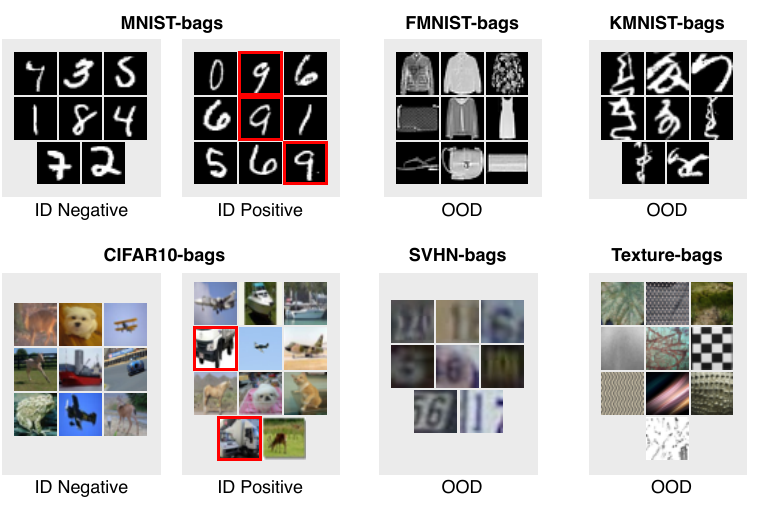}}
\caption{Bag examples of each bag dataset. Red box indicates the class of interest, `9' and `truck' for MNIST and CIFAR10, respectively.}
\label{apx-ex-bag}
\end{center}
\vskip -0.1in
\end{figure*}

\textbf{CAMELYON16} \cite{bejnordi2017diagnostic}~It is a real-world pathology dataset, originally proposed for breast cancer lymph node metastasis detection and frequently used for evaluating MIL algorithms. 
We obtain 270 and 129 histopathology WSIs (Whole-Slide Images) for training and test, respectively, provided by official organizers. There are 111 tumor slides and 159 normal slides in the training set, and 49 tumor slides and 80 normal slides in the test set.
We leave 15\% training samples as a validation set. Please refer to Fig. \ref{apx-ex-c16-bag}(a) for WSI examples. More details of CAMELYON16 are as follows:
\begin{itemize}
    \item \textbf{Preprocessing}:~Since a single WSI has extremely-high resolution (\textit{e.g.}, $40,000 \times 40,000$ pixels), we process each image into a bag of feature vectors with CLAM \cite{lu2021data} by three steps: i) tissue region selection, ii) image patching, and iii) patch feature extraction. Each patch is an image with $256\times 256$ pixels from the WSI at $20\times$ magnification. Feature vector is extracted from patch image by a fixed (frozen) deep network. This fixed network is pre-trained on the patches of \textit{training samples} by self-supervised learning, provided by \citeauthor{li2021dual}~\citeyearpar{li2021dual}. As a result, there are 11,753 instances in each WSI bag on average, and each instance is a feature vector with the length of 512.
    \item \textbf{OOD dataset}:~The histopathology WSIs of \textit{prostate cancer} are used as the OOD samples of CAMELYON16 (breast cancer), following \citeauthor{linmans2023predictive}~\citeyearpar{linmans2023predictive}. These WSIs are from \textbf{TCGA-PRAD} (The Cancer Genome Atlas Prostate Adenocarcinoma\footnote{Available at https://portal.gdc.cancer.gov/projects/TCGA-PRAD}) \cite{kandoth2013mutational}, containing 449 diagnostic images. Their preprocessing is the same as that of CAMELYON16. Finally, there are 3,484 instances in each bag on average. TCGA-PRAD samples are shown in Fig. \ref{apx-ex-c16-bag}(a). They often present differences with CAMELYON16 in cell distribution and tissue morphology. 
    \item \textbf{Distribution shift dataset}:~Given the test WSIs of CAMELYON16, we synthesize its three shifted versions using the image noises with different strengths, called \textit{lighter}, \textit{light}, and \textit{strong}. Specifically, Gaussian Blurring or HED (Hematoxylin-Eosin-DAB) color variation is applied to the patch images of each test WSI, to simulate the possible noises in digital pathology, following \citeauthor{tellez2019quantifying}~\citeyearpar{tellez2019quantifying} and \citeauthor{liu2024advmil}~\citeyearpar{liu2024advmil}. The patch image samples with different noises are shown in Fig. \ref{apx-ex-c16-bag}(b). Eventually, all the patch images with noises are transformed into instances (feature vectors) using the same feature extractor mentioned in WSI preprocessing. 
\end{itemize}

\begin{figure*}[tp]
\vskip 0.1in
\begin{center}
\centerline{\includegraphics[width=0.8\columnwidth]{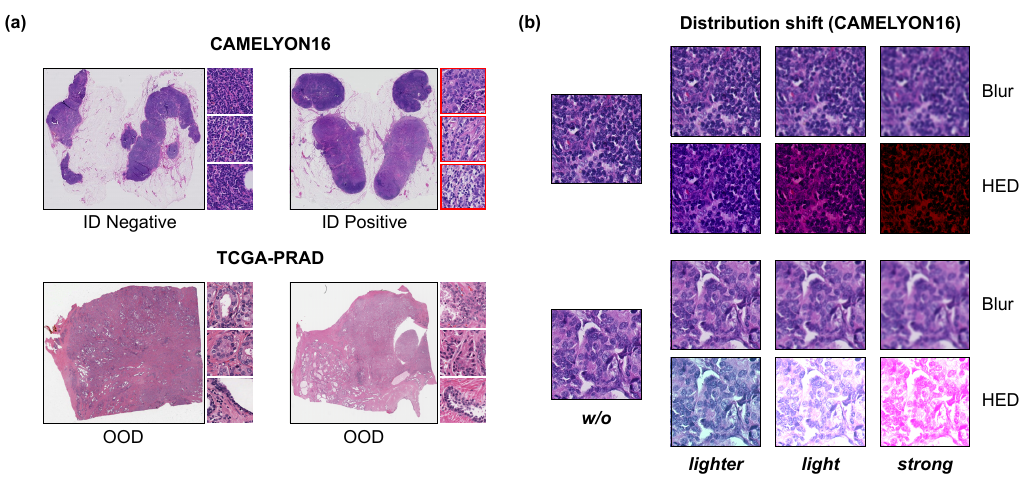}}
\caption{(a) Samples of CAMELYON16 and TCGA-PRAD. Red box indicates the patch with tumorous cells. (b) Patch samples of distribution shift CAMELYON16. Blur and HED mean the image noise of Gaussian Blurring and HED color variation, respectively.}
\label{apx-ex-c16-bag}
\end{center}
\vskip -0.1in
\end{figure*}

\subsection{Implementation and Network Training}
\label{apx-net-imp-training}

\textbf{Deep MIL networks}~The most representative deep MIL networks, \textit{e.g.}, Mean, Max, ABMIL \cite{ilse2018attention}, and DSMIL \cite{li2021dual}, are adopted in our experiments. From a unified perspective, these networks comprise three key parts as follows. (1) \textbf{Instance encoder}. We employ LeNet \cite{lecun1998mnist} as the encoder to transform MNIST images into instance embeddings, following \citeauthor{ilse2018attention}~\citeyearpar{ilse2018attention}. For CIFAR10-bags, a modified AlexNet \cite{NIPS2012_c399862d} is adopted. For CAMELYON16, we directly use an MLP (Multi-Layer Perceptron) layer, since image patches have been transformed into feature vectors. (2) \textbf{MIL pooling operator}. Mean and max-based pooling are used for Mean and Max MIL networks, respectively. For ABMIL and DSMIL, we follow their respective implementation in MIL pooling. \red{Specifically, for ABMIL, a standard attention mechanism, rather than its gated variant, is adopted in MIL pooling, because it is more efficient in computation and is often competitive in performance, compared to its gated variant \cite{ilse2018attention,shi2020loss}.} (3) \textbf{Classification head}. It is a fully-connected layer with negative and positive output nodes. 

\textbf{Related UE methods}~For the classical UE method, \textbf{Deep Ensemble} \cite{lakshminarayanan2017simple}, by default we train 10 ABMIL networks with different random seeds. For \textbf{MC Dropout} \cite{gal2016dropout}, Dropout layers are used in the instance encoder of ABMIL, with a drop rate of 0.25; 10 estimates are sampled from the network and their mean is taken as prediction. For BNN-based \textbf{Bayes-MIL} \cite{yufei2022bayes}, we follow its implementation to sample 16 estimates. Lastly, for \textbf{$\mathcal{I}$-EDL} \cite{pmlr-v202-deng23b}, we modify the classification head of ABMIL into an evidential output layer \cite{sensoy2018evidential}, and adopt a $\mathcal{I}$-EDL loss function for evidential learning. 

\textbf{MIREL}~Its implementation details are as follows. \textbf{(1) Bag-level network:} Our method could be combined with existing deep MIL networks for MIUE. Thus, we directly follow their implementations and employ them to implement our bag-level networks. In particular, we replace their conventional classification head with an evidential output layer \cite{sensoy2018evidential}, and utilize $\mathop{\exp}(\cdot)$ for the implementation of $\mathcal{A}(\cdot)$.
\textbf{(2) instance-level network:} For our proposed residual instance estimator, $R(\mathbf{x})=T(\mathbf{x}) + r_{\pi}(f_{\psi}(\mathbf{x}))$, $T(\cdot)$ is exactly the bag-level network, $f_{\psi}(\cdot)$ is the instance encoder of bag-level network, and $r_{\pi}(\cdot)$ is simply implemented by an MLP layer. To make instance-level evidential learning more stable, we adopt $\mathop{\mathrm{tanh}}(\cdot)$ to let $r_{\pi}$ output a scale value in $[-1, 1]$. This scale value expresses a residual estimate proportional to $T(\mathbf{x})$. 
\textbf{(3) Loss function:} Apart from Fisher Information-based objective function (Appendix \ref{apx-loss-func}), a RED loss  \cite{pandey2023learn} is also adopted to avoid zero-evidence regions in EDL, \red{as stated in the last paragraph of Section \ref{ins-edl}}.
\red{\textbf{(4) Optimization strategy for $\mathcal{L_{\text{MIREL}}}$:} In the experiments on MNIST-bags and CAMELYON16, the bag-level parameter $\psi$ and the instance-level parameter $\pi$ are optimized by $\mathcal{L_{\text{MIREL}}}$ in weakly-supervised instance residual evidential learning. While on CIFAR10-bags, only the instance-level parameter $\pi$ is involved in the optimization of $\mathcal{L_{\text{MIREL}}}$. Namely, we specially freeze the bag-level parameter $\psi$ in optimizing $\mathcal{L_{\text{MIREL}}}$. We will elaborate on this setting in Appendix \ref{apx-cifar-abl-study}.}
\textbf{(5) Hyper-parameters:} Following \citeauthor{pmlr-v202-deng23b} \citeyearpar{pmlr-v202-deng23b}, the coefficient $\lambda_1$ of $-\mathcal{L}_j^{|\mathcal{I}|}$ is set by a grid-search over (0.1, 0.05, 0.01, 0.005, 0.001), and the coefficient $\lambda_2$ of $\mathcal{L}_j^{\text{KL}}$ is set to $\mathop{\min}(1,\frac{t}{10})\in[0,1]$, where $t$ is the index of current training epoch. 
 
\textbf{Network training}~Learning rate, by default, is set to 0.0001 and it decays by a factor of 0.5 when the criterion on validation set does not decrease within 10 epochs. The other default settings are as follows: an epoch number of 200, a batch size of 1 (bag), a gradient accumulation step of 8, and an optimizer of Adam with a weight decay rate of $1 \times 10^{-5}$. Early stopping is applied when the criterion on validation set does not decrease within 20 epochs by default. The sum of loss and error is adopted as the criterion. Moreover, EDL-based models, \textit{e.g.}, $\mathcal{I}\text{-EDL}$ and our MIREL, are trained using the same $\mathcal{L}_{\mathcal{I}\text{-EDL}}$; while the other standard classification models use $\mathcal{L}_{\text{BCE}}$, \textit{i.e.}, a BCE (binary cross-entropy) loss. In ablation study, three types of models are trained. Their details are shown in Table \ref{table-baselines}. 

\begin{table*}[htbp]
\caption{Details of the models used in ablation study.}
\label{table-baselines}
\begin{center}
\begin{small}
\begin{tabular}{l|cc}
\toprule
Network & Loss & Instance prediction \\
\midrule
\multirow{2}{*}{Deep MIL} & \multirow{2}{*}{$\mathcal{L}_{\text{BCE}}$} & $a_k$ (attention score) \\
 &  & $T(\mathbf{x})$ \\ \cmidrule(lr){1-3}
\multirow{2}{*}{Deep MIL + EDL} & \multirow{2}{*}{$\mathcal{L}_{\mathcal{I}\text{-EDL}} \red{ + \mathcal{L}_{\text{RED}}}$} & $a_k$ (attention score) \\
 &  & $T(\mathbf{x})$ \\ \cmidrule(lr){1-3}
Deep MIL + MIREL & $\mathcal{L}_{\mathcal{I}\text{-EDL}} + \mathcal{L}_{\text{MIREL}} \red{ + \mathcal{L}_{\text{RED}}}$ & $R(\mathbf{x})$ \\ 
\bottomrule
\end{tabular}
\end{small}
\end{center}
\end{table*}

\section{Additional Results on MNIST-bags}

\subsection{Uncertainty Analysis}
\label{apx-mnist-unc-analysis}

\textbf{ABMIL}~(1) The result of uncertainty analysis on KMNIST-bags is shown in Fig. \ref{apx-abmil-kmnist}. From this result, we could see that i) the ABMIL w/ MIREL can provide more discriminative predictive uncertainty for the bags with different OOD ratios, compared to original ABMIL; ii) our method can assist ABMIL in distinguishing ID (MNIST) instances and OOD (KMNIST) ones more accurately.
(2) $\alpha_0$ is another commonly-used uncertainty measure for EDL models. Its distribution at instance and bag levels are shown in Fig. \ref{apx-abmil-alpha0}. These results suggest that our MIREL could also detect OOD samples through the concentration parameter $\alpha_0$.  

\begin{figure*}[ht]
\vskip 0.1in
\begin{center}
\centerline{\includegraphics[width=\columnwidth]{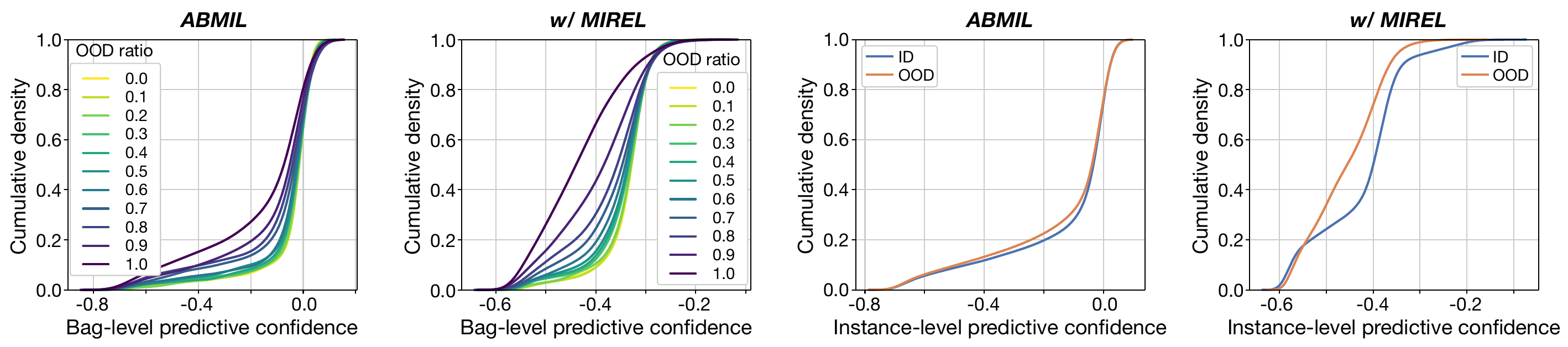}}
\caption{Distribution of bag-level and instance-level predictive confidence (negative expected entropy). \textbf{MNIST-bags} is ID dataset. The OOD instances used in this experiment are from KMNIST.}
\label{apx-abmil-kmnist}
\end{center}
\vskip -0.1in
\end{figure*}

\begin{figure*}[htbp]
\vskip 0.1in
\begin{center}
\centerline{\includegraphics[width=\columnwidth]{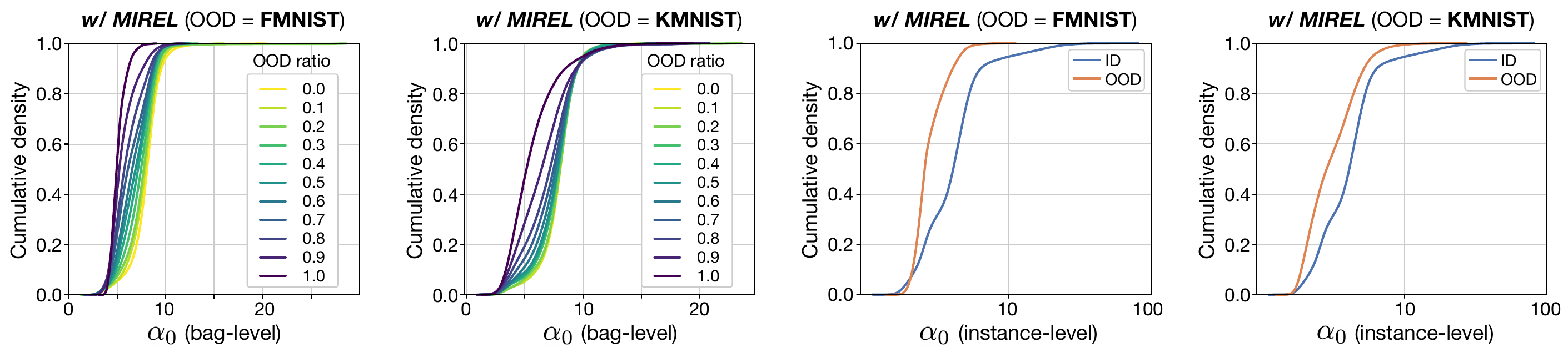}}
\caption{Distribution of bag-level and instance-level $\alpha_0$ output by the ABMIL models with our MIREL. \textbf{MNIST-bags} is ID dataset.}
\label{apx-abmil-alpha0}
\end{center}
\vskip -0.1in
\end{figure*}

\begin{figure*}[htbp]
\vskip 0.1in
\begin{center}
\centerline{\includegraphics[width=\columnwidth]{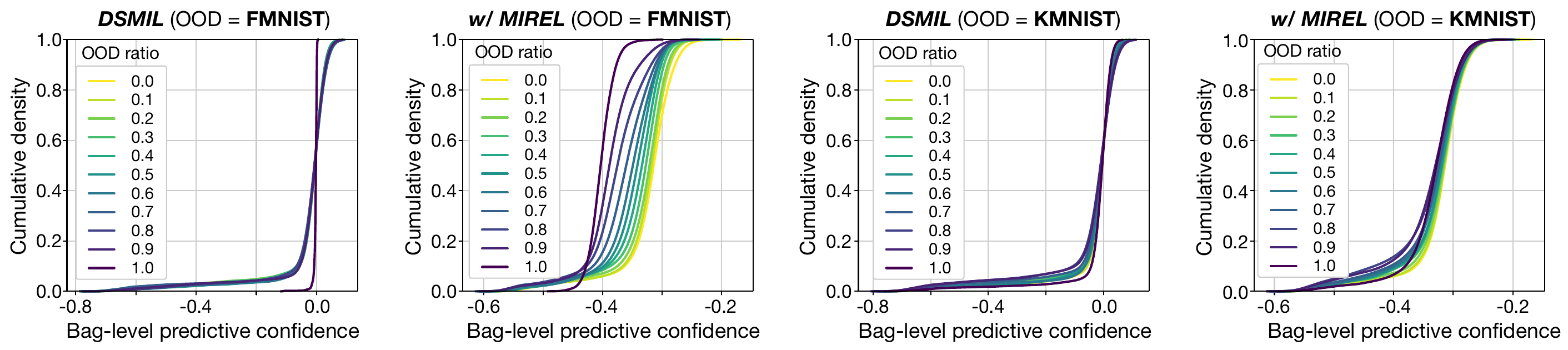}}
\caption{Distribution of bag-level predictive confidence (negative expected entropy). DSMIL is the base MIL network in this experiment. ID dataset is \textbf{MNIST-bags}.}
\label{apx-dsmil-bag-unc}
\end{center}
\vskip -0.1in
\end{figure*}

\begin{figure*}[htbp]
\vskip 0.1in
\begin{center}
\centerline{\includegraphics[width=0.8\columnwidth]{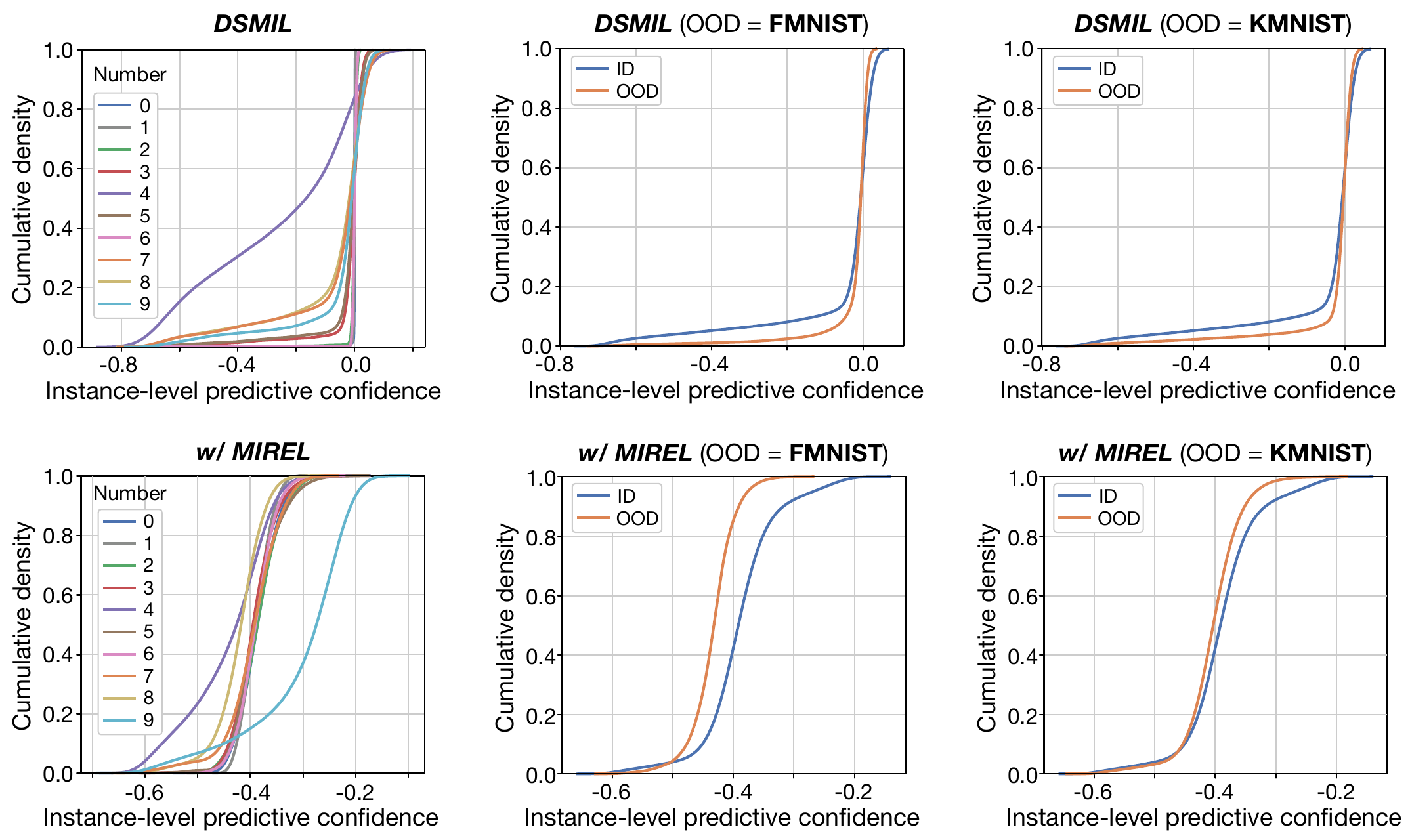}}
\caption{Distribution of instance-level predictive confidence (negative expected entropy). DSMIL is the base MIL network in this experiment. ID dataset is \textbf{MNIST-bags}.}
\label{apx-dsmil-ins-unc}
\end{center}
\vskip -0.1in
\end{figure*}

\textbf{DSMIL}~We show more results of uncertainty analysis, in which DSMIL is taken as the base MIL network. These results contain bag-level UE (Fig. \ref{apx-dsmil-bag-unc}), instance-level UE (Fig. \ref{apx-dsmil-ins-unc}), and $\alpha_0$ estimate (Fig. \ref{apx-dsmil-alpha0}). We summarize our observations from these as follows. (1) When OOD dataset is FMNIST-bags, our MIREL helps DSMIL to provide more accurate uncertainty for the bags with different OOD ratios, while vanilla DSMIL often shows overconfident prediction and cannot response to abnormal bags. When using the bags with different OOD instance ratios, there is no obvious change in bag-level predictive confidence for both DSMIL and its MIREL counterpart. (2) At instance level, DSMIL often mistakenly predicts more confidence for OOD instances than ID ones. After combining with our MIREL, DSMIL tends to assign ID instances with more confidence. (3) There are similar findings from the results of $\alpha_0$ for the DSMIL models with our MIREL. 

\begin{figure*}[htbp]
\vskip 0.1in
\begin{center}
\centerline{\includegraphics[width=\columnwidth]{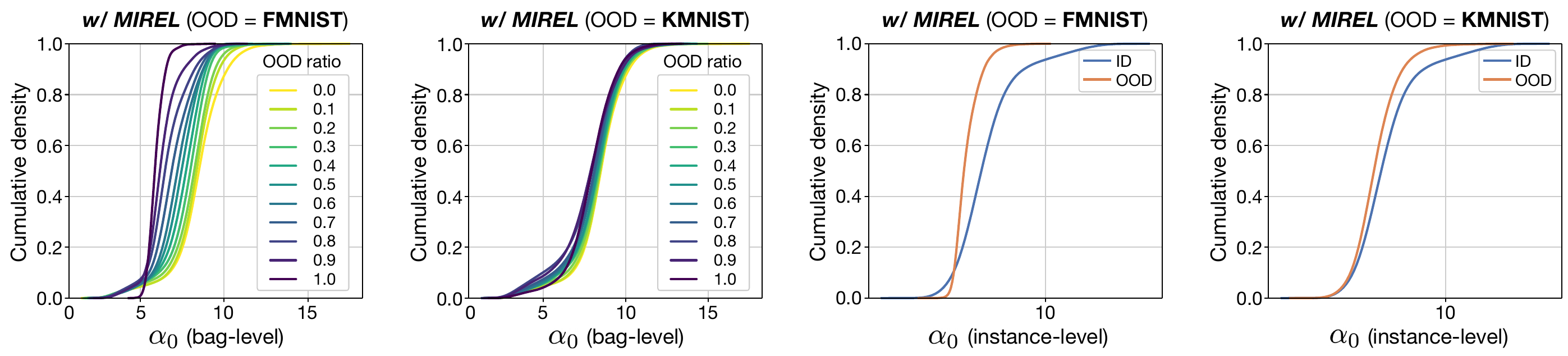}}
\caption{Distribution of bag-level and instance-level $\alpha_0$ output by the DSMIL models with our MIREL. ID dataset is \textbf{MNIST-bags}.}
\label{apx-dsmil-alpha0}
\end{center}
\vskip -0.1in
\end{figure*}

\subsection{Ablation Study}
\label{apx-mnist-abl-study}

\textbf{Optimization strategy for $R(\mathbf{x})$}~We adopt three different optimization strategies for \textit{the instances from positive bags}. They can be represented by three loss functions, $\mathcal{L}_{1}$, $\mathcal{L}_{2}$, and $\mathcal{L}_{\text{ins}}^{+}$, as described in Appendix \ref{apx-der-insloss}.
Test results are shown in Table \ref{apx-table-mnist-ins-loss}. Our main findings are as follows.
(1) Compared to $\mathcal{L}_{\text{ins}}^{+}$, the UE performance obtained by $\mathcal{L}_{1}$ often lags far behind. (2) Compared to $\mathcal{L}_{\text{ins}}^{+}$, $\mathcal{L}_{2}$ leads to the worse overall UE performance at bag level, with a drop of 2.84\%, although it performs slightly better at instance level, with a narrow increase of 0.77\%. These findings empirically demonstrate the effectiveness of our weakly-supervised evidential learning strategy. 

\begin{table*}[ht]
\caption{Ablation study on the loss function used for training $R(\mathbf{x})$. The base MIL network is ABMIL and it is trained on \textbf{MNIST-bags}.}
\label{apx-table-mnist-ins-loss}
\begin{center}
\begin{scriptsize}
\begin{tabular}{l|ccccc|ccccc}
\toprule
\multirow{2}{*}{\textbf{Loss}} & \multicolumn{5}{c|}{\underline{\textbf{\ \ Bag-level\ \ }}}            & \multicolumn{5}{c}{\underline{\textbf{\ \ Instance-level\ \ }}}       \\
 & \textbf{Acc.} & \textbf{Conf.} & \textbf{OOD-F} & \textbf{OOD-K} & \textbf{$\overline{\textit{UE}}$} & \textbf{Acc.} & \textbf{Conf.} & \textbf{OOD-F} & \textbf{OOD-K} & \textbf{$\overline{\textit{UE}}$} \\
\midrule
$\mathcal{L}_{1}$ & 96.46{\tiny\ $\pm$ 0.59} & 83.30{\tiny\ $\pm$ 4.43} & 90.91{\tiny\ $\pm$ 2.46} & 76.89{\tiny\ $\pm$ 2.76} & 83.70 & 85.70{\tiny\ $\pm$ 0.50} & 85.42{\tiny\ $\pm$ 5.48} & 66.63{\tiny\ $\pm$ 4.77} & 63.00{\tiny\ $\pm$ 4.82} & 71.68  \\
$\mathcal{L}_{2}$ & 96.46{\tiny\ $\pm$ 0.30} & 84.40{\tiny\ $\pm$ 2.41} & 91.15{\tiny\ $\pm$ 2.91} & 75.33{\tiny\ $\pm$ 3.14} & 83.63 & 86.29{\tiny\ $\pm$ 0.59} & 90.67{\tiny\ $\pm$ 1.71} & \textbf{81.27}{\tiny\ $\pm$ 3.63} & 66.26{\tiny\ $\pm$ 3.53} & \textbf{79.40}  \\
$\mathcal{L}_{\text{ins}}^{+}$ & 96.48{\tiny\ $\pm$ 0.22} & \textbf{86.63}{\tiny\ $\pm$ 1.32} & \textbf{92.84}{\tiny\ $\pm$ 0.60} & \textbf{79.95}{\tiny\ $\pm$ 4.12} & \textbf{86.47} & 87.71{\tiny\ $\pm$ 0.67} & \textbf{90.73}{\tiny\ $\pm$ 1.31} & 78.13{\tiny\ $\pm$ 2.19} & \textbf{67.02}{\tiny\ $\pm$ 1.94} & 78.63 \\  
\bottomrule
\end{tabular}
\end{scriptsize}
\end{center}
\end{table*}

\red{\textbf{The effect of $\mathcal{L}_{\text{RED}}$ on our MIREL}~As shown in Table \ref{apx-table-mnist-red-loss}, we could find that the involvement of RED loss often obtains performance improvements over its counterpart. This is largely because $\mathcal{L}_{\text{RED}}$ can effectively mitigate zero-evidence regions to improve evidential learning, as highlighted in \citeauthor{pandey2023learn}~\citeyearpar{pandey2023learn}.}

\begin{table*}[htbp]
\caption{\red{Ablation study on the effect of RED loss on our MIREL (\textbf{MNIST-bags}). The base MIL network is ABMIL.}}
\label{apx-table-mnist-red-loss}
\begin{center}
\begin{scriptsize}
\red{
\begin{tabular}{c|ccccc|ccccc}
\toprule
\multirow{2}{*}{$\mathcal{L}_{\text{RED}}$} & \multicolumn{5}{c|}{\underline{\textbf{\ \ Bag-level\ \ }}}            & \multicolumn{5}{c}{\underline{\textbf{\ \ Instance-level\ \ }}}       \\
 & \textbf{Acc.} & \textbf{Conf.} & \textbf{OOD-F} & \textbf{OOD-K} & \textbf{$\overline{\textit{UE}}$} & \textbf{Acc.} & \textbf{Conf.} & \textbf{OOD-F} & \textbf{OOD-K} & \textbf{$\overline{\textit{UE}}$} \\
\midrule
$\times$ & 96.12{\tiny\ $\pm$ 0.37} & 84.89{\tiny\ $\pm$ 3.49} & 88.41{\tiny\ $\pm$ 5.15} & 78.46{\tiny\ $\pm$ 1.76} & 83.92 & 85.40{\tiny\ $\pm$ 1.78} & 96.89{\tiny\ $\pm$ 1.25} & 63.50{\tiny\ $\pm$ 4.50} & 57.72{\tiny\ $\pm$ 1.86} & 72.70   \\
\checkmark & 96.48{\tiny\ $\pm$ 0.22} & 86.63{\tiny\ $\pm$ 1.32} & 92.84{\tiny\ $\pm$ 0.60} & 79.95{\tiny\ $\pm$ 4.12} & \textbf{86.47} & 87.71{\tiny\ $\pm$ 0.67} & 90.73{\tiny\ $\pm$ 1.31} & 78.13{\tiny\ $\pm$ 2.19} & 67.02{\tiny\ $\pm$ 1.94} & \textbf{78.63} \\  
\bottomrule
\end{tabular}
}
\end{scriptsize}
\end{center}
\end{table*}

\begin{table*}[htbp]
\caption{Additional instance-level ablation study on $T(\mathbf{x})$ for related UE methods (\textbf{MNIST-bags}). $\dagger$ These methods directly adopt our $T(\mathbf{x})$ derived from $S(X)$ for instance-level estimation. The other results are copied from Table \ref{table-mnist} for comparisons.}
\label{apx-table-mnist}
\begin{center}
\begin{small}
\begin{tabular}{l|c|ccccc}
\toprule
\multirow{2}{*}{\textbf{Method}} & \multirow{2}{*}{\textbf{Ins.}} & \multicolumn{5}{c}{\underline{\textbf{\ \ Instance-level\ \ }}}       \\
 & & \textbf{Acc.} & \textbf{Conf.} & \textbf{OOD-F} & \textbf{OOD-K} & \textbf{$\overline{\textit{UE}}$} \\
\midrule
Deep Ensemble & $a_k$ & 75.56{\tiny\ $\pm$ 0.32} & 71.89{\tiny\ $\pm$ 0.91} & 70.48{\tiny\ $\pm$ 0.53} & 55.22{\tiny\ $\pm$ 1.16} & 65.87 \\
Deep Ensemble $\dagger$ & $T$ & 85.97{\tiny\ $\pm$ 1.47} & 84.57{\tiny\ $\pm$ 2.62} & 63.75{\tiny\ $\pm$ 2.44} & 54.22{\tiny\ $\pm$ 3.01} & \textbf{67.51} \\ \midrule
MC Dropout  & $a_k$ & 75.61{\tiny\ $\pm$ 0.66} & 68.40{\tiny\ $\pm$ 1.54} & 68.34{\tiny\ $\pm$ 1.06} & 58.61{\tiny\ $\pm$ 1.38} & 65.12 \\
MC Dropout $\dagger$ & $T$ & 88.85{\tiny\ $\pm$ 1.54} & 85.19{\tiny\ $\pm$ 3.44} & 71.61{\tiny\ $\pm$ 3.18} & 57.80{\tiny\ $\pm$ 1.53} & \textbf{71.53} \\ \midrule
$\mathcal{I}$-EDL & $a_k$ & 75.45{\tiny\ $\pm$ 0.13} & 60.72{\tiny\ $\pm$ 1.46} & 63.91{\tiny\ $\pm$ 1.31} & 54.14{\tiny\ $\pm$ 2.19} & 59.59 \\
$\mathcal{I}$-EDL $\dagger$ & $T$ & 85.19{\tiny\ $\pm$ 0.64} & 87.67{\tiny\ $\pm$ 1.11} & 73.52{\tiny\ $\pm$ 5.66} & 56.63{\tiny\ $\pm$ 1.66} & \textbf{72.60} \\ 
\bottomrule
\end{tabular}
\end{small}
\end{center}
\end{table*}

\textbf{Related UE methods}~As shown in Table \ref{apx-table-mnist}, our derived $T(\mathbf{x})$ could often boost the performance of related UE methods in instance-level UE tasks. Moreover, our $T(\mathbf{x})$ surpasses attention-based scoring proxy ($a_k$) in overall UE performance by 1.64\%, 6.41\%, and 13.01\% for Deep Ensemble, MC Dropout, and $\mathcal{I}\text{-EDL}$, respectively. This study further demonstrates the superiority of our $T(\mathbf{x})$ to conventional attention-based instance scoring proxy.  

\subsection{\red{More Experiments with Different Settings}}
\label{apx-mnist-more-settings}

\red{To investigate the effect of different experimental settings on MIREL's performance, we conduct more experiments and show their results in this section.}

\red{\textbf{Adopting gated attention mechanism in ABMIL}~When using ABMIL as the base network for our MIREL, we compare two different attention operators proposed in ABMIL, namely, standard attention mechanism and gated attention mechanism. Their results are presented in Table \ref{apx-table-mnist-gated-attn}. These results show that the standard attention operator is competitive with its gated variant in terms of average UE performance. The standard attention mechanism is our default setting in ABMIL.}

\begin{table*}[htbp]
\caption{\red{Performance of our MIREL when using standard or gated attention mechanism for ABMIL (\textbf{MNIST-bags}).}}
\label{apx-table-mnist-gated-attn}
\begin{center}
\begin{scriptsize}
\red{
\begin{tabular}{c|ccccc|ccccc}
\toprule
\multirow{2}{*}{\textbf{Attention}} & \multicolumn{5}{c|}{\underline{\textbf{\ \ Bag-level\ \ }}}            & \multicolumn{5}{c}{\underline{\textbf{\ \ Instance-level\ \ }}}       \\
 & \textbf{Acc.} & \textbf{Conf.} & \textbf{OOD-F} & \textbf{OOD-K} & \textbf{$\overline{\textit{UE}}$} & \textbf{Acc.} & \textbf{Conf.} & \textbf{OOD-F} & \textbf{OOD-K} & \textbf{$\overline{\textit{UE}}$} \\
\midrule
Gated & 96.52{\tiny\ $\pm$ 0.29} & 87.57{\tiny\ $\pm$ 2.51} & 93.84{\tiny\ $\pm$ 2.37} & 70.67{\tiny\ $\pm$ 4.74} & 84.03 & 87.96{\tiny\ $\pm$ 0.86} & 87.95{\tiny\ $\pm$ 2.46} & 81.15{\tiny\ $\pm$ 2.13} & 70.45{\tiny\ $\pm$ 1.13} & \textbf{79.85}   \\
Standard & 96.48{\tiny\ $\pm$ 0.22} & 86.63{\tiny\ $\pm$ 1.32} & 92.84{\tiny\ $\pm$ 0.60} & 79.95{\tiny\ $\pm$ 4.12} & \textbf{86.47} & 87.71{\tiny\ $\pm$ 0.67} & 90.73{\tiny\ $\pm$ 1.31} & 78.13{\tiny\ $\pm$ 2.19} & 67.02{\tiny\ $\pm$ 1.94} & 78.63 \\  
\bottomrule
\end{tabular}
}
\end{scriptsize}
\end{center}
\end{table*}

\red{\textbf{Comparison with UE methods on DSMIL}~The results of this experiment are shown in Table \ref{apx-table-mnist-cmp-ue-on-dsmil}. From these results, we observe that our MIREL could still perform better than compared methods in terms of overall UE performance, even when changing the base MIL network from ABMIL to DSMIL. This implies that our method is of good adaptability.}

\begin{table*}[htbp]
\caption{\red{Comparison with UE methods when using DSMIL as the base MIL network (\textbf{MNIST-bags}). The baseline of this experiment is vanilla DSMIL without any additional UE techniques. Bayes-MIL is not compared here because it is not compatible with DSMIL.} 
}
\label{apx-table-mnist-cmp-ue-on-dsmil}
\begin{center}
\begin{scriptsize}
\tabcolsep=0.19cm
\red{
\begin{tabular}{l|ccccc|ccccc}
\toprule
\multirow{2}{*}{\textbf{Method}} & \multicolumn{5}{c|}{\underline{\textbf{\ \ Bag-level\ \ }}}            & \multicolumn{5}{c}{\underline{\textbf{\ \ Instance-level\ \ }}}       \\
 & \textbf{Acc.} & \textbf{Conf.} & \textbf{OOD-F} & \textbf{OOD-K} & \textbf{$\overline{\textit{UE}}$} & \textbf{Acc.} & \textbf{Conf.} & \textbf{OOD-F} & \textbf{OOD-K} & \textbf{$\overline{\textit{UE}}$} \\
\midrule
Baseline & \textcolor[RGB]{128,128,128}{96.22{\tiny\ $\pm$ 0.17}} & \textcolor[RGB]{128,128,128}{87.56{\tiny\ $\pm$ 0.95}} & \textcolor[RGB]{128,128,128}{71.13{\tiny\ $\pm$ 5.20}} & \textcolor[RGB]{128,128,128}{60.71{\tiny\ $\pm$ 7.91}} & \textcolor[RGB]{128,128,128}{73.13} &  \textcolor[RGB]{128,128,128}{70.16{\tiny\ $\pm$ 3.56}} & \textcolor[RGB]{128,128,128}{64.64{\tiny\ $\pm$ 0.49}} & \textcolor[RGB]{128,128,128}{59.75{\tiny\ $\pm$ 2.35}} & \textcolor[RGB]{128,128,128}{57.50{\tiny\ $\pm$ 2.55}} & \textcolor[RGB]{128,128,128}{60.63}     \\ \midrule
Deep Ensemble & 96.66{\tiny\ $\pm$ 0.17} & 87.15{\tiny\ $\pm$ 0.99} & 76.06{\tiny\ $\pm$ 2.12} & 64.94{\tiny\ $\pm$ 1.49} & 76.05 & 72.68{\tiny\ $\pm$ 0.84} & 70.18{\tiny\ $\pm$ 0.64} & 70.15{\tiny\ $\pm$ 2.27} & 64.01{\tiny\ $\pm$ 1.65} & 68.11  \\
MC Dropout  & 96.36{\tiny\ $\pm$ 0.43} & 88.13{\tiny\ $\pm$ 0.61} & 77.82{\tiny\ $\pm$ 2.85} & 66.56{\tiny\ $\pm$ 6.59} & 77.50 & 70.27{\tiny\ $\pm$ 3.01} & 64.78{\tiny\ $\pm$ 1.33} & 64.88{\tiny\ $\pm$ 5.38} & 60.78{\tiny\ $\pm$ 4.81} & 63.48    \\
$\mathcal{I}$-EDL & 96.60{\tiny\ $\pm$ 0.44} & 89.53{\tiny\ $\pm$ 2.03} & 79.69{\tiny\ $\pm$ 9.72} & 57.77{\tiny\ $\pm$ 6.01} & 75.67 & 69.04{\tiny\ $\pm$ 2.84} & 63.68{\tiny\ $\pm$ 1.43} & 62.08{\tiny\ $\pm$ 2.35} & 57.93{\tiny\ $\pm$ 2.62} & 61.23  \\ 
\textbf{MIREL}  & 96.50{\tiny\ $\pm$ 0.37} & 87.26{\tiny\ $\pm$ 2.66} & 87.27{\tiny\ $\pm$ 4.27} & 62.03{\tiny\ $\pm$ 7.78} & \textbf{78.85} & 97.19{\tiny\ $\pm$ 0.29} & 73.79{\tiny\ $\pm$ 15.68} & 73.29{\tiny\ $\pm$ 10.85} & 57.58{\tiny\ $\pm$ 3.44}  & \textbf{68.22} \\ 
\bottomrule
\end{tabular}
}
\end{scriptsize}
\end{center}
\end{table*}

\section{Results on CIFAR10-bags}
\label{apx-cifar-res}

\subsection{Main Results}
\label{apx-cifar-main}

The main comparative results on CIFAR10-bags are shown in Table \ref{apx-table-cifar}. \textbf{(1) Classical deep MIL networks:} Our MIREL could often assist them to perform better in UE. Especially for Max, DSMIL, and ABMIL, the improvements in overall UE performance are 10.45\%, 5.11\%, and 9.75\% at bag level, and 2.80\%, 12.06\%, and 20.85\% at instance level, respectively. \textbf{(2) Related UE methods:} With the same base MIL network (ABMIL), our MIREL could often obtain better UE performance than others. Especially at instance level, there is an improvement of 2.40\% over the runner-up method in overall UE performance. Moreover, It is worth mentioning that, our MIREL only requires a single forward pass for UE, different from the compared Deep Ensemble, MC Dropout, and Bayes-MIL involving multiple forward passes. 

\begin{table*}[ht]
\caption{Main results on \textbf{CIFAR10-bags}. OOD-S and OOD-T mean that SVHN and Texture are used for generating OOD bags, respectively. The results colored in gray are from our derived instance estimator $T(\mathbf{x})$. \textbf{$\overline{\textit{UE}}$} is the average metrics on three UE tasks.
}
\label{apx-table-cifar}
\begin{center}
\begin{scriptsize}
\tabcolsep=0.178cm
\begin{tabular}{l|ccccc|ccccc}
\toprule
\multirow{2}{*}{\textbf{Method}} & \multicolumn{5}{c|}{\underline{\textbf{\ \ Bag-level\ \ }}}            & \multicolumn{5}{c}{\underline{\textbf{\ \ Instance-level\ \ }}}       \\
 & \textbf{Acc.} & \textbf{Conf.} & \textbf{OOD-S} & \textbf{OOD-T} & \textbf{$\overline{\textit{UE}}$} & \textbf{Acc.} & \textbf{Conf.} & \textbf{OOD-S} & \textbf{OOD-T} & \textbf{$\overline{\textit{UE}}$} \\
\midrule
\multicolumn{11}{l}{- \textit{Combined with deep MIL networks}} \\ \midrule
Mean & 91.69{\tiny\ $\pm$ 0.65} & 84.79{\tiny\ $\pm$ 1.71} & 87.99{\tiny\ $\pm$ 3.09} & 73.15{\tiny\ $\pm$ 4.32} & \textbf{81.98} & \textcolor[RGB]{128,128,128}{91.92{\tiny\ $\pm$ 0.58}} & \textcolor[RGB]{128,128,128}{56.62{\tiny\ $\pm$ 3.32}} & \textcolor[RGB]{128,128,128}{88.03{\tiny\ $\pm$ 2.46}} & \textcolor[RGB]{128,128,128}{69.10{\tiny\ $\pm$ 3.42}} & \textcolor[RGB]{128,128,128}{\textbf{71.25}}   \\
\red{Mean + }MIREL & 92.07{\tiny\ $\pm$ 0.80} & 83.07{\tiny\ $\pm$ 1.64} & 81.05{\tiny\ $\pm$ 7.82} & 73.38{\tiny\ $\pm$ 6.47} & 79.17 & 92.09{\tiny\ $\pm$ 0.75} & 53.70{\tiny\ $\pm$ 11.02} & 67.06{\tiny\ $\pm$ 10.75} & 63.01{\tiny\ $\pm$ 5.68}  & 61.26 \\ \midrule
Max & 92.17{\tiny\ $\pm$ 0.52} & 84.36{\tiny\ $\pm$ 0.74} & 72.39{\tiny\ $\pm$ 6.95} & 65.87{\tiny\ $\pm$ 3.95} & 74.21 & \textcolor[RGB]{128,128,128}{90.22{\tiny\ $\pm$ 0.68}} & \textcolor[RGB]{128,128,128}{69.13{\tiny\ $\pm$ 1.14}} & \textcolor[RGB]{128,128,128}{76.65{\tiny\ $\pm$ 5.36}} & \textcolor[RGB]{128,128,128}{67.86{\tiny\ $\pm$ 2.70}} & 71.21    \\
\red{Max + }MIREL & 93.21{\tiny\ $\pm$ 0.60} & 84.96{\tiny\ $\pm$ 2.03} & 90.04{\tiny\ $\pm$ 1.40} & 78.99{\tiny\ $\pm$ 3.81} & \textbf{84.66} & 92.81{\tiny\ $\pm$ 0.56} & 68.32{\tiny\ $\pm$ 4.98} & 82.26{\tiny\ $\pm$ 5.71} & 71.44{\tiny\ $\pm$ 2.84} & \textbf{74.01}      \\ \midrule
DSMIL & 92.15{\tiny\ $\pm$ 0.85} & 84.70{\tiny\ $\pm$ 0.55} & 61.81{\tiny\ $\pm$ 4.57} & 62.89{\tiny\ $\pm$ 5.04} & 69.80 & 71.42{\tiny\ $\pm$ 2.24} & 58.22{\tiny\ $\pm$ 0.88} & 59.10{\tiny\ $\pm$ 3.19} & 53.84{\tiny\ $\pm$ 3.42} & 57.05  \\
\red{DSMIL + }MIREL  & 92.69{\tiny\ $\pm$ 0.44} & 83.02{\tiny\ $\pm$ 3.83} & 77.54{\tiny\ $\pm$ 13.29} & 64.16{\tiny\ $\pm$ 7.84} & \textbf{74.91} & 92.71{\tiny\ $\pm$ 0.50} & 60.81{\tiny\ $\pm$ 0.55} & 78.21{\tiny\ $\pm$ 8.07} & 68.30{\tiny\ $\pm$ 4.57} & \textbf{69.11}     \\ \midrule
ABMIL & 91.62{\tiny\ $\pm$ 0.62} & 86.15{\tiny\ $\pm$ 1.15} & 65.56{\tiny\ $\pm$ 10.45} & 63.43{\tiny\ $\pm$ 2.84} & 71.71 & 76.89{\tiny\ $\pm$ 1.07} & 60.33{\tiny\ $\pm$ 0.77} & 51.60{\tiny\ $\pm$ 1.29} & 44.65{\tiny\ $\pm$ 2.43} & 52.19   \\
\red{ABMIL + }MIREL  & 92.47{\tiny\ $\pm$ 0.19} & 78.43{\tiny\ $\pm$ 3.57} & 88.72{\tiny\ $\pm$ 2.78} & 77.22{\tiny\ $\pm$ 6.68} & \textbf{81.46} & 93.18{\tiny\ $\pm$ 0.32} & 66.40{\tiny\ $\pm$ 2.48} & 80.22{\tiny\ $\pm$ 4.93} & 72.49{\tiny\ $\pm$ 5.52} & \textbf{73.04}   \\  \midrule
\multicolumn{11}{l}{- \textit{Compared with related UE methods \red{using ABMIL as the base MIL network}}} \\ \midrule
Deep Ensemble & 93.33{\tiny\ $\pm$ 0.29} & 86.37{\tiny\ $\pm$ 0.91} & 65.00{\tiny\ $\pm$ 6.83} & 64.86{\tiny\ $\pm$ 4.69} & 72.08 & 78.80{\tiny\ $\pm$ 1.20} & 71.97{\tiny\ $\pm$ 0.64} & 54.65{\tiny\ $\pm$ 4.55} & 46.07{\tiny\ $\pm$ 2.11} & 57.57   \\
MC Dropout  & 92.37{\tiny\ $\pm$ 0.42} & 86.26{\tiny\ $\pm$ 1.53} & 62.36{\tiny\ $\pm$ 6.62} & 63.77{\tiny\ $\pm$ 3.31} & 70.79 & 81.91{\tiny\ $\pm$ 1.57} & 75.81{\tiny\ $\pm$ 2.09} & 64.91{\tiny\ $\pm$ 8.65} & 49.61{\tiny\ $\pm$ 3.05} & 63.44   \\
$\mathcal{I}$-EDL & 92.47{\tiny\ $\pm$ 0.19} & 78.43{\tiny\ $\pm$ 3.57} & 88.72{\tiny\ $\pm$ 2.78} & 77.22{\tiny\ $\pm$ 6.68} & 81.46 & 77.82{\tiny\ $\pm$ 0.78} & 51.61{\tiny\ $\pm$ 1.19} & 62.79{\tiny\ $\pm$ 4.12} & 54.48{\tiny\ $\pm$ 2.61} & 56.29       \\
Bayes-MIL & 92.46{\tiny\ $\pm$ 0.77} & 84.52{\tiny\ $\pm$ 1.57} & 83.74{\tiny\ $\pm$ 3.82} & 71.19{\tiny\ $\pm$ 5.28} & 79.82 & 65.14{\tiny\ $\pm$ 32.68} & 72.44{\tiny\ $\pm$ 13.36} & 74.81{\tiny\ $\pm$ 8.78} & 64.67{\tiny\ $\pm$ 6.34} & 70.64  \\  
\textbf{MIREL}  & 92.47{\tiny\ $\pm$ 0.19} & 78.43{\tiny\ $\pm$ 3.57} & 88.72{\tiny\ $\pm$ 2.78} & 77.22{\tiny\ $\pm$ 6.68} & \textbf{81.46} & 93.18{\tiny\ $\pm$ 0.32} & 66.40{\tiny\ $\pm$ 2.48} & 80.22{\tiny\ $\pm$ 4.93} & 72.49{\tiny\ $\pm$ 5.52} & \textbf{73.04}  \\  
\bottomrule
\end{tabular}
\end{scriptsize}
\end{center}
\end{table*}

\begin{figure*}[htbp]
\vskip 0.1in
\begin{center}
\centerline{\includegraphics[width=\columnwidth]{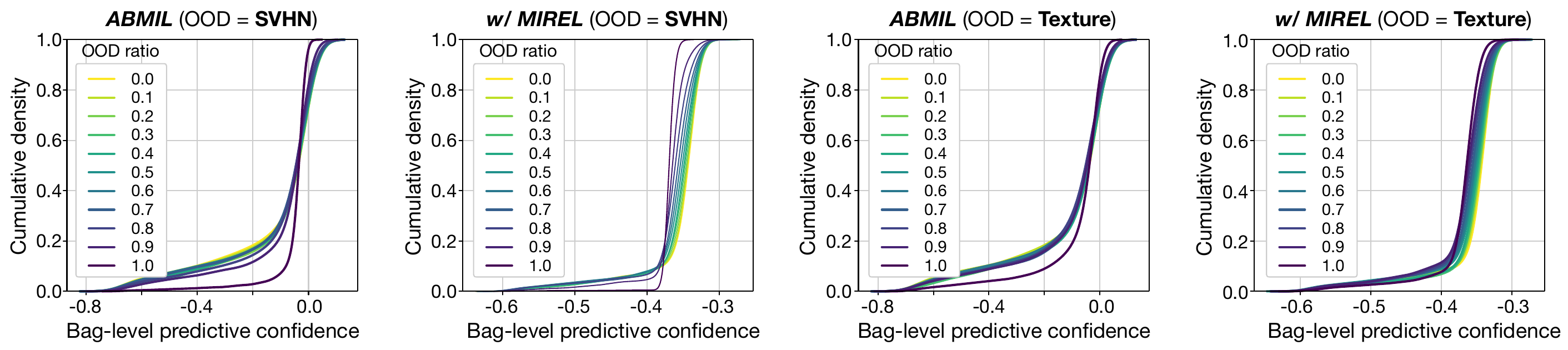}}
\caption{Distribution of bag-level predictive confidence (negative expected entropy). ID dataset is \textbf{CIFAR10-bags}.}
\label{apx-cifar-abmil-bag-unc}
\end{center}
\vskip -0.1in
\end{figure*}

\begin{figure*}[htbp]
\vskip 0.1in
\begin{center}
\centerline{\includegraphics[width=0.80\columnwidth]{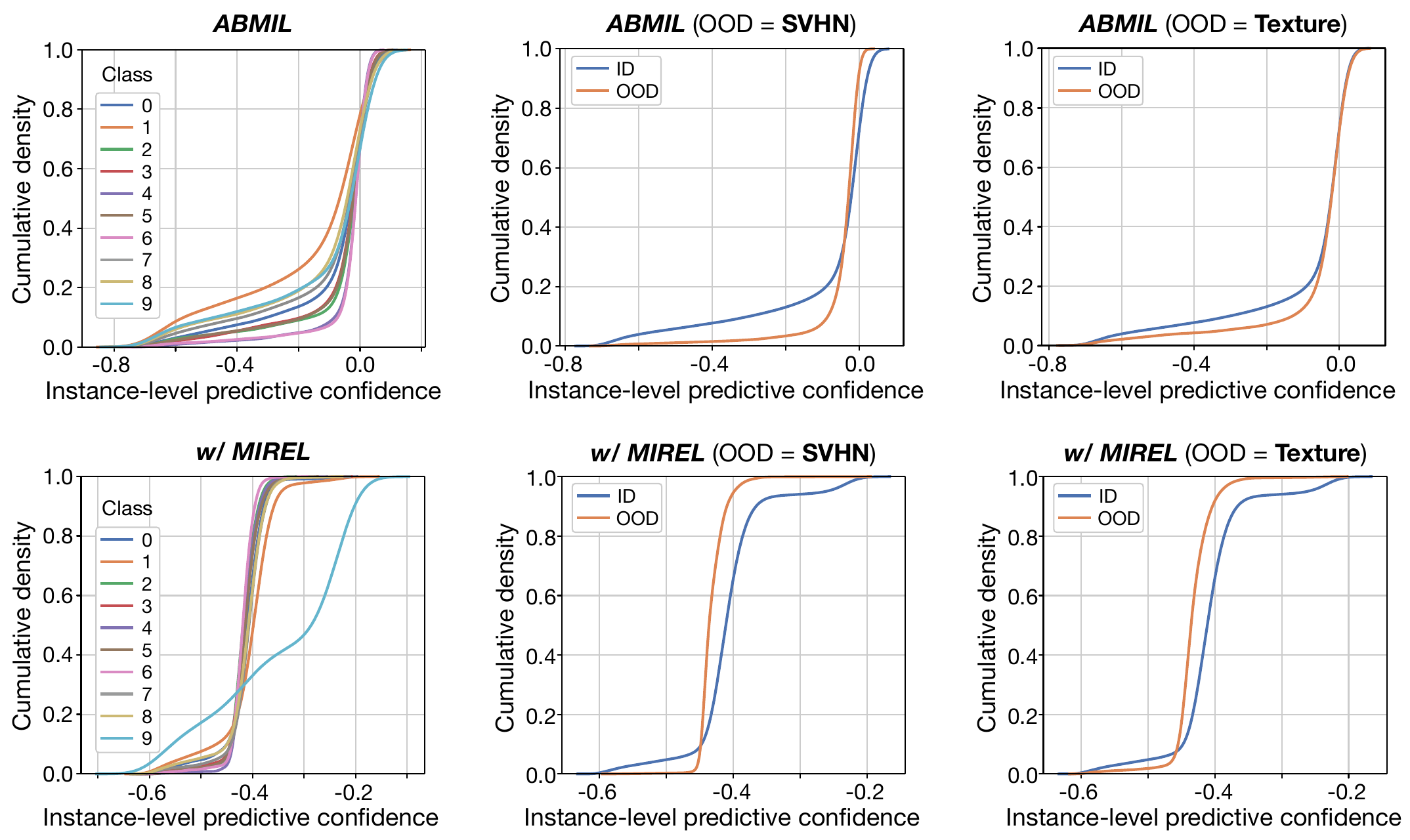}}
\caption{Distribution of instance-level predictive confidence (negative expected entropy). \textbf{CIFAR10-bags} is ID dataset.}
\label{apx-cifar-abmil-ins-unc}
\end{center}
\vskip -0.1in
\end{figure*}

\begin{figure*}[htbp]
\vskip 0.1in
\begin{center}
\centerline{\includegraphics[width=\columnwidth]{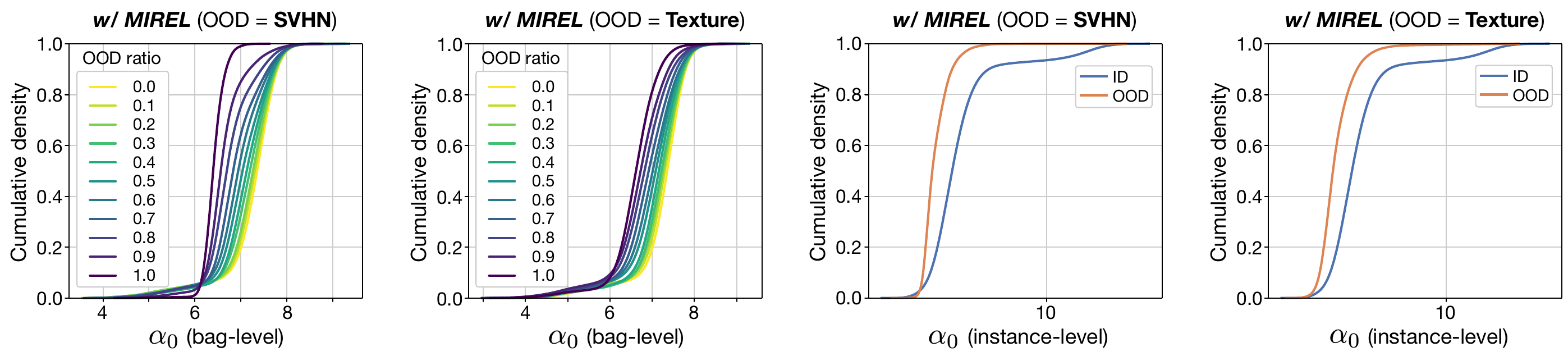}}
\caption{Distribution of bag-level and instance-level $\alpha_0$ output by the ABMIL models with our MIREL. ID dataset is \textbf{CIFAR10-bags}.}
\label{apx-cifar-abmil-alpha0}
\end{center}
\vskip -0.1in
\end{figure*}

\subsection{Uncertainty Analysis}
\label{apx-cifar-unc-analysis}

Using ABMIL as the base MIL network, here we show the results of uncertainty analysis on CIFAR10-bags, including bag-level uncertainty (Fig. \ref{apx-cifar-abmil-bag-unc}), instance-level uncertainty (Fig. \ref{apx-cifar-abmil-ins-unc}), and $\alpha_0$ distribution (Fig. \ref{apx-cifar-abmil-alpha0}). Our main findings are briefly summarized as follows. (1) The ABMIL models with our MIREL performs slightly better than vanilla ABMIL, in the predictive confidence of abnormal bags. (2) Our MIREL improves the UE capability of ABMIL at instance level by estimating less confidence for OOD instances and more confidence for ID ones. (3) The uncertainty measure of $\alpha_0$ seems better in detecting the bags with different OOD instance ratios, than that of negative expected entropy.  

\subsection{Ablation Study}
\label{apx-cifar-abl-study}

\textbf{ABMIL with our MIREL}~The result of this experiment is exhibited in Table \ref{apx-table-cifar-abl}. 
\begin{itemize}
    \item \textbf{Result analysis:} (1) EDL improves the UE capability of vanilla ABMIL models by a large margin (9.75\%) at bag level. (2) adopting our derived $T(\mathbf{x})$ rather than $a_k$ for instance prediction often leads to large improvements in overall UE performance, 16.40\% and 16.67\% for the ABMIL without and with EDL, respectively. (3) our residual instance estimator $R(\mathbf{x})$ shows comparable UE performance with $T(\mathbf{x})$ on CIFAR10-bags. 
    \item \textbf{\red{Explanation for} the same bag-level performance:} Note that our MIREL obtains the same bag-level performance as its counterparts, \textit{i.e.}, the EDL-based ABMIL network without our $R(\mathbf{x})$. It is because we \textbf{only optimize the parameter $\pi$}, instead of optimizing both $\pi$ and $\psi$, in $\mathcal{L}_{\text{MIREL}}$. \red{We choose to do so} as we empirically find that a \textbf{deeper instance encoder}, \textit{e.g.}, the network with more than 4 convolutional layers, often leads to unstable training in the weakly-supervised instance-level estimator. One possible reason is that the \textit{weak supervision signals} used for training the instance-level estimator are more likely to vanish in its gradient back-propagation to the deeper layers of instance encoder. \red{Such behavior is also discussed and highlighted in \citeauthor{li2023task}~\citeyearpar{li2023task}.} We leave its investigation as future work. 
    \red{\item \textbf{Clarification on the setting of instance encoder:} In fact, a deep instance encoder is not a common choice in most MIL applications; instead, a high-dimensional single instance is usually first transformed into a low-dimensional vector and then \textbf{a shallow network}, \textit{e.g.} shallow MLP, is utilized as the \textit{real} instance encoder for MIL. This fact can be seen from many real-world MIL applications \cite{lu2021data,liu10385148,tian2021weakly,sapkota2022bayesian,Rizve_2023_CVPR}. This means that, in most cases, a shallow instance encoder is a universal setting so our $\mathcal{L}_{\text{MIREL}}$ can be leveraged as expected to optimize both $\pi$ and $\psi$ and enhance both instance-level and bag-level UE performance.} 
\end{itemize}

\begin{table*}[ht]
\caption{Ablation study on the ABMIL with our MIREL. \textbf{CIFAR10-bags} is ID dataset.}
\label{apx-table-cifar-abl}
\begin{center}
\begin{scriptsize}
\tabcolsep=0.16cm
\begin{tabular}{ccc|ccccc|ccccc}
\toprule
\multicolumn{2}{c}{\underline{\textbf{\ \ Loss\ \ }}}   & \multirow{2}{*}{\textbf{Ins.}} & \multicolumn{5}{c|}{\underline{\textbf{\ \ Bag-level\ \ }}}            & \multicolumn{5}{c}{\underline{\textbf{\ \ Instance-level\ \ }}}       \\
 $\mathcal{L}_{\mathcal{I}\text{-EDL}}$ & $\mathcal{L}_{\text{MIREL}}$ & & \textbf{Acc.} & \textbf{Conf.} & \textbf{OOD-S} & \textbf{OOD-T} & \textbf{$\overline{\textit{UE}}$} & \textbf{Acc.} & \textbf{Conf.} & \textbf{OOD-S} & \textbf{OOD-T} & \textbf{$\overline{\textit{UE}}$} \\
\midrule
 & & $a_k$ & 91.62{\tiny\ $\pm$ 0.62} & 86.15{\tiny\ $\pm$ 1.15} & 65.56{\tiny\ $\pm$ 10.45} & 63.43{\tiny\ $\pm$ 2.84} & 71.71 &  76.89{\tiny\ $\pm$ 1.07} & 60.33{\tiny\ $\pm$ 0.77} & 51.60{\tiny\ $\pm$ 1.29} & 44.65{\tiny\ $\pm$ 2.43} & 52.19      \\
 &  & $T$ & 91.62{\tiny\ $\pm$ 0.62} & 86.15{\tiny\ $\pm$ 1.15} & 65.56{\tiny\ $\pm$ 10.45} & 63.43{\tiny\ $\pm$ 2.84} & 71.71 &  91.76{\tiny\ $\pm$ 0.40} & 80.54{\tiny\ $\pm$ 2.28} & 67.79{\tiny\ $\pm$ 7.60} & 57.45{\tiny\ $\pm$ 5.24} & 68.59        \\
 \checkmark &  & $a_k$ & 92.47{\tiny\ $\pm$ 0.19} & 78.43{\tiny\ $\pm$ 3.57} & 88.72{\tiny\ $\pm$ 2.78} & 77.22{\tiny\ $\pm$ 6.68} & 81.46 & 77.82{\tiny\ $\pm$ 0.78} & 51.61{\tiny\ $\pm$ 1.19} & 62.79{\tiny\ $\pm$ 4.12} & 54.48{\tiny\ $\pm$ 2.61} & 56.29       \\
 \checkmark &  & $T$  & 92.47{\tiny\ $\pm$ 0.19} & 78.43{\tiny\ $\pm$ 3.57} & 88.72{\tiny\ $\pm$ 2.78} & 77.22{\tiny\ $\pm$ 6.68} & 81.46 &  92.54{\tiny\ $\pm$ 0.40} & 63.83{\tiny\ $\pm$ 3.65} & 82.25{\tiny\ $\pm$ 10.73} & 72.81{\tiny\ $\pm$ 8.04} & 72.96      \\
 \checkmark & \checkmark & $R$  & 92.47{\tiny\ $\pm$ 0.19} & 78.43{\tiny\ $\pm$ 3.57} & 88.72{\tiny\ $\pm$ 2.78} &  77.22{\tiny\ $\pm$ 6.68} & \textbf{81.46} &  93.18{\tiny\ $\pm$ 0.32} & 66.40{\tiny\ $\pm$ 2.48} & 80.22{\tiny\ $\pm$ 4.93} & 72.49{\tiny\ $\pm$ 5.52} & \textbf{73.04}   \\  
\bottomrule
\end{tabular}
\end{scriptsize}
\end{center}
\end{table*}

\textbf{Optimization strategy for $R(\mathbf{x})$}~Similar to that done on MNIST-bags, we test different optimization strategies on CIFAR10-bags. 
Test results are shown in Table \ref{apx-table-cifar-ins-loss}. Note that, bag-level results are dropped, since only $\pi$ is involved in the training of $R(\mathbf{x})$ (as explained above) and different strategies lead to the same bag-level performance. From Table \ref{apx-table-cifar-ins-loss}, we could find that $\mathcal{L}_{\text{ins}}^{+}$ often obtains the best UE performance at instance level, surpassing the second-placed $\mathcal{L}_{2}$ by 1.86\% on average. This could further confirm the superiority of our weakly-supervised evidential learning strategy. 

\begin{table*}[htbp]
\caption{Ablation study on the loss function used for training $R(\mathbf{x})$. The base MIL network is ABMIL and it is trained on \textbf{CIFAR10-bags}.}
\label{apx-table-cifar-ins-loss}
\begin{center}
\begin{small}
\begin{tabular}{l|ccccc}
\toprule
\multirow{2}{*}{\textbf{Loss}} & \multicolumn{5}{c}{\underline{\textbf{\ \ Instance-level\ \ }}}       \\
 & \textbf{Acc.} & \textbf{Conf.} & \textbf{OOD-S} & \textbf{OOD-T} & \textbf{$\overline{\textit{UE}}$} \\
\midrule
$\mathcal{L}_{1}$ & 93.18{\tiny\ $\pm$ 0.32} & 61.26{\tiny\ $\pm$ 2.65} & 75.80{\tiny\ $\pm$ 4.89} & 69.31{\tiny\ $\pm$ 5.69} & 68.79  \\
$\mathcal{L}_{2}$ & 93.19{\tiny\ $\pm$ 0.34} & 64.22{\tiny\ $\pm$ 3.25} & 78.42{\tiny\ $\pm$ 3.53} & 70.91{\tiny\ $\pm$ 5.28} & 71.18  \\
$\mathcal{L}_{\text{ins}}^{+}$ & 93.18{\tiny\ $\pm$ 0.32} & \textbf{66.40}{\tiny\ $\pm$ 2.48} & \textbf{80.22}{\tiny\ $\pm$ 4.93} & \textbf{72.49}{\tiny\ $\pm$ 5.52} & \textbf{73.04}  \\  
\bottomrule
\end{tabular}
\end{small}
\end{center}
\end{table*}

\red{\textbf{The effect of $\mathcal{L}_{\text{RED}}$ on our MIREL}~The results of this experiment are shown in Table \ref{apx-table-cifar-red-loss}. From these results, we observe that on CIFAR10-bags, using $\mathcal{L}_{\text{RED}}$ often leads to worse performances in UE, although it is better in overall bag-level UE performance. Nevertheless, we choose to use $\mathcal{L}_{\text{RED}}$ in our baseline approach by default for simplicity. In other words, the setting of $\mathcal{L}_{\text{RED}}$ is simply shared between all experiments and is not fine-tuned for different datasets, although fine-tuning it could lead to better performances in MIUE.}

\begin{table*}[ht]
\caption{\red{Ablation study on the effect of RED loss on our MIREL (\textbf{CIFAR10-bags}). The base MIL network is ABMIL.}}
\label{apx-table-cifar-red-loss}
\begin{center}
\begin{scriptsize}
\red{
\begin{tabular}{c|ccccc|ccccc}
\toprule
\multirow{2}{*}{$\mathcal{L}_{\text{RED}}$} & \multicolumn{5}{c|}{\underline{\textbf{\ \ Bag-level\ \ }}}            & \multicolumn{5}{c}{\underline{\textbf{\ \ Instance-level\ \ }}}       \\
 & \textbf{Acc.} & \textbf{Conf.} & \textbf{OOD-S} & \textbf{OOD-T} & \textbf{$\overline{\textit{UE}}$} & \textbf{Acc.} & \textbf{Conf.} & \textbf{OOD-S} & \textbf{OOD-T} & \textbf{$\overline{\textit{UE}}$} \\
\midrule
$\times$ & 92.80{\tiny\ $\pm$ 0.41} & 84.23{\tiny\ $\pm$ 2.50} & 69.05{\tiny\ $\pm$ 23.26} & 78.27{\tiny\ $\pm$ 6.84} & 77.18 & 93.19{\tiny\ $\pm$ 0.38} & 73.10{\tiny\ $\pm$ 2.80} & 81.57{\tiny\ $\pm$ 5.80} & 74.56{\tiny\ $\pm$ 3.23} & \textbf{76.41}   \\
\checkmark & 92.47{\tiny\ $\pm$ 0.19} & 78.43{\tiny\ $\pm$ 3.57} & 88.72{\tiny\ $\pm$ 2.78} &  77.22{\tiny\ $\pm$ 6.68} & \textbf{81.46} &  93.18{\tiny\ $\pm$ 0.32} & 66.40{\tiny\ $\pm$ 2.48} & 80.22{\tiny\ $\pm$ 4.93} & 72.49{\tiny\ $\pm$ 5.52} & 73.04  \\  
\bottomrule
\end{tabular}
}
\end{scriptsize}
\end{center}
\end{table*}

\textbf{Related UE methods}~As shown in Table \ref{apx-table-cifar-ins}, there are large improvements in overall UE performance for compared UE methods, when turning to adopt our $T(\mathbf{x})$ derived from $S(X)$ as instance-level estimator. These improvements are 11.22\%, 3.00\%, and 16.67\% for Deep Ensemble, MC Dropout, and $\mathcal{I}\text{-EDL}$, respectively. These again demonstrate our argument, \textit{i.e.}, attention-dependent scoring proxies may not be suitable for instance-level prediction.

\begin{table*}[htbp]
\caption{Additional instance-level ablation study on $T(\mathbf{x})$ for related UE methods (\textbf{CIFAR10-bags}). $\dagger$ These methods directly adopt our $T(\mathbf{x})$ derived from $S(X)$ for instance-level estimation. Other results are copied from Table \ref{apx-table-cifar} for comparisons.}
\label{apx-table-cifar-ins}
\begin{center}
\begin{small}
\begin{tabular}{l|c|ccccc}
\toprule
\multirow{2}{*}{\textbf{Method}} & \multirow{2}{*}{\textbf{Ins.}} & \multicolumn{5}{c}{\underline{\textbf{\ \ Instance-level\ \ }}}       \\
 & & \textbf{Acc.} & \textbf{Conf.} & \textbf{OOD-S} & \textbf{OOD-T} & \textbf{$\overline{\textit{UE}}$} \\
\midrule
Deep Ensemble & $a_k$ & 78.80{\tiny\ $\pm$ 1.20} & 71.97{\tiny\ $\pm$ 0.64} & 54.65{\tiny\ $\pm$ 4.55} & 46.07{\tiny\ $\pm$ 2.11}  & 57.57 \\
Deep Ensemble $\dagger$ & $T$ & 93.28{\tiny\ $\pm$ 0.16} & 85.57{\tiny\ $\pm$ 1.14} & 64.92{\tiny\ $\pm$ 6.61} & 55.88{\tiny\ $\pm$ 4.33} & \textbf{68.79}  \\ \midrule
MC Dropout  & $a_k$ & 81.91{\tiny\ $\pm$ 1.57} & 75.81{\tiny\ $\pm$ 2.09} & 64.91{\tiny\ $\pm$ 8.65} & 49.61{\tiny\ $\pm$ 3.05}  & 63.44 \\
MC Dropout $\dagger$  & $T$ & 92.62{\tiny\ $\pm$ 0.73} & 82.65{\tiny\ $\pm$ 1.00} & 62.64{\tiny\ $\pm$ 5.92} & 54.03{\tiny\ $\pm$ 3.10} & \textbf{66.44}  \\ \midrule
$\mathcal{I}$-EDL & $a_k$ & 77.82{\tiny\ $\pm$ 0.78} & 51.61{\tiny\ $\pm$ 1.19} & 62.79{\tiny\ $\pm$ 4.12} & 54.48{\tiny\ $\pm$ 2.61}  & 56.29 \\
$\mathcal{I}$-EDL $\dagger$ & $T$ & 92.54{\tiny\ $\pm$ 0.40} & 63.83{\tiny\ $\pm$ 3.65} & 82.25{\tiny\ $\pm$ 10.73} & 72.81{\tiny\ $\pm$ 8.04}  & \textbf{72.96}  \\ 
\bottomrule
\end{tabular}
\end{small}
\end{center}
\end{table*}

\subsection{\red{More Experiments with Different Settings}}
\label{apx-cifar-more-settings}

\red{Similar to those experiments shown in Section \ref{apx-mnist-more-settings}, in this section we conduct more experiments with different settings to investigate the effect of these settings on our MIREL scheme.}

\red{\textbf{Gated attention mechanism for ABMIL}~As shown in Table \ref{apx-table-cifar-gated-attn}, there is no significant difference in average UE performance between the two attention mechanisms. We choose the standard attention operator by default for ABMIL network in all experiments, because it is more efficient in computation and is adopted more frequently than its gated version although it sometimes performs slightly worse than its gated version in UE tasks.}

\begin{table*}[ht]
\caption{\red{Performance of our MIREL when using standard or gated attention mechanism for ABMIL (\textbf{CIFAR10-bags}).}}
\label{apx-table-cifar-gated-attn}
\begin{center}
\begin{scriptsize}
\red{
\begin{tabular}{c|ccccc|ccccc}
\toprule
\multirow{2}{*}{\textbf{Attention}} & \multicolumn{5}{c|}{\underline{\textbf{\ \ Bag-level\ \ }}}            & \multicolumn{5}{c}{\underline{\textbf{\ \ Instance-level\ \ }}}       \\
 & \textbf{Acc.} & \textbf{Conf.} & \textbf{OOD-S} & \textbf{OOD-T} & \textbf{$\overline{\textit{UE}}$} & \textbf{Acc.} & \textbf{Conf.} & \textbf{OOD-S} & \textbf{OOD-T} & \textbf{$\overline{\textit{UE}}$} \\
\midrule
Gated & 92.25{\tiny\ $\pm$ 0.64} & 81.76{\tiny\ $\pm$ 1.42} & 88.73{\tiny\ $\pm$ 4.26} & 76.54{\tiny\ $\pm$ 6.56} & \textbf{82.35} & 93.08{\tiny\ $\pm$ 0.17} & 62.96{\tiny\ $\pm$ 3.76} & 85.98{\tiny\ $\pm$ 1.14} & 73.10{\tiny\ $\pm$ 3.23} & \textbf{74.01}   \\
Standard & 92.47{\tiny\ $\pm$ 0.19} & 78.43{\tiny\ $\pm$ 3.57} & 88.72{\tiny\ $\pm$ 2.78} &  77.22{\tiny\ $\pm$ 6.68} & 81.46 &  93.18{\tiny\ $\pm$ 0.32} & 66.40{\tiny\ $\pm$ 2.48} & 80.22{\tiny\ $\pm$ 4.93} & 72.49{\tiny\ $\pm$ 5.52} & 73.04  \\  
\bottomrule
\end{tabular}
}
\end{scriptsize}
\end{center}
\end{table*}

\red{\textbf{Comparison with UE methods on DSMIL}~As shown in Table \ref{apx-table-cifar-cmp-ue-on-dsmil}, our MIREL also could often perform better than other UE methods by a large margin at instance level on DSMIL. This result further suggests the adaptability of our method.}

\begin{table*}[htbp]
\caption{\red{Comparison with UE methods when using DSMIL as the base MIL network (\textbf{CIFAR10-bags}). The baseline of this experiment is \red{vanilla DSMIL without any additional UE techniques}. Bayes-MIL is not compared here because it is not compatible with DSMIL.} 
}
\label{apx-table-cifar-cmp-ue-on-dsmil}
\begin{center}
\begin{scriptsize}
\tabcolsep=0.19cm
\red{
\begin{tabular}{l|ccccc|ccccc}
\toprule
\multirow{2}{*}{\textbf{Method}} & \multicolumn{5}{c|}{\underline{\textbf{\ \ Bag-level\ \ }}}            & \multicolumn{5}{c}{\underline{\textbf{\ \ Instance-level\ \ }}}       \\
 & \textbf{Acc.} & \textbf{Conf.} & \textbf{OOD-S} & \textbf{OOD-T} & \textbf{$\overline{\textit{UE}}$} & \textbf{Acc.} & \textbf{Conf.} & \textbf{OOD-S} & \textbf{OOD-T} & \textbf{$\overline{\textit{UE}}$} \\
\midrule
Baseline & \textcolor[RGB]{128,128,128}{92.15{\tiny\ $\pm$ 0.85}} & \textcolor[RGB]{128,128,128}{84.70{\tiny\ $\pm$ 0.55}} & \textcolor[RGB]{128,128,128}{61.81{\tiny\ $\pm$ 4.57}} & \textcolor[RGB]{128,128,128}{62.89{\tiny\ $\pm$ 5.04}} & \textcolor[RGB]{128,128,128}{69.80} & \textcolor[RGB]{128,128,128}{71.42{\tiny\ $\pm$ 2.24}} & \textcolor[RGB]{128,128,128}{58.22{\tiny\ $\pm$ 0.88}} & \textcolor[RGB]{128,128,128}{59.10{\tiny\ $\pm$ 3.19}} & \textcolor[RGB]{128,128,128}{53.84{\tiny\ $\pm$ 3.42}} & \textcolor[RGB]{128,128,128}{57.05}     \\ \midrule
Deep Ensemble & 93.20{\tiny\ $\pm$ 0.14} & 86.45{\tiny\ $\pm$ 0.74} & 70.50{\tiny\ $\pm$ 4.34} & 63.57{\tiny\ $\pm$ 3.52} & 73.51 & 74.25{\tiny\ $\pm$ 1.30} & 66.49{\tiny\ $\pm$ 1.01} & 63.78{\tiny\ $\pm$ 4.72} & 57.86{\tiny\ $\pm$ 4.81} & 62.71  \\
MC Dropout  & 92.39{\tiny\ $\pm$ 0.59} & 84.81{\tiny\ $\pm$ 1.56} & 72.61{\tiny\ $\pm$ 8.90} & 67.15{\tiny\ $\pm$ 6.69} & 74.86 & 73.82{\tiny\ $\pm$ 1.80} & 63.18{\tiny\ $\pm$ 1.53} & 66.93{\tiny\ $\pm$ 10.46} & 53.46{\tiny\ $\pm$ 6.01} & 61.19    \\
$\mathcal{I}$-EDL & 92.69{\tiny\ $\pm$ 0.44} & 83.02{\tiny\ $\pm$ 3.83} & 77.54{\tiny\ $\pm$ 13.29} & 64.16{\tiny\ $\pm$ 7.84} & 74.91 & 69.07{\tiny\ $\pm$ 8.63} & 57.98{\tiny\ $\pm$ 0.62} & 52.22{\tiny\ $\pm$ 4.02} & 49.66{\tiny\ $\pm$ 4.51} & 53.29  \\ 
\textbf{MIREL}  & 92.69{\tiny\ $\pm$ 0.44} & 83.02{\tiny\ $\pm$ 3.83} & 77.54{\tiny\ $\pm$ 13.29} & 64.16{\tiny\ $\pm$ 7.84} & \textbf{74.91} & 92.71{\tiny\ $\pm$ 0.50} & 60.81{\tiny\ $\pm$ 0.55} & 78.21{\tiny\ $\pm$ 8.07} & 68.30{\tiny\ $\pm$ 4.57} & \textbf{69.11} \\ 
\bottomrule
\end{tabular}
}
\end{scriptsize}
\end{center}
\end{table*}

\section{Additional Results on Histopathology Dataset}
\label{apx-wsi-result}

\subsection{Related UE Methods}
As shown in Table \ref{apx-table-c16-ins}, our $T(\mathbf{x})$ obtains considerable improvements over $a_k$ in overall UE performance, even better than MIREL at instance level. These improvements are 40.08\%, 38.57\%, and 40.91\% for Deep Ensemble, MC Dropout, and $\mathcal{I}\text{-EDL}$, respectively. Such impressive results may result from two main factors:
\begin{itemize}
    \item $a_k$ is obtained by $\mathop{\mathrm{softmax}}$, so its values for negative instances would be extremely small when instance number is very large (recall that there are 11,753 instances in each CAMELYON16 bag on average), thus more likely to yield overconfident estimations. 
    \item our $T(\mathbf{x})$ is directly deduced from $S(X)$, with the ability of distinguishing between negative and positive instances when $S(X)$ is good enough at classifying bags, as stated in Section \ref{sec-ins-est}. 
\end{itemize}

\begin{table*}[ht]
\caption{Additional instance-level results of related UE methods on \textbf{CAMELYON16}. $\dagger$ These methods directly adopt our $T(\mathbf{x})$ derived from $S(X)$ for instance-level estimation. The other results are copied from Table \ref{table-c16}. Particularly, the AUROC of ID instance classification, along with Acc., is reported due to the dominant negative patches ($\sim$ 97.7\%) in the slides of CAMELYON16.}
\label{apx-table-c16-ins}
\begin{center}
\begin{small}
\begin{tabular}{l|c|ccccc}
\toprule
\multirow{2}{*}{\textbf{Method}} & \multirow{2}{*}{\textbf{Ins.}} & \multicolumn{5}{c}{\underline{\textbf{\ \ Instance-level\ \ }}}       \\
 &  & \textbf{Acc.} & \textbf{AUROC} & \textbf{Conf.} & \textbf{OOD-PRAD} & \textbf{$\overline{\textit{UE}}$} \\
\midrule
Deep Ensemble & $a_k$ & 96.08{\tiny\ $\pm$ 0.02} & 50.34{\tiny\ $\pm$ 2.53} & 49.62{\tiny\ $\pm$ 2.53} & 28.16{\tiny\ $\pm$ 1.07} & 38.89 \\
Deep Ensemble $\dagger$ & $T$ & 89.38{\tiny\ $\pm$ 5.23} & 95.09{\tiny\ $\pm$ 0.17} & 86.36{\tiny\ $\pm$ 3.31} & 71.58{\tiny\ $\pm$ 7.25} & \textbf{78.97} \\ \midrule
MC Dropout  & $a_k$ & 96.05{\tiny\ $\pm$ 0.00} & 56.25{\tiny\ $\pm$ 2.16} & 56.35{\tiny\ $\pm$ 2.20} & 33.93{\tiny\ $\pm$ 2.05} & 45.14 \\
MC Dropout $\dagger$  & $T$ & 94.06{\tiny\ $\pm$ 2.08} & 94.07{\tiny\ $\pm$ 0.37} & 88.83{\tiny\ $\pm$ 0.85} & 78.60{\tiny\ $\pm$ 3.05} & \textbf{83.71}  \\ \midrule
$\mathcal{I}$-EDL & $a_k$ & 96.05{\tiny\ $\pm$ 0.01} & 45.39{\tiny\ $\pm$ 4.78} & 45.41{\tiny\ $\pm$ 5.33} & 32.06{\tiny\ $\pm$ 1.49} & 38.74  \\
$\mathcal{I}$-EDL $\dagger$ & $T$ & 87.53{\tiny\ $\pm$ 5.61} & 95.11{\tiny\ $\pm$ 0.26} & 87.28{\tiny\ $\pm$ 4.12} & 72.02{\tiny\ $\pm$ 7.87} & \textbf{79.65}  \\ 
\bottomrule
\end{tabular}
\end{small}
\end{center}
\end{table*}

\subsection{Distribution Shift Detection}
\label{apx-wsi-result-ds}
The numerical results of Fig. \ref{c16-dist-shift} are presented in Table \ref{apx-c16-shift}. Similar to that provided in Section \ref{sec-c16-result}, our result analysis of Table \ref{apx-c16-shift} is as follows. (1) \textbf{Bag-level}. The AUROC performance on lighter DS is often less than 0.52, namely, lighter DS is hard to detect for all presented UE methods. Our MIREL can detect light DS with an AUROC of 0.59 and strong DS with an AUROC of 0.75, obtaining the best or the second best performance. Moreover, its overall UE performance is the best (0.62). 
(2) \textbf{instance-level}. All compared methods not specially for MIL, consistently obtain an AUROC less than 0.5, indicating meaningless detection results on three shift datasets. On strong DS, our MIREL obtains an AUROC of 0.62, exceeding Bayes-MIL by 5.09\%. This experiment could further verify the superiority of our MIREL scheme in MIUE. 

\begin{table*}[ht]
\caption{Comparison with related UE methods on histopathology dataset (\textbf{CAMELYON16}). Three shifted versions of CAMELYON16 test set are used for detection. \textbf{DS} means Distribution Shift, and `\textbf{lighter}', `\textbf{light}', and `\textbf{strong}' indicate three degrees of shift. The baseline of this experiment is \red{vanilla ABMIL without any additional UE techniques}.  \textbf{$\overline{\textit{UE}}$} is the average metrics on three DS detection tasks.}
\label{apx-c16-shift}
\begin{center}
\begin{small}
\begin{tabular}{l|cccc|cccc}
\toprule
\multirow{2}{*}{\textbf{Method}} & \multicolumn{4}{c|}{\underline{\textbf{\ \ Bag-level\ \ }}}            & \multicolumn{4}{c}{\underline{\textbf{\ \ Instance-level\ \ }}}       \\
 & \textbf{DS-lighter} & \textbf{DS-light} & \textbf{DS-strong} & \textbf{$\overline{\textit{UE}}$} & \textbf{DS-lighter} & \textbf{DS-light} & \textbf{DS-strong} & \textbf{$\overline{\textit{UE}}$} \\
\midrule
Baseline & \textcolor[RGB]{128,128,128}{50.86{\tiny\ $\pm$ 0.38}} & \textcolor[RGB]{128,128,128}{51.87{\tiny\ $\pm$ 1.36}} & \textcolor[RGB]{128,128,128}{54.48{\tiny\ $\pm$ 6.31}} & \textcolor[RGB]{128,128,128}{52.40} & \textcolor[RGB]{128,128,128}{49.17{\tiny\ $\pm$ 0.35}} & \textcolor[RGB]{128,128,128}{46.87{\tiny\ $\pm$ 0.53}} & \textcolor[RGB]{128,128,128}{42.88{\tiny\ $\pm$ 1.87}} & \textcolor[RGB]{128,128,128}{46.31} \\ \midrule
Deep Ensemble & 50.70{\tiny\ $\pm$ 0.40} & 52.39{\tiny\ $\pm$ 1.23} & 53.62{\tiny\ $\pm$ 9.06} & 52.24 & 49.09{\tiny\ $\pm$ 0.18} & 45.86{\tiny\ $\pm$ 0.87} & 40.73{\tiny\ $\pm$ 0.94} & 45.23   \\
MC Dropout  & 50.49{\tiny\ $\pm$ 0.49} & 52.54{\tiny\ $\pm$ 1.42} & 52.70{\tiny\ $\pm$ 5.35} & 51.91 & 49.68{\tiny\ $\pm$ 0.61} & 47.80{\tiny\ $\pm$ 1.23} & 43.89{\tiny\ $\pm$ 2.07} & 47.12   \\
$\mathcal{I}$-EDL & 51.50{\tiny\ $\pm$ 0.34} & 57.46{\tiny\ $\pm$ 2.48} & 72.74{\tiny\ $\pm$ 3.58} & 60.57 & 49.11{\tiny\ $\pm$ 0.33} & 45.84{\tiny\ $\pm$ 1.51} & 38.89{\tiny\ $\pm$ 2.84} & 44.61    \\  
Bayes-MIL & 50.19{\tiny\ $\pm$ 0.46} & 52.20{\tiny\ $\pm$ 1.45} & \textbf{77.42}{\tiny\ $\pm$ 6.32} & 59.94 & \textbf{50.29}{\tiny\ $\pm$ 0.21} & 50.36{\tiny\ $\pm$ 1.19} & 56.95{\tiny\ $\pm$ 3.92} & 52.53   \\
\textbf{MIREL}  & \textbf{51.98}{\tiny\ $\pm$ 0.89} & \textbf{58.97}{\tiny\ $\pm$ 2.14} & 74.84{\tiny\ $\pm$ 1.71} & \textbf{61.93} & 50.11{\tiny\ $\pm$ 0.18} & \textbf{50.72}{\tiny\ $\pm$ 0.75} & \textbf{62.04}{\tiny\ $\pm$ 1.19} & \textbf{54.29}   \\
\bottomrule
\end{tabular}
\end{small}
\end{center}
\end{table*}

\section{Synthetic MIUE Experiment}
\label{apx-syn-exp-vis}

To understand the UE behavior of our \textit{weakly-supervised} instance estimator, we synthesize a simple bag dataset with 2-dimensional instances. The surface of predictive probability and predictive uncertainty are visualized in a 2D plane for intuitive interpretation. 

\subsection{MIL Dataset}

\textbf{2D instance generation}~We generate 2D instances using three isotropic Gaussian with $\sigma^2=0.1$. The centroid of three Gaussian are located in (0, 1.5), (--1.5, --0.5), and (1.5, --0.5). Each Gaussian contains 1,000 points (instances). 
The Gaussian with a centroid of (0, 1.5) is the class of interest (positive), and the remaining two are negative. Note that, actually, instance labels are \textit{unknown} in training. 

\textbf{Bag generation}~Following the process described in Appendix \ref{apx-gen-bag}, we synthesize bags using the 2D instances generated above. Finally, there are 2,000 bags for training and 500 bags for validation and early stopping. Only bag-level labels are utilized for training MIL networks. 

\subsection{Implementation Details}

ABMIL is adopted as the base MIL network in this experiment.
Its instance encoder is implemented by an MLP with two layers. In network training, learning rate is set to $5\times 10^{-5}$ and $\lambda_1=0.4$. The patience step for learning rate decay and early stopping are 5 and 10 epochs, respectively. Other settings are the same as those given in Appendix \ref{apx-net-imp-training}.

\subsection{Result Visualization}

Similar to the settings of ablation study, there are three models used for analysis and comparison, as explained in Table \ref{table-baselines}. 
Their results are visualized in Fig. \ref{apx-syn-abmil-alpha0}. Our main observations are as follows:
\begin{itemize}
    \item For the $a_k$ of ABMIL, it shows high confidence in the region near and below negative instances, but low in the region near or above positive ones. It is mainly caused by the $\mathop{\mathrm{softmax}}$ operator in attention score calculation. Generally, $\mathop{\mathrm{softmax}}$ would lead to small $a_k$ for positive instances when multiple positive instances are contained in a bag. As a result, there would be relatively large entropy and high uncertainty for positive instances, compared to negative ones. 
    \item For the $T(\mathbf{x})$ of ABMIL, it seems better than $a_k$ in instance classification. However, 
    it only predicts high uncertainty near the boundary between positive and negative instances, showing overconfidence in the region far from ID instances. This behavior is very similar to that of standard classification models, possibly caused by the ignorance of epistemic uncertainty or distributional uncertainty in predictive modeling. 
    \item For the $T(\mathbf{x})$ of EDL-based ABMIL, it captures the uncertainty in some regions far from ID instances, owing to its Dirichlet-based predictive uncertainty modeling. For Dirichlet-based models, the uncertainty caused by the distributional mismatch between training and test is specially considered and incorporated into model prediction \cite{malinin2018predictive,ulmer2023prior}. 
    \item For the $R(\mathbf{x})$ of MIREL-based ABMIL, it further improve the quality of predictive uncertainty. Especially in the region near positive instances, $R(\mathbf{x})$ often predicts less uncertainty than the $T(\mathbf{x})$ of EDL-based ABMIL. This improvement is largely due to our residual evidential learning scheme. As stated in Section \ref{ins-edl}, our $R(\mathbf{x})$ is specially proposed to learn instance-specific residuals and is encouraged to compensate for the initial biased $T(\mathbf{x})$. 
\end{itemize}

\textbf{\ding{45}~Discussion}~This synthetic MIL experiment could assist us in understanding the UE behavior of different weakly-supervised instance estimators. At the same time, it could be found that there is still room for further improvements. For example, $R(\mathbf{x})$ cannot obtain desirable UE results in some regions far from negative instances. This could be one of the main challenges posed by weak supervision. We leave its solution as future work.

\begin{figure*}[ht]
\vskip 0.2in
\begin{center}
\centerline{\includegraphics[width=0.9\columnwidth]{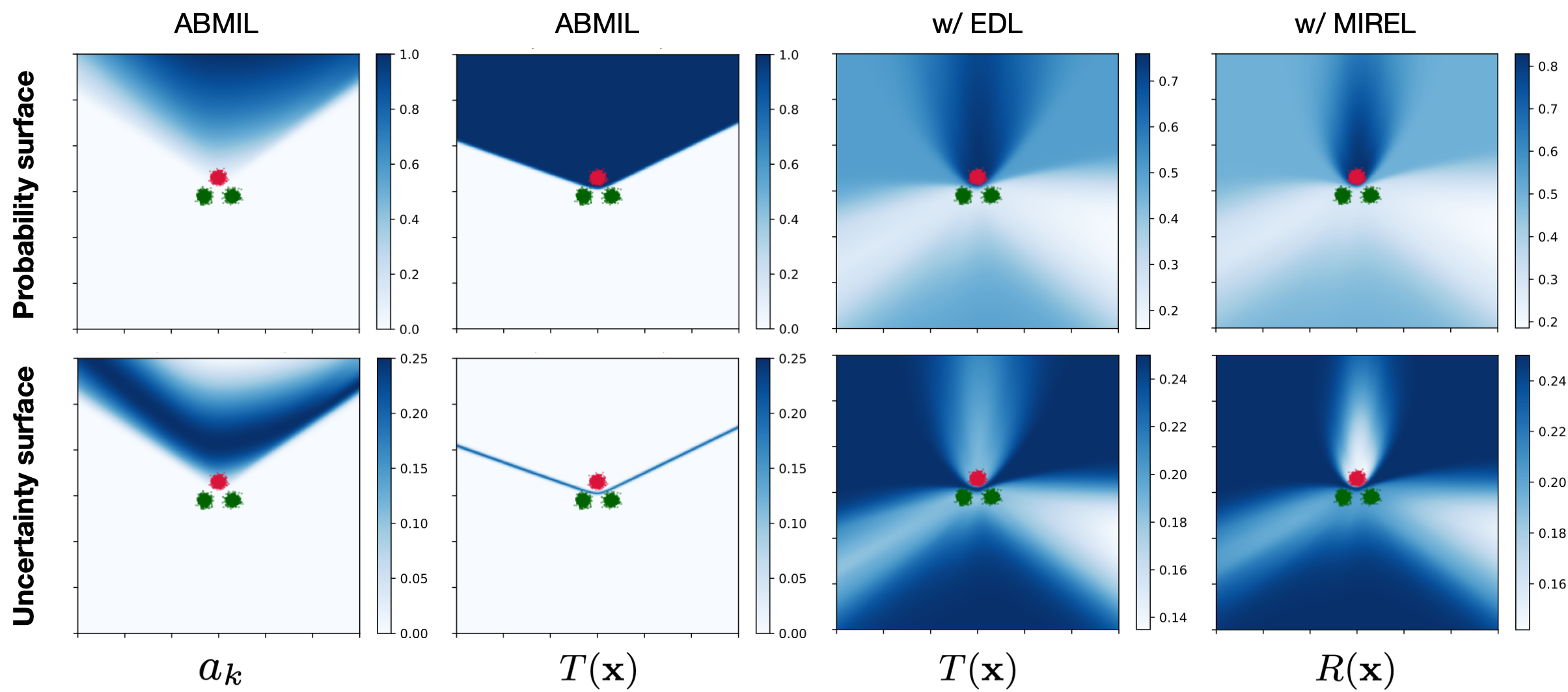}}
\caption{Visualization of the instance prediction given by different weakly-supervised estimators. A synthetic MIL dataset with 2D instances is used in this experiment. The points colored in red and green are positive and negative instance, respectively. Note that, unlike standard fully-supervised settings, there are no complete instance labels to use for training.}
\label{apx-syn-abmil-alpha0}
\end{center}
\vskip -0.2in
\end{figure*}

\end{document}